\documentclass{article}

\usepackage[preprint,nonatbib]{neurips_2026}

\usepackage{subcaption}
\usepackage{amsmath}
\usepackage{amssymb}
\usepackage{amsthm}
\usepackage{booktabs}
\usepackage{tikz}
\usetikzlibrary{arrows.meta} 
\usepackage[many]{tcolorbox}
\tcbuselibrary{theorems,skins,breakable}

\newtcbtheorem{Summary}{\bfseries Summary}{
  enhanced,
  breakable,
  sharp corners,
  boxrule=0.4pt,
  colback=gray!2,
  colframe=black!15,
  coltitle=black,
  fonttitle=\bfseries,
  fontupper=\small,
  boxsep=1.2mm,
  left=1.5mm,
  right=1.5mm,
  top=1.2mm,
  bottom=1.2mm,
  drop fuzzy shadow=black!12,
  attach boxed title to top left={
    xshift=2mm,
    yshift=-0.5mm
  },
  boxed title style={
    enhanced,
    sharp corners,
    colback=black!6,
    colframe=black!10,
    boxrule=0pt,
    boxsep=0.8mm,
    left=1mm,
    right=1mm,
    top=0.4mm,
    bottom=0.4mm,
  },
  before skip=8pt,
  after skip=8pt,
}{
  summary
}

\newtheoremstyle{postulatestyle} 
  {3pt} 
  {3pt} 
  {\itshape} 
  {} 
  {\bfseries} 
  {.} 
  {.5em} 
  {} 

\theoremstyle{postulatestyle}
\newtheorem{postulate}{Postulate}
\usepackage{amsthm}

\newtheorem{remark}{Remark}

\newtheorem{theorem}{Theorem}
\newtheorem*{theorem*}{Theorem}

\theoremstyle{definition}
\newtheorem{defn}[theorem]{Definition} 

\newtheoremstyle{lemmastyle} 
  {3pt} 
  {3pt} 
  {\itshape} 
  {} 
  {\bfseries} 
  {.} 
  {.5em} 
  {} 

\theoremstyle{lemmastyle}
\newtheorem{lemma}{Lemma}

\usepackage{graphicx}                           
\usepackage{rotating}                           
\usepackage{pdflscape}                          
\usepackage{verbatim}                           
\usepackage{url}                                
\usepackage{indentfirst}                        
\usepackage{float}                              
\usepackage{breakcites}
\usepackage{amsmath}                            
\usepackage{changepage}                         
\usepackage{cancel}
\usepackage{listings}
\usepackage{wrapfig}
\usepackage{enumitem}                           
\usepackage{array}                              
\usepackage{makecell}                           
\usepackage{amssymb}                            
\usepackage[table]{xcolor}
\usepackage{mathtools}
\usepackage{setspace}
\usepackage{multirow}
\usepackage{multicol}
\usepackage{hhline}
\usepackage{dirtytalk}                          
\usepackage{algpseudocode}                      
\usepackage{algorithm}                          
\usepackage{diagbox}                            
\usepackage{bm}
\usepackage{pifont}

\usepackage{amsthm}

\newtheorem{definition}[theorem]{Definition}

\usepackage[utf8]{inputenc} 
\usepackage[T1]{fontenc}    
\usepackage{url}            
\usepackage{booktabs}       
\usepackage{amsfonts}       
\usepackage{nicefrac}       
\usepackage{microtype}      
\usepackage{xcolor}         

\usepackage[colorlinks=true, linkcolor=blue, citecolor=blue, urlcolor=blue]{hyperref}
\title{Axiomatizing Neural Networks via Pursuit of Subspaces}

\author{
  Mehmet Yamaç$^{1}$ \And
  Mert Duman$^{1}$ \And
  Ugur Akpinar$^{1}$ \And
  Felix Rojas Casadiego$^{1}$ \And
  Serkan Kiranyaz$^{2}$ \And
  Marcel van Gerven$^{3}$ \And
  Moncef Gabbouj$^{1}$ \\
  \\
  $^{1}$Tampere University, Faculty of ITC, Finland \\
  $^{2}$Department of Electrical Engineering, Qatar University, Qatar \\
  $^{3}$Donders Institute, Radboud University, The Netherlands
}

\begin{document}

\maketitle

\begin{abstract}
 While deep neural networks have achieved remarkable success across a wide range of domains, their underlying mechanisms remain poorly understood, and they are often regarded as black boxes. This gap between empirical performance and theoretical understanding poses a challenge analogous to the pre-axiomatic stage of classical geometry. In this work, we introduce the Pursuit of Subspaces (PoS) hypothesis, an axiomatic framework that formulates neural network behavior through a set of geometric postulates. These axioms, together with their derived consequences, provide a unified perspective on representation, computation, and generalization in both shallow and deep architectures. We show that this framework yields geometric explanations for fundamental questions in deep learning, including representation structure, architectural mechanisms, and generalization behavior, offering a principled step toward a coherent theoretical foundation.
\end{abstract}

\section{Introduction}

While deep neural networks have demonstrated remarkable success across a broad spectrum of applications from computer vision and natural language processing to scientific discovery they nonetheless remain fundamentally opaque. Despite their empirical effectiveness, these models are often regarded as "black boxes" because our understanding of the internal principles that govern their learning dynamics and decision‑making processes is still limited. This opacity raises the following critical questions at the intersection of theory and practice:
(i) Why are neural networks so successful~\cite{sejnowski2020unreasonable}? More precisely, what foundational mathematical or statistical properties underlie their superior performance compared to traditional machine‑learning methods? Can we derive rigorous theoretical guarantees that explain their expressive power, optimization behavior, and robustness?
(ii) How do neural networks internally implement computational mechanisms that lead to specific outputs ~\cite{spline1, spline2,prince2023understanding}? Understanding the representational transformations that emerge across layers how abstract concepts form, evolve, and interact remains an active area of research, involving interpretability, mechanistic analysis, and the study of implicit modularity within learned features.
(iii) What principles enable neural networks to generalize effectively to unseen data, even when they possess far more parameters than training samples \cite{zhang2016understanding, belkin2019reconciling}? This surprising ability challenges classical frameworks such as VC dimension and traditional bias–variance trade‑offs, suggesting that new theoretical paradigms are needed to explain the implicit regularization induced by optimization algorithms and data structure.
(iv) When and why do neural networks produce erroneous, inconsistent, or fabricated outputs often referred to as “hallucinations” and how can such failures be detected, mitigated, or prevented? Understanding the conditions that lead to these deviations, including distribution shifts, overconfidence, and miscalibrated uncertainty, is crucial for deploying neural networks in safety‑critical and high‑stakes environments.
Collectively, these questions highlight the gap between the empirical capabilities of deep neural networks and our theoretical understanding of them. Bridging this gap is essential not only for advancing the scientific foundations of artificial intelligence but also for ensuring the reliability, transparency, and responsible use of these increasingly pervasive systems.

Several approaches have been developed to advance deep learning theory. 
\textbf{Spline theory}~\cite{spline1, spline2}
 interprets ReLU networks as continuous piecewise-linear (CPWL) systems, offering so-called \textbf{``global explanations''}. While mathematically elegant, it is limited to ReLU activations and does not generalize to smooth or non-standard operators, with only limited extensions~\cite{goujon2024towards}.  
The \textbf{Folding Hypothesis}~\cite{folding2, hauser2017principles} proposes that layers progressively fold affine regions, but this remains local and experimentally driven.  
The \textbf{Manifold Hypothesis}~\cite{cohen2020separability} suggests that high-dimensional data reside on low-dimensional manifolds, providing representational insight without offering an axiomatic framework.  
\textbf{Geometric Deep Learning (GDL)}~\cite{bronstein2021geometric, cohen2016group, kondor2018generalization, cohen2019gauge}
 exploits symmetries to design equivariant architectures, but its focus is generalization rather than interpretability. {Together, these approaches advance interpretability and architectural design but remain fragmented, lacking an axiomatic foundation that unifies interpretability, geometry, and cognition.

In this work, we introduce the \textbf{Pursuit of Subspaces (PoS)} framework, motivated by our aim to establish an axiomatic language for neural networks. PoS introduces four geometric axioms that describe how deep networks learn compact data representations and thus enable mathematically grounded explanations of generalization, hallucination control, and stability. PoS provides rigorous 
explanations for existing neural architectures by generalizing \textbf{Sparse Representation (SR)} into \textbf{differential 
geometry}, extending single-layer CPWL models to hierarchical curved-space representations observed in 
modern DNNs. While both GDL and PoS adopt differential-geometric perspectives, GDL originated from graph 
theory with a focus on symmetry and efficiency, whereas \textbf{PoS extends SR theory into a complete geometric foundation} for both \textbf{demistfying existing neural networks} and designing novel neural architectures that are \emph{explainable by design}.

The logical flow of the paper is as follows: We develop a geometric theory of deep learning in which representations are explicitly modeled as unions of low-dimensional smooth submanifolds, each equipped with structured projection operators onto these submanifolds. Within this framework, learning naturally induces a transversal decomposition of the corresponding tangent spaces into shared and residual components, leading to a first key insight: \textit{orthogonality and disentanglement are not imposed design choices, but emerge as necessary conditions for stable and unique projection onto learned manifolds.}

This geometric structure provides a unified interpretation of standard architectural mechanisms. 
\textit{Nonlinear activations such as ReLU act as angular selectors on local tangent subspaces}, enabling the transition from a single subspace to a union of subspaces by suppressing cross-subspace components. 
\textit{Residual connections implement annihilation of normal-space components}, refining representations by removing components orthogonal to the learned manifold, and when composed across multiple modules, can yield recursive refinement across layers. 
\textit{Structured perturbations such as masking and noise injection activate latent residual directions}, allowing the network to identify and suppress them, thereby promoting compact and disentangled representations. 
More generally, \textit{attention mechanisms can be interpreted as collaborative residual-removal procedures}, aligning multiple representations through shared geometric structure. 
Together, these mechanisms realize a common principle: selective suppression of residual directions to enforce compact representations.

Building on this foundation, we show that learned transformations generate entire families of manifolds through symmetry, yielding a group-action view in which representations are organized as orbits of a canonical manifold. This leads to a second key insight: \textit{the union-of-submanifolds model is a special case of a symmetry-generated structure, and what appears as manifold reorientation from an external viewpoint is equivalent to coordinate expansion and orthogonalization in latent space}. 

This perspective culminates in our main theoretical result: \textit{deep architectures reduce the sample complexity of learning transformation families from multiplicative to additive scaling by hierarchically composing transformations whose learned action generalizes across components via equivariance}. Finally, we validate these predictions experimentally across multiple regimes: through zero-shot anomaly detection enabled by compact manifold representations, through PoS Former as an architectural realization of structured projection and residual suppression, and through image restoration where learned mappings approximate isometric transformations between corrupted and clean data manifolds. These results collectively demonstrate how geometric principles translate into both architectural design and transformation-aware learning.


\section{Preliminaries}
\subsection{Definitions and Notation}
A \textbf{mapping} between two sets, \(\mathcal{X} \) and \(\mathcal{Y} \) is defined as a rule that assigns a \( \mathbf{y} \in \mathcal{Y}  \) for each \( \mathbf{x} \in \mathcal{X}  \), i.e., \(F\colon \mathcal{X} \to \mathcal{Y}\). \(\mathcal{X} \) and \(\mathcal{Y} \)  are called the \textbf{domain} and \textbf{codomain} of \(F\), respectively. The \textbf{image (or range)} of \(F\), \(\operatorname{im} \mathcal{F}
\), is defined as \(F(\mathcal{X}) = \left\{  y \in \mathcal{Y} \mid \mathbf{y}=  F\left ( \mathcal{X} \right ) \text{for some } \mathbf{x} \in \mathcal{X}  \right\} \subset \mathcal{Y}\). For a map \(F\colon \mathcal{X} \to \mathcal{Y}\) and a subset \(\mathcal{A} \subseteq \mathcal{X}\), the \textbf{restriction} of \(F\) to \(\mathcal{A}\), denoted \(F|_{\mathcal{A}}\), is the map from \(\mathcal{A}\) to \(\mathcal{Y}\) defined by \(F|_{\mathcal{A}}(\mathbf{x}) = F(\mathbf{x})\) for all \(\mathbf{x} \in \mathcal{A}\). A map \(F\colon \mathcal{X} \to \mathcal{Y}\) is called \textbf{injective} (one-to-one) if \(F(\mathbf{x}_1) = F(\mathbf{x}_2)\) implies \(\mathbf{x}_1 = \mathbf{x}_2\), meaning distinct elements in \(\mathcal{X}\) map to distinct elements in \(\mathcal{Y}\). It is called \textbf{surjective} (onto) if, for every \(\mathbf{y} \in \mathcal{Y}\), there exists \(\mathbf{x} \in \mathcal{X}\) such that \(F(\mathbf{x}) = \mathbf{y}\), ensuring that the entire codomain is covered. A map that is both injective and surjective is called \textbf{bijective}, establishing a one-to-one correspondence between \(\mathcal{X}\) and \(\mathcal{Y}\).
We use standard notation for norms, sparsity, and support sets. 
A summary is provided in Appendix~\ref{app:Notation}.

\subsection{Manifolds}

The concept of a \textbf{manifold} extends the familiar idea of a surface to spaces that locally resemble Euclidean space across any dimension. For instance, while surfaces like the skin of a sphere or the surface of a torus are examples of 2-dimensional manifolds, manifolds can also be 1-dimensional (such as curves) or extend into higher dimensions. One-dimensional manifolds consist of only two fundamental types: the line and the circle. In two dimensions, manifolds can be classified based on their number of holes, formally known as genus. The sphere represents genus 0, the torus represents genus 1, and this classification continues in complexity for higher dimensions. Despite becoming more intricate in higher dimensions, the essential notion remains that a manifold is a topological space that resembles a flat space in the vicinity of each point. More formally, a manifold \(\mathcal{M} \subset \mathbb{R}^n\) is defined as a topological space that, in the vicinity of each point, resembles \(\mathbb{R}^k\), i.e., it exhibits a structure that is locally \textbf{homeomorphic} (i.e., a one-to-one, onto map that is continuous and has a continuous inverse) to Euclidean space \(\mathbb{R}^k\), where typically \(n \ge k\) and \(\mathbb{R}^n\) is called the \textbf{ambient space} that \(\mathcal{M}\) lives in. In what follows, we provide a brief overview of manifolds to build geometric intuition; for a more formal and in-depth treatment, we refer interested readers to standard texts in differential topology~\cite{guillemin2025differential, lee2000introduction, dundas2018short}. We can also view a \(k\)-dimensional manifold as the set of solutions of the form \(\{ \mathbf{x} \in \mathcal{X} \mid F(\mathbf{x}) = \mathbf{y} \in \mathcal{Y} \} \subset \mathcal{X}\), which has \(k\) degrees of freedom. In this setup, the map \(F\colon \mathcal{X} \to \mathcal{Y}\) is defined as \(F(\mathbf{x}) = [f_1(\mathbf{x}), f_2(\mathbf{x}), \ldots, f_{n-k}(\mathbf{x})]\), where each \(f_i\) is a scalar-valued function, and each constraint \(f_i(\mathbf{x}) = b_i\) reduces the dimension of the manifold by one,  assuming \(\mathbf{x}\) is \(n\)-dimensional. Consider the unit sphere \(S^1 = \{\mathbf{x} \in \mathbb{R}^2 \colon \|\mathbf{x}\|_2^2 = 1\} \subset \mathbb{R}^2\). This manifold is of dimension \(k = 1\) locally, yet it is embedded within a higher-dimensional \(n = 2\)-dimensional Euclidean space, and described using the \textbf{coordinates} \(x_1\) and \(x_2\). When examining the structure more closely, as illustrated in Figure~\ref{fig:circle}, each segment of the sphere may be perceived as resembling curved straight line segments. Indeed, for each open set \(\mathcal{U}_i\), there exists a homeomorphic mapping \(\varphi_i \colon \mathcal{U}_i \to (-1, 1) \subset \mathbb{R}\), where:
\[
\begin{aligned}
&\mathcal{U}_1 = \left\{ \mathbf{x} \in S^1 \mid x_2 > 0 \right\}, \quad
\mathcal{U}_2 = \left\{ \mathbf{x} \in S^1 \mid x_2 < 0 \right\}, \\
&\mathcal{U}_3 = \left\{ \mathbf{x} \in S^1 \mid x_1 > 0 \right\},\quad 
\mathcal{U}_4 = \left\{ \mathbf{x} \in S^1 \mid x_1 < 0 \right\},
\end{aligned}
\]
with corresponding mappings: \(\varphi_1(\mathbf{x}) = x_1\), \(\varphi_2(\mathbf{x}) = x_1\), \(\varphi_3(\mathbf{x}) = x_2\), \(\varphi_4(\mathbf{x}) = x_2\), as shown in Figure 1. Each pair \((\mathcal{U}_i, \varphi_i)\) is called a \textbf{chart}, and the collection of such charts that together cover the manifold forms an \textbf{atlas} \(\{(\mathcal{U}_i, \varphi_i)\}\). For a \(k\)-dimensional manifold, the \(k\)-tuples \(\mathbf{x} \in \mathbb{R}^k\), provided by a chart \(\varphi\), are called \textbf{coordinates}; if a single chart covers the manifold, they are global, otherwise they are \textbf{local coordinates}, as in our example where four charts were required to describe \(S^1\), i.e., \(S^1 = \bigcup_i \mathcal{U}_i\). For this reason, a point may have two or more local coordinate representations, and the transition from one coordinate system to another must be smooth; that is, on the overlaps \(\mathcal{U}_i \cap \mathcal{U}_j \neq \emptyset\), the \textbf{transition maps} \(\varphi_{ij} = \varphi_i \circ \varphi_j^{-1}\) should be differentiable. A manifold equipped with such an atlas is called a \textbf{smooth manifold}. Moreover, the inverse of each coordinate map, \(\varphi_i^{-1}\colon V_i \subset \mathbb{R}^k \to \mathcal{U}_i \subset \mathbb{R}^n\), where both \(V_i\) and \(\mathcal{U}_i\) are open sets, is called a \textbf{local parametrization}. For instance, in our example one has \(\varphi_1^{-1}(x_1) = (x_1, \sqrt{1-x_1^2})\), \(\varphi_2^{-1}(x_1) = (x_1, -\sqrt{1-x_1^2})\), \(\varphi_3^{-1}(x_2) = (\sqrt{1-x_2^2}, x_2)\), and \(\varphi_4^{-1}(x_2) = (-\sqrt{1-x_2^2}, x_2)\). As an illustration, on the overlap \(\mathcal{U}_1 \cap \mathcal{U}_3\), the transition map is \(\varphi_{13}(x_2) = \sqrt{1-x_2^2}\), and on \(\mathcal{U}_2 \cap \mathcal{U}_3\) it is \(\varphi_{23}(x_2) = -\sqrt{1-x_2^2}\). Each of these transition maps is smooth, and therefore \(S^1\) is a smooth manifold.

\begin{wrapfigure}{l}{0.45\linewidth}
\vspace{-10pt} 
\centering
\includegraphics[width=\linewidth]{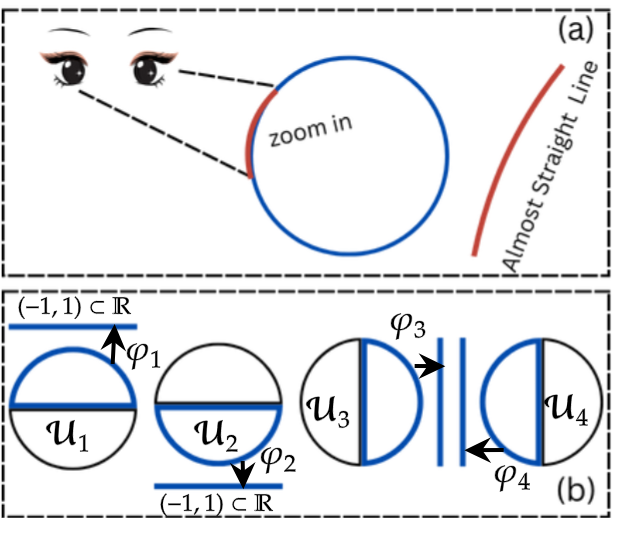}
\caption{An example manifold and local coordinates.}
\label{fig:circle}
\vspace{-10pt} 
\end{wrapfigure}

Beyond these basic examples, new manifolds can be obtained by forming Cartesian products, which are known as \textbf{product manifolds}. If \(\mathcal{M}_1\) and \(\mathcal{M}_2\) are manifolds of dimensions \(k_1\) and \(k_2\), then their product \(\mathcal{M} = \mathcal{M}_1 \times \mathcal{M}_2\) is itself a manifold of dimension \(k_1 + k_2\). The local charts of \(\mathcal{M}\) are given by combining charts from \(\mathcal{M}_1\) and \(\mathcal{M}_2\). A fundamental example is the \(n\)-torus, defined as the product \(\mathbb{T}^n = S^1 \times \cdots \times S^1\), which is an \(n\)-dimensional smooth manifold.

A \textbf{{submanifold}} of a smooth \(k\)-dimensional manifold \(\mathcal{M}\) is
a subset \(\mathcal{N} \subset \mathcal{M}\) that is itself a smooth manifold of
dimension \(\ell < k\). The codimension of \(\mathcal{N}\) in \(\mathcal{M}\) is
defined as \(\operatorname{codim} \mathcal{N} = k - \ell\). More precisely,
\(\mathcal{N}\) is called an \(\ell\)-dimensional submanifold if for every point
\(p \in \mathcal{N}\), there exists a chart \((U,\varphi)\) of \(\mathcal{M}\)
with \(p \in U\) such that \(\varphi(U \cap \mathcal{N}) \subset
\mathbb{R}^{\ell} \times \{0\} \subset \mathbb{R}^{k}\). As an explicit example,
in the torus \(\mathbb{T}^{2} = S^{1} \times S^{1}\), the subset
\(S^{1} \times \{p\}\) for a fixed point \(p \in S^{1}\) is a 1-dimensional
submanifold of the 2-dimensional torus. As another important example, the open
sets \(U_i\) used in the example of the circle \(S^{1}\) are 1-dimensional
submanifolds of \(S^{1}\). Indeed, any open subset \(U \subset \mathcal{M}\) of
a smooth manifold is automatically a submanifold of \(\mathcal{M}\) of the same
dimension~\cite[p.~100]{tu2010introduction}.

The \textbf{tangent space} of a $k$-dimensional manifold $\mathcal{M} \subset \mathbb{R}^n$
at a point $x \in \mathcal{M}$, denoted $T_x \mathcal{M}$, is the $k$-dimensional
vector space spanned by the partial derivatives of a local parametrization at that
point. More precisely, let
\(
\varphi^{-1} \colon V \subset \mathbb{R}^k \to U \subset \mathcal{M}
\)
be a local parametrization around $x$, and let
\(
u_0 \coloneq \varphi(x) \in V
\)
be the coordinate of $x$ in the chart (so that $\varphi^{-1}(u_0)=x$).
The Jacobian matrix (also called the dictionary matrix) is
\begin{equation}
D(u_0)
= 
\begin{bmatrix}
\frac{\partial \varphi^{-1}}{\partial u_1}(u_0) &
\cdots &
\frac{\partial \varphi^{-1}}{\partial u_k}(u_0)
\end{bmatrix}
\in \mathbb{R}^{n \times k}.
\end{equation}
The columns of $D(u_0)$ are tangent vectors, and their span equals the tangent space:
\begin{equation}    
T_x \mathcal{M} = \operatorname{span}(D(u_0)).
\end{equation}
Intuitively, the tangent space provides the best linear approximation of the manifold near \(x\): just as a tangent line approximates a curve or a tangent plane approximates a surface, \(T_x \mathcal{M}\) gives the linear subspace of \(\mathbb{R}^n\) that locally models the geometry of \(\mathcal{M}\) at \(x\).


Up to this point, manifolds have been introduced as spaces that locally resemble
Euclidean space and admit smooth coordinate charts. However, many geometric
problems require additional structure: one must be able to measure lengths,
angles, and distances between points or tangent vectors. The appropriate
framework is that of a \textbf{Riemannian manifold} $(\mathcal M,g)$, in which
each point $x \in \mathcal M$ is equipped with an inner product 
$g_x(\cdot,\cdot)$ on the tangent space $T_x\mathcal M$, depending smoothly on $x$
in local coordinates. This Riemannian metric allows one to measure lengths of
curves and thus induces a notion of distance generalizing the Euclidean metric. Once a metric is available, one may
speak of geometric notions invariant under isometries (distance-preserving mappings), such as orthogonality or
projections onto submanifolds.

The \textbf{normal space} at a point \(x \in \mathcal{M} \subset \mathbb{R}^n\), denoted \(N_x \mathcal{M}\), is the subspace of \(\mathbb{R}^n\) consisting of all vectors orthogonal to the tangent space \(T_x \mathcal{M}\). Formally,  
\[
N_x \mathcal{M} \coloneq \left\{ \mathbf{n} \in \mathbb{R}^n \;\mid\; \langle \mathbf{t}, \mathbf{n} \rangle = 0 \;\; \forall \mathbf{t} \in T_x \mathcal{M} \right\},
\]  
where \(\langle \cdot , \cdot \rangle\) denotes the Euclidean inner product. Since \(\dim(T_x \mathcal{M}) = k\), it follows that \(\dim(N_x \mathcal{M}) = n-k\), and we have the orthogonal decomposition \(\mathbb{R}^n = T_x \mathcal{M} \oplus N_x \mathcal{M}\) with $\oplus$ the direct sum. As a simple example, for the circle \(S^1 \subset \mathbb{R}^2\), the tangent space \(T_x S^1\) at a point \(x\) is the tangent line to the circle, while the normal space \(N_x S^1\) is the radial line through \(x\), perpendicular to the tangent.

\textbf{Transversality and Preimages: }

\begin{wrapfigure}{l}{0.45\linewidth}
\vspace{-10pt}
\centering
\includegraphics[width=\linewidth]{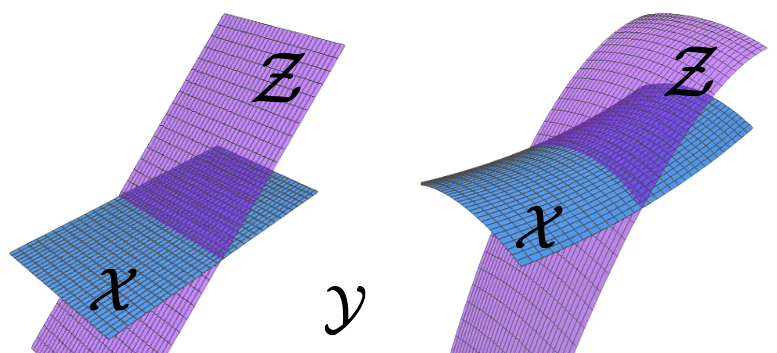}
\caption{Transversal intersections. (a) Linear subspaces. (b) Curved submanifolds.}
\label{fig:transversality}
\vspace{4pt}
\end{wrapfigure}

Let \(F\colon \mathcal{X} \to \mathcal{Y}\) be a smooth map between manifolds, and let \(\mathcal{Z} \subset \mathcal{Y}\) be a smooth submanifold. We say that \(F\) is \textbf{transversal} to \(\mathcal{Z}\), written \(F \pitchfork \mathcal{Z}\), if for every point \(x \in \mathcal{X}\) with \(F(x) \in \mathcal{Z}\), we have
\begin{equation}
\mathrm{Im}(dF_x) + T_{F(x)} \mathcal{Z} = T_{F(x)} \mathcal{Y}.
\end{equation}
If this condition holds, then the preimage $F^{-1}(\mathcal{Z})$ is a smooth 
submanifold of $\mathcal{X}$. Moreover, the codimension of 
$F^{-1}(\mathcal{Z})$ in $\mathcal{X}$ equals the codimension of 
$\mathcal{Z}$ in $\mathcal{Y}$. Let the map 
$i \colon \mathcal{X} \hookrightarrow \mathcal{Y}$ denote the inclusion, that is, 
$i(x)=x$ for all $x \in \mathcal{X}$. In this case, the preimage 
$i^{-1}(\mathcal{Z})$ is simply the intersection 
$\mathcal{X} \cap \mathcal{Z}$. Since the derivative of the inclusion is the 
natural embedding $T_x\mathcal{X} \subset T_x\mathcal{Y}$, the transversality 
condition reduces to
\begin{equation}
    T_x \mathcal{X} + T_x \mathcal{Z} = T_x \mathcal{Y},
    \qquad x \in \mathcal{X} \cap \mathcal{Z}.
\end{equation}
If this holds, then 
$\mathrm{codim}_{\mathcal{Y}}(\mathcal{X} \cap \mathcal{Z})
  = \mathrm{codim}_{\mathcal{Y}} (\mathcal{X})
  + \mathrm{codim}_{\mathcal{Y}} (\mathcal{Z})$.
See Figure~\ref{fig:transversality} for a geometric visualization of transversal 
intersections between flat and curved submanifolds.

\subsection{Nonlinear Orthogonal Projection}
\label{sec:nonlinear-projection}

Let \( Z \) be any metric space equipped with a distance metric  \( d \). Then, we borrow the well-defined \textbf{non-linear orthogonal projection} concept from the literature~\cite{dudek1994nonlinear}: The nonlinear orthogonal projection onto a non-empty subset \( \mathcal{D}\) of \(Z\) is a mapping (i.e., relation) \( \mathcal{P} \subset Z \times \mathcal{D} \)
with domain
\begin{align}
\text{dom}(\mathcal{P}) \coloneq \left\{ z \in Z \mid \exists! z' \in \mathcal{D}, d(z, z') = \rho(z, \mathcal{D}) \right\},
\end{align}
where \( \rho(z, \mathcal{D}) \coloneq \inf_{x \in \mathcal{D}} d(z, x) \). For instance, let us consider the set (e.g., manifold in this example) \(\mathcal{M} \coloneq \{(x, x^2) \mid x \in \mathbb{R}\}\), where \(\mathcal{M} \subset Z\). In this example, if we select any point in \(\{(0,t) \mid t > 1/2\}\), the projection assigns this point to two points on the manifold as shown in Figure \ref{ex1}. The domain of the orthogonal projection \(\mathcal{P}\) onto \(\mathcal{M}\) is \(
\mathrm{dom}(\mathcal{P}) = \mathbb{R}^2 \setminus \{(0,t) \mid t > 1/2\} \).

\begin{figure}[h]
\centering

\begin{subfigure}{0.45\linewidth}
\centering
\begin{tikzpicture}[scale=1.0, >=latex]

  \draw[domain=-1.2:1.2, smooth, variable=\x] plot ({\x},{(abs(\x))^2});
  \node[right] at (1.3,1.6) {$\mathcal{M}$};

  \filldraw (-0.3,1) circle [radius=0.06];
  \node[right] at (-0.3,1) {$\mathbf{z}$};

  \draw[line width=1.4] (-0.3,1) -- (-0.83,0.69);
  \filldraw (-0.83,0.69) circle [radius=0.06];
  \node[left] at (-0.83,0.69) {$\mathbf{z}'$};

\end{tikzpicture}
\caption{}
\end{subfigure}
\hfill
\begin{subfigure}{0.45\linewidth}
\centering
\begin{tikzpicture}[scale=1.0, >=latex]

  \draw[->] (-2.0,0) -- (2.0,0);
  \draw[->] (0,-0.1) -- (0,2.4);
  \draw[->, line width=1.4pt] (0,0.55) -- (0,2.4);

  \draw[domain=-1.2:1.2, smooth, variable=\x] plot ({\x},{(abs(\x))^2});
  \node[right] at (1.3,1.6) {$\mathcal{M}$};

  \draw (0,0.5) circle [radius=0.06];
  \node[left] at (0,0.5) {\(\frac{1}{2}\)};

  \filldraw[red] (0,1) circle [radius=0.06];
  \node[left] at (0,1) {1};

  \draw[line width=1.4, red] (0,1) -- (-0.7071,0.5);
  \draw[line width=1.4, red] (0,1) -- (0.7071,0.5);

  \draw[->] (0.2,1.9) -- (0.05,1.3);
  \node[right] at (0.05,2.0) {\(\mathbb{R}^2 \setminus \mathrm{dom}\,\mathcal{P}\)};

\end{tikzpicture}
\caption{}
\end{subfigure}

\caption{Nonlinear orthogonal projection onto a manifold. 
(a) $z \in \operatorname{dom}(\mathcal{P})$ has a unique closest point $z' \in \mathcal{M}$. 
(b) Points outside $\operatorname{dom}(\mathcal{P})$ have multiple closest points on $\mathcal{M}$.}
\label{ex1}

\end{figure}

\subsection{Sparse Representation}
\label{appendix_sparse_representation}
Sparse representation (SR), also known as Sparse Coding (SC), refers to the idea of 
expressing a signal $\mathbf{s} \in \mathbb{R}^n$ using a linear combination of only 
a small number of atoms selected from a larger collection in a dictionary 
$\mathbf{D} \in \mathbb{R}^{n \times N}$. A typical example is Compressive Sensing 
(CS)~\cite{CS1,CS2}, which can be viewed as a particular case of this framework: 
if a signal $\mathbf{s} \in \mathbb{R}^{d}$ admits a sparse coefficient vector 
$\mathbf{x} \in \mathbb{R}^{N}$ with respect to a dictionary or basis 
$\mathbf{\Phi} \in \mathbb{R}^{d \times N}$, then it can be acquired in a 
reduced-dimensional form through a linear measurement operator 
$\mathbf{A} \in \mathbb{R}^{n \times d}$. As a result, the measurement vector can 
be represented using an alternative dictionary
\(
    \mathbf{D} = \mathbf{A}\mathbf{\Phi} \in \mathbb{R}^{n \times N}.
\)
Within the SC literature, \textbf{signal synthesis} describes the process 
of generating a signal $\mathbf{s} = \mathbf{D}\mathbf{x} \in \mathbb{R}^n$ from a 
sparse code $\mathbf{x} \in \mathbb{R}^N$ using a given dictionary $\mathbf{D}$. 
In contrast, \textbf{signal analysis} focuses on estimating the SR 
coefficients $\mathbf{x}$ from the observed signal $\mathbf{s}$. A typical signal 
analysis procedure first estimates the locations of the nonzero coefficients of 
$\mathbf{x}$, $\Lambda = \{ i \mid x_i \neq 0 \}$; this process is known as 
sparse support estimation~\cite{csen, osen}. Then the nonzero coefficients can be estimated using 
the ordinary least-squares solution
\(
    \mathbf{x}_\Lambda 
    = (\mathbf{D}_\Lambda^\top \mathbf{D}_\Lambda)^{-1}
      \mathbf{D}_\Lambda^\top \mathbf{s},
\)
which is equivalent to computing the orthogonal projection of $\mathbf{s}$ onto 
the subspace $\mathrm{span}(\mathbf{D}_\Lambda)$, i.e.,
\(
    \widehat{\mathbf{s}} 
    = \mathbf{D}_\Lambda
      (\mathbf{D}_\Lambda^\top \mathbf{D}_\Lambda)^{-1}
      \mathbf{D}_\Lambda^\top \mathbf{s}.
\)

\begin{wrapfigure}{r}{0.48\linewidth}
\vspace{-10pt}
\centering
\includegraphics[width=\linewidth]{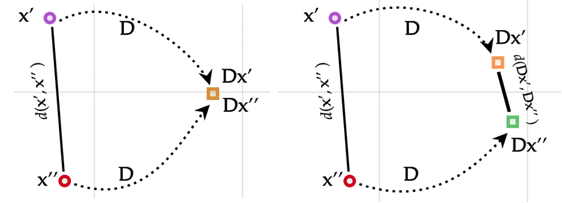}
\caption{Uniqueness vs.\ stability. (a) Null-space uniqueness avoids collisions. (b) RIP preserves distances and ensures stability.}
\label{fig:rip_graph}
\vspace{-10pt}
\end{wrapfigure}
SR can also be viewed through the \textbf{union-of-subspaces} model~\cite{blumensath2009sampling}, where each signal 
$\mathbf{s}$ lies in one element of a finite collection of low-dimensional linear 
subspaces determined by support sets. Let the dictionary 
$\mathbf{D}$ be partitioned into groups of atoms 
$\mathbf{D}_{\Lambda_i} \in \mathbb{R}^{n \times k_i} $, where each $\Lambda_i \subset \{1,\dots,N\}$ indexes a 
subset of columns and the corresponding subspace 
$\mathcal{S}_i = \mathrm{span}(\mathbf{D}_{\Lambda_i})$ has intrinsic dimension 
$k_i = |\Lambda_i|$. The overall model is therefore 
\begin{equation}
    \mathcal{A} = \bigcup_{i=1}^{L} \mathcal{S}_i,
    \qquad 
    \mathcal{S}_i = 
    \left\{ \mathbf{s} = \mathbf{D}_{\Lambda_i}\mathbf{x}_{\Lambda_i} \mid
    \mathbf{x}_{\Lambda_i} \in \mathbb{R}^{k_i} \right\},
\end{equation}
where \(L\) is the number of different subspace which requires particular attention in later stage of the discussion. The natural question is when the representation $\mathbf{s}=\mathbf{D}\mathbf{x}$, 
with $\mathbf{D}\in\mathbb{R}^{n\times N}$ and $N\gg n$, is unique. 
Equivalently, we require
\(
    \mathbf{D}\mathbf{x}' \neq \mathbf{D}\mathbf{x}'' 
    \text{ for all } \mathbf{x}' \neq \mathbf{x}''.
\) Consider the unwanted case $\mathbf{D}\mathbf{x}'=\mathbf{D}\mathbf{x}''$, which 
implies $\mathbf{D}(\mathbf{x}'-\mathbf{x}'')=0$. For $k$-sparse signals, the 
difference $\mathbf{x}'-\mathbf{x}''$ is at most $2k$-sparse in the worst case. 
Thus we require every nonzero $2k$-sparse vector to lie outside $\mathrm{Null}(\mathbf{D})$, 
i.e., $\mathbf{x}''' \notin \mathrm{Null}(\mathbf{D})$. This in turn requires 
$\mathbf{D}$ to have rank at least $2k$. Since $\mathrm{rank}(\mathbf{D}) \le n$, 
uniqueness demands
\(
    n \ge 2k.
\) This condition corresponds to the worst-case scenario in which the 
$k$-dimensional subspaces determined by different supports do not intersect. 
In more realistic union-of-subspaces models, the required embedding dimension 
is governed by the maximum dimension of all secant subspaces, defined as
\(
    k_{\max}
= \max_{i\neq j} \dim\big( \mathrm{span}(\mathcal{S}_i \cup \mathcal{S}_j) \big)
,
\)
and uniqueness holds whenever $n \ge k_{\max}$~\cite{blumensath2009sampling, lu2008theory}. In other words, although exact recovery of $\mathbf{s}$ via the minimum-norm 
recovery method is impossible in general, perfect reconstruction 
becomes achievable when the signal is sparse and the embedding dimension satisfies 
$n > k_{\max}$.

When approximately sparse signals are measured in noise, a stronger condition is
required to guarantee stable recovery. This is the Restricted Isometry Property
(RIP).
\begin{defn}[Restricted Isometry Property]
A dictionary $\mathbf{D} \in \mathbb{R}^{n \times N}$ satisfies RIP of order $k$
if there exists a smallest constant $\delta_k(\mathbf{D})$ such that
\[
    (1-\delta_k)\|\mathbf{x}\|_2^2 
    \;\le\; \|\mathbf{D}\mathbf{x}\|_2^2 
    \;\le\; (1+\delta_k)\|\mathbf{x}\|_2^2,
\]
for all $k$-sparse vectors $\mathbf{x} \in \mathbb{R}^N$. The quantity 
$\delta_k(\mathbf{D})$ is the restricted isometry constant (RIC).
\end{defn}

\begin{wrapfigure}{l}{0.48\linewidth}
\vspace{-10pt}
\centering

\includegraphics[width=0.48\linewidth]{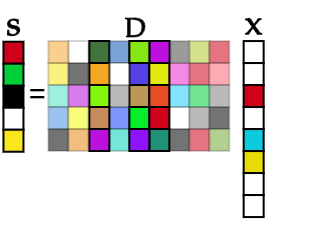}
\hfill
\includegraphics[width=0.48\linewidth]{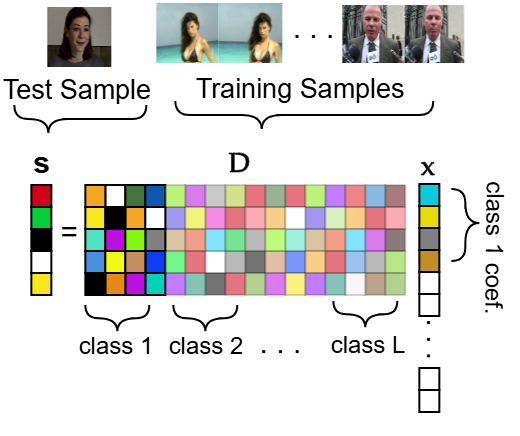}

\caption{Sparsity models. (a) Conventional sparsity. (b) Group sparsity.}
\label{fig:sparsity}
\vspace{-10pt}
\end{wrapfigure}
The recovery guarantees for $k$-sparse signals require controlling the 
$k_{\max}$-order constant (equal to $2k$ in the classical setting~\cite{RIP}), 
denoted $\delta_{k_{\max}}(\mathbf{D})$. The intuition parallels the null-space 
analysis above: to avoid $\mathbf{D}\mathbf{x}' = \mathbf{D}\mathbf{x}''$, we 
require $\mathbf{D}(\mathbf{x}' - \mathbf{x}'') \neq 0$ for all distinct 
$k$-sparse vectors, which in turn requires that $\mathbf{D}$ not annihilate any nonzero vector 
belonging to a $k_{\max}$-dimensional secant subspace.
 However, uniqueness is not 
sufficient for robustness. In addition to preventing collisions, we also need 
the images of distinct sparse vectors to remain well separated after mapping. 
This stronger requirement is captured by the RIP, which enforces that the 
distance between any two $k$-sparse vectors, \(\mathbf{x}', \mathbf{x}''\) cannot collapse under $\mathbf{D}$, 
namely,
\(
    (1-\delta_{k_{\max}})\|\mathbf{x}' - \mathbf{x}''\|_2^2
    \;\le\; 
    \|\mathbf{D}\mathbf{x}' - \mathbf{D}\mathbf{x}''\|_2^2.
\)

The proof of RIP-type results follows the same strategy commonly used in 
modern analyses of the Johnson--Lindenstrauss Lemma (JLL)~\cite{johnson1984extensions}. 
The JLL asserts that a mapping from $\mathbb{R}^N$ to $\mathbb{R}^n$ can 
approximately preserve pairwise distances for any finite subset 
$\mathcal{S} \subset \mathbb{R}^N$. The argument proceeds in two steps. 
(i) For a fixed pair $\mathbf{x}',\mathbf{x}'' \in \mathcal{S}$, define 
$\mathbf{x} = \mathbf{x}' - \mathbf{x}''$. One first controls the probability 
that $\mathbf{D}$ distorts its length, i.e., probability of failure:
\(
    \Pr\!\left(\|\mathbf{D}\mathbf{x}\|_2^2 \le (1-\gamma)\|\mathbf{x}\|_2^2\right)
    + 
    \Pr\!\left(\|\mathbf{D}\mathbf{x}\|_2^2 \ge (1+\gamma)\|\mathbf{x}\|_2^2\right)
    \le P_e.
\)
(ii) A uniform guarantee for all pairs in $\mathcal{S}$ then follows from the 
union bound:
\(
    \Pr\!\left(
        (1-\gamma)\|\mathbf{x}\|_2^2
        \le \|\mathbf{D}\mathbf{x}\|_2^2
        \le (1+\gamma)\|\mathbf{D}\mathbf{x}\|_2^2
        \;\text{for all }\mathbf{x}\in\mathcal{S}
    \right)
    \ge 1 - |\mathcal{S}|P_e.
\)
For general subsets of $\mathbb{R}^N$, the covering number scales as 
$\mathcal{O}((1/\varepsilon)^N)$. When restricted to $k$-sparse vectors, however, 
the effective size of $\mathcal{S}$ reduces dramatically: each fixed support 
defines a $k$-dimensional Euclidean subspace with covering number 
$\mathcal{O}((1/\varepsilon)^k)$, and there are $\binom{N}{k}$ such supports. 
Thus the total covering number decreases from $\mathcal{O}((1/\varepsilon)^N)$ to 
\(
   \binom{N}{k} \mathcal{O}\!\left((1/\varepsilon)^k\right).
\)
This exponential reduction ensures that an embedding satisfies the RIP 
with overwhelming probability whenever
\(
    n \ge \mathcal{O}\!\left(k \log(N/k)\right),
\)
consistent with classical sparse representations. In more structural union-of-subspace models, such as group sparsity as in Figure~\ref{fig:sparsity} and hierarchical models, the number of different subspaces $\binom{N}{k}$ can be further reduced.

\begin{Summary}{}{firstsummary}
\label{firstsummary}
\textbf{Although an isometric embedding of the entire ambient space 
$\mathbb{R}^N$ into a lower-dimensional observation space is 
infeasible, such embeddings become possible when the cardinality of the target set is reduced,} i.e.,  target set is 
restricted to a union of $k$-dimensional subspaces. The same 
realization extends to low-dimensional curved spaces: by replacing 
$\mathbb{R}^N$ with an underlying union of submanifolds, as will be discussed in the following PoS Hypothesis.
\end{Summary}

\section{PoS Hypothesis}
\label{sec:PoS_Theory}

The \textbf{Manifold Hypothesis} states that the high-dimensional real-world data live on a lower dimensional manifold~\cite{ManifoldHypothesis}. It is then possible to parametrize this surface with fewer variables as would normally be required to represent the ambient space.
Let us denote the $i$th observation in the ambient space as $\mathbf{x}^i \in \mathbb{R}^n$, and assume that for all $i$, $\mathbf{x}^i$ approximately lies on a manifold $\mathcal{M} \subset \mathbb{R}^n$, where $\mathrm{dim}(\mathcal{M}) = m < n$. 
\textbf{Manifold representation learning} aims to reconstruct $\mathcal{M}$, as well as to learn a local parametrization function, $\Phi^{-1} \colon V \rightarrow \mathcal{M}$ with $V \subset \mathbb{R}^m$. A typical example is the case $m = k$, where the best $k$-dimensional subspace approximation is given by PCA~~\cite{hotelling1933analysis}. Other examples include linear and nonlinear bottleneck autoencoders~\cite{jamshidi2011geometric}. 
Built upon the foundation of the manifold hypothesis, the \textbf{classification manifold hypothesis} further states that the samples from distinct input classes tend to cumulate on the distinct sub-manifolds of $\mathcal{M}$~\cite{TangentClassifier}.

\begin{wrapfigure}{r}{0.55\linewidth}
\vspace{-10pt}
\centering
\includegraphics[width=\linewidth]{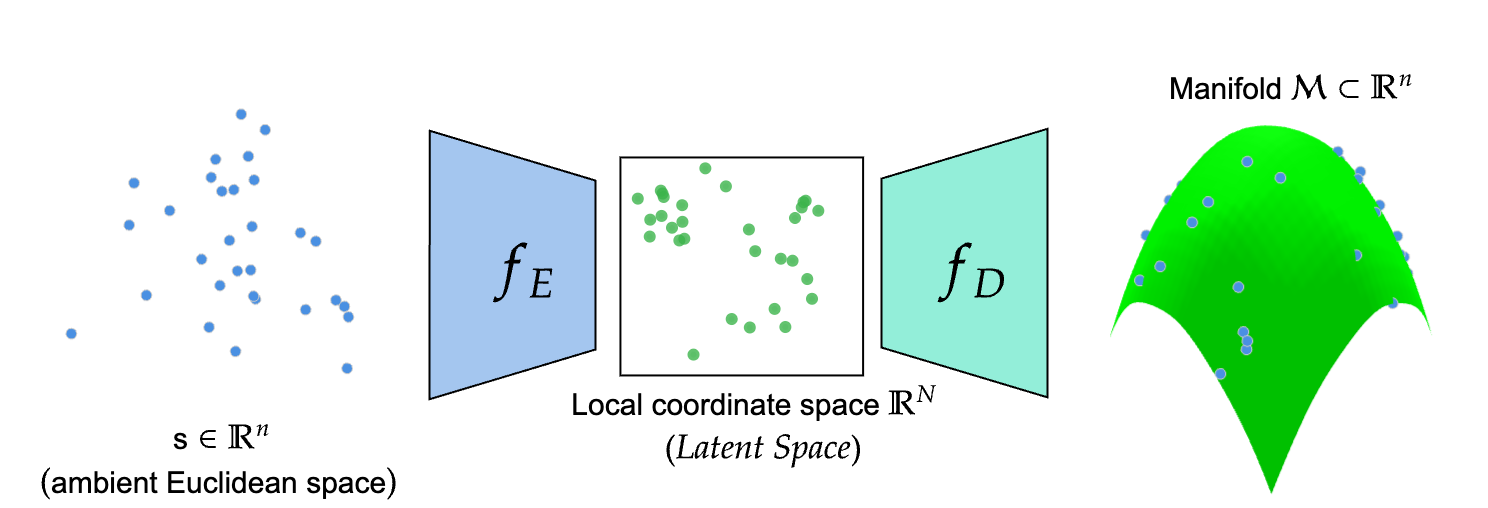}
\caption{Autoencoder as a geometric coordinate model. The encoder and decoder provide local coordinate maps for the data manifold $\mathcal{M}$, later refined to capture its structure.}
\label{fig:placeholder}
\vspace{-10pt}
\end{wrapfigure}
Let us view an autoencoder as an unsupervised mechanism for learning an 
approximation of the underlying data manifold, which can later be adapted for 
semi-supervised tasks. Consider a dataset consisting of samples drawn from an 
unknown manifold $\mathcal{M}_s \subset \mathbb{R}^n$, i.e., 
$\mathcal{D}_s = \{\mathbf{s}^{\,1}, \dots, \mathbf{s}^{\,I}\},  \mathbf{s}^{\,i} \in \mathcal{M}_s$.
The autoencoder aims to simultaneously learn the coordinate map $\Phi$ and its corresponding coordinates $\mathbf{x} \in \mathbb{R}^{N}$, such that $\Phi^{-1}(\mathbf{x}^{\,i}) = \mathbf{s}^{\,i}$. Autoencoders in general learn a non-linear \textit{encoder} function $f_E\colon \mathbb{R}^{n} \rightarrow \mathbb{R}^{N}$, to map the input-space coordinates $\mathbf{s} \in \mathbb{R}^{n}$ onto the local coordinates $\mathbf{x} \in \mathbb{R}^{N}$, also known as the latent space. The encoder is jointly learned with a \textit{decoder} $f_D\colon \mathbb{R}^{N} \rightarrow \mathbb{R}^{n}$, that maps the latent space coordinates back to the input space. This encoder-decoder architecture aims to reconstruct the data, i.e., $f_D(f_E(\mathbf{s}^i)) \approx \mathbf{s}^i$, via the loss function
\begin{equation}
    \mathcal{L}(\phi,\theta) = \sum_{\mathbf{s}^i \in \mathcal{D}_s} \|f_D(f_E(\mathbf{s}^i)) - \mathbf{s}^i\|_2^2.
\end{equation}
The encoder $f_E$ approximates the coordinate map 
$\Phi = \bigcup_i \varphi_i$, and the decoder $f_D$ approximates the 
parametrization $\Phi^{-1}$. As discussed before, when $N =m< n$, the autoencoder reduces to the 
classical bottleneck formulation, where the latent dimension provides a 
lower-dimensional representation of the data manifold. However, many 
autoencoder architectures use an overcomplete latent space with $N \ge n$. 
Examples include sparse autoencoders~\cite{SparseAutoencoders} which, in the union-of-subspaces 
interpretation discussed earlier, produce a sparse coefficient vector that 
identifies the active subspace of each sample, as well as denoising 
autoencoders~\cite{denoisedAutoencoders} and more recent masked autoencoders~\cite{he2022masked}. In all cases, the latent 
representation $\mathbf{x}^i = f_E(\mathbf{s}^i)$ lives in an abstract coordinate 
space $\mathbb{R}^N$ that is not directly observable, and whose geometric meaning 
depends on the underlying structure of the data manifold (e.g., local affine 
coordinates in a piecewise-linear approximation of a curved manifold).


\begin{wrapfigure}{r}{0.55\linewidth}
\vspace{-10pt}
\centering
\includegraphics[width=\linewidth]{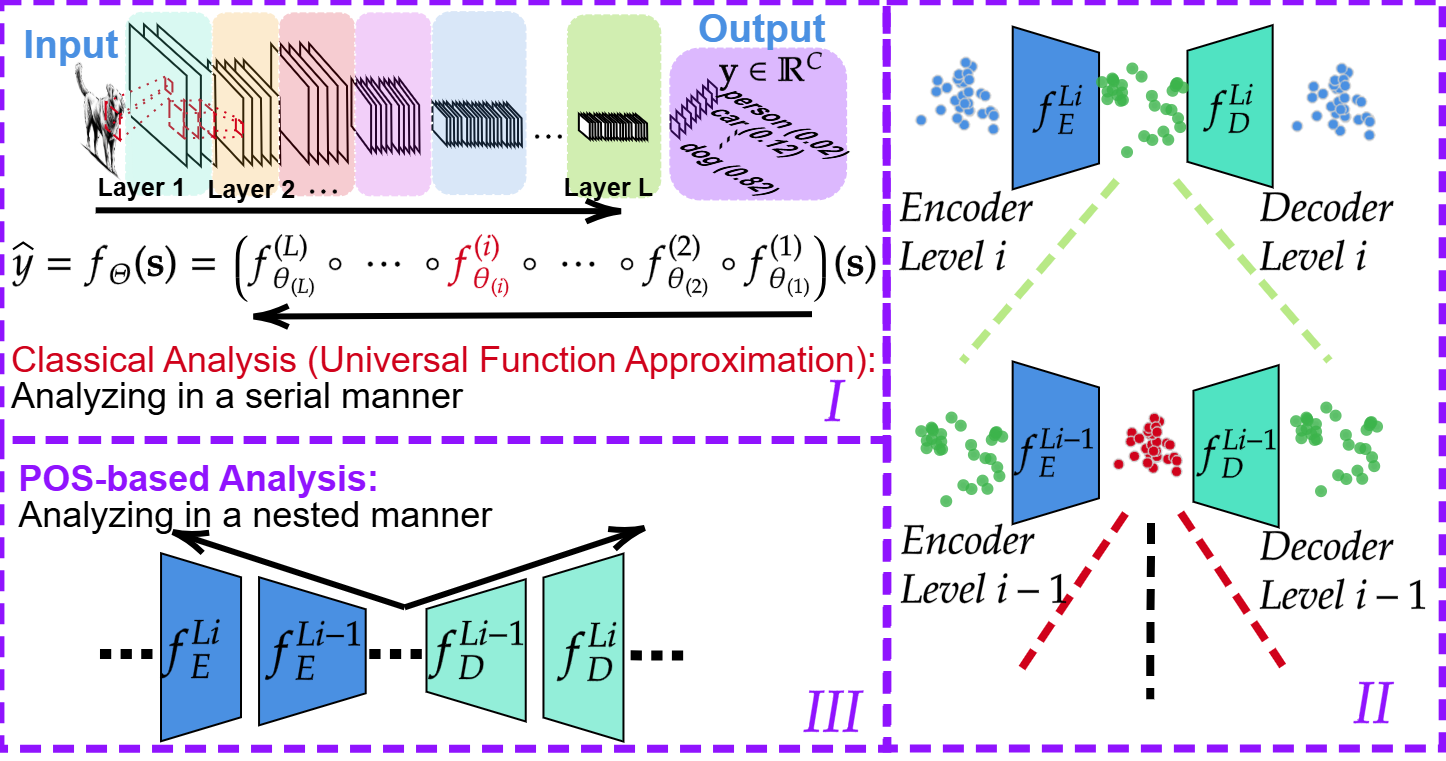}
\caption{Classical vs.\ PoS-based views of neural networks. 
(I) Flat layer-wise composition. 
(II) Nested encoder–decoder structure enabling manifold representations. 
(III) Hierarchical organization emerging from the framework.}
\label{fig:classicalvsPoSAnalysis}
\vspace{-12pt}
\end{wrapfigure}

In this work, we do not analyze autoencoders directly. Instead, as illustrated in 
Figure~\ref{fig:classicalvsPoSAnalysis}, we use the autoencoder template as a 
unifying mathematical framework.
This framework allows us to derive most modern neural network architectures. By choosing which components of the template to retain, remove, or replace according to the neural network axioms introduced in the sequel, we obtain concrete architectures as special realizations. Thus, the autoencoder serves as an abstract blueprint, while the axioms dictate how its parts are instantiated to produce the wide range of models used in practice.

\subsection{Axiomatizing Neural Networks's Operations}
The foundations of mathematics, particularly in Euclidean space, were established by formalizing simple observations into axioms. One of Euclid’s axioms, for example, posits that a straight line can be drawn between any two points, representing the shortest path between them, and that this straight line can be extended infinitely in both directions. This principle was inspired by empirical observations, such as the fact that light seemingly travels in straight lines, which were then abstracted into mathematical postulates. Similarly, we can analyze the behavior of neural networks, specifically, those that outperform others in practice, and propose theoretical frameworks that stem from fundamental axioms derived from these observations. Just as all mathematical formulas in Euclidean geometry can theoretically be derived from a limited set of basic axioms, theories of neural networks can be systematically developed and refined based on foundational principles inferred from their observed behaviors. Thus, we present the following postulates for neural network operations.


\begin{postulate}[Compactness of Data Representation]
Let $\mathcal{M}_D \subset \mathbb{R}^n$ be an $m$-dimensional submanifold 
representing a candidate target set for the data. A neural network operates 
as a mapping $P : \mathbb{R}^n \to \mathbb{R}^n$ such that 
$\hat{\mathbf{s}} = P(\mathbf{s}) \in \mathcal{M}_D$. An idealized, sufficiently expressive neural network admits mappings whose 
image is better structured as a union of lower-dimensional submanifolds 
$\bigcup_{i} \mathcal{M}_{D_i} \subseteq \mathcal{M}_D$, where each 
$\mathcal{M}_{D_i}$ has dimension $k_i < m$, i.e.,
\begin{equation}
    \mathcal{M}_D \xrightarrow{\text{Opt}} 
    \bigcup_{i=1} \mathcal{M}_{D_i}.
\end{equation}
\end{postulate}

This transition reflects a geometric compaction of the representation, in which 
the target set becomes smaller and more structured. As illustrated in 
Figure~~\ref{fig:toy}, a linear bottleneck autoencoder (top row) projects the data 
onto a full two-dimensional plane, even though the samples originate from two 
one-dimensional clusters. In contrast, the nonlinear autoencoder (bottom row) 
discovers a union of low-dimensional subspaces, each corresponding to one data 
component. Such compact target sets have exponentially fewer cardinality than 
their linear counterparts, thereby increasing the likelihood of achieving an 
isometric representation even when the reconstruction error itself does not 
decrease. This reduced target set naturally lowers the ambiguity of the mapping 
$P$, in accordance with the discussion in Summary 1 of Section \ref{appendix_sparse_representation}.

The practical implications of compact representation learning are particularly 
clear in the context of anomaly detection and hallucination, as illustrated in 
Figure~~\ref{fig:anomalies}. In the left panel, when anomaly samples fall outside 
the linear span of the learned dictionary $\mathbf{D}$, a non-compact linear 
model can identify them through large reconstruction or projection errors. 
However, in the middle panel, when anomalies lie \emph{on} the learned subspace, 
the same model fails: the anomaly becomes indistinguishable from in-distribution 
data, highlighting the inherent ambiguity of a non-compact representation. In 
contrast, the nonlinear model (right panel), which represents data as a union of 
lower-dimensional submanifolds, successfully separates the anomaly even though 
it resides on the ambient plane. We propose that this phenomenon explains a 
significant class of out-of-distribution (OOD) failures in deep learning, 
including misclassification of unseen data and hallucination effects in 
generative models.

\begin{figure}[t]
\centering

\begin{subfigure}{0.48\linewidth}
\centering
\includegraphics[width=\linewidth]{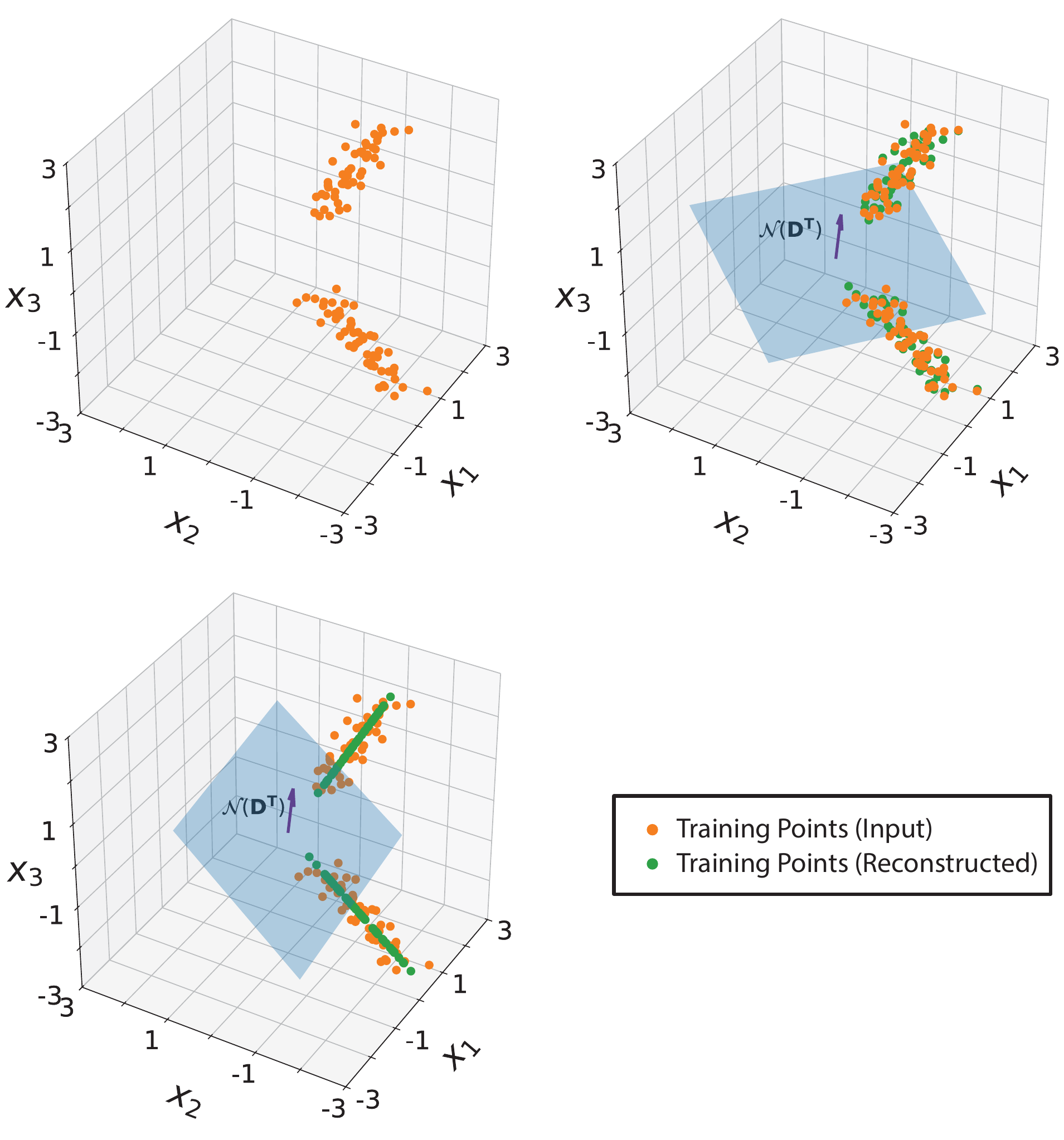}
\caption{
Toy example of representation compactness.
\textbf{Top row:} a linear bottleneck autoencoder learns a single global subspace $\mathrm{span}(D)$, which mixes distinct components and fails to preserve their intrinsic structure.
\textbf{Bottom row:} a nonlinear model learns a union of low-dimensional subspaces (i.e., flat submanifolds) $\mathrm{span}(d_1) \cup \mathrm{span}(d_2)$, preserving the structure of each component.
}
\label{fig:toy}
\end{subfigure}
\hfill
\begin{subfigure}{0.48\linewidth}
\centering
\includegraphics[width=\linewidth]{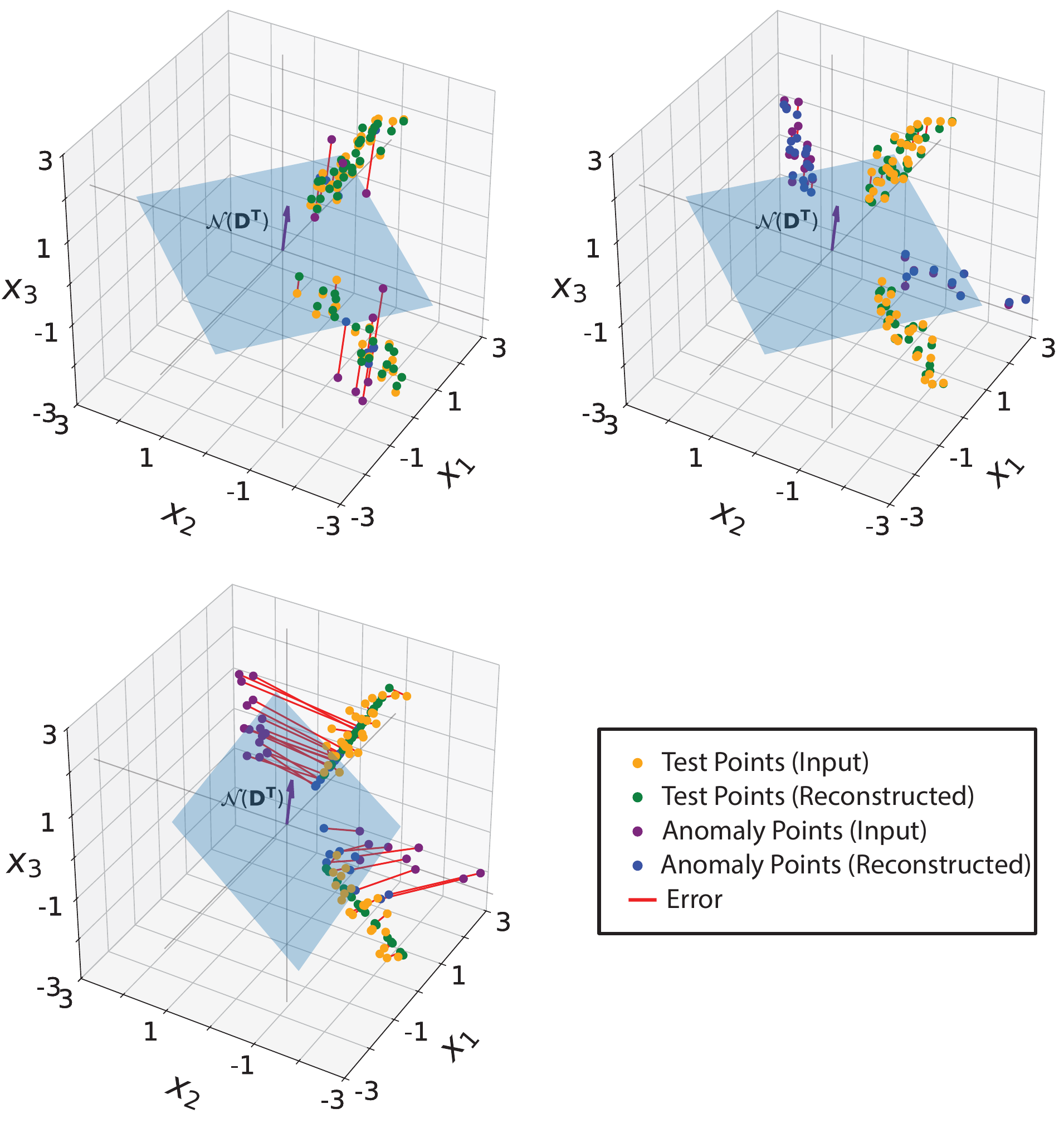}
\caption{
Effect of compact representations on anomaly detection. 
\textbf{Left:} when anomalies lie outside the learned subspace, a linear model can detect them via large Reconstruction Errors (REs). 
\textbf{Middle:} when anomalies lie within the learned subspace, the linear model fails, as anomalies yield REs comparable to in-distribution data. 
\textbf{Right:} the nonlinear model separates anomalies via projection onto a union of subspaces, where incorrect assignments yield large REs.
}
\label{fig:anomalies}
\end{subfigure}

\caption{
Compact vs. non-compact representations. 
Linear models learn a single global span, which can mix distinct structures and lead to ambiguity. 
Nonlinear models learn unions of submanifolds, producing more compact representations that preserve structure and enable robust anomaly detection under both off-subspace and on-subspace perturbations.
}
\label{fig:compactness_vs_ood}
\end{figure}

\begin{postulate}[Nonlinear Orthogonal Projection onto Submanifolds]
A neural network implements a non-linear orthogonal projection $P\colon \mathbb{R}^n \rightarrow \mathcal{M}_D$, where $\mathcal{M}_D \subseteq \mathbb{R}^n$ is the current estimation of the manifold during learning, satisfying $P^2 = P$. This projection decomposes as $P = f_D \circ f_E$, where the non-linear \emph{encoder} $f_E\colon \mathbb{R}^{n} \rightarrow \mathbb{R}^{N}$ maps the input-space to coordinates and the \emph{decoder} $f_D \colon \mathbb{R}^{N} \rightarrow \mathbb{R}^{n}$ reconstructs the input back to the ambient space $\mathbb{R}^n$. An ideal neural operator projects signals onto the union of its submanifolds rather than the manifold $\mathcal{M}_D$ itself, that is,
\(
\widehat{\mathbf{s}} \in \bigcup_{i=1} \mathcal{M}_{D_i} \quad \text{such that} \quad d(\widehat{\mathbf{s}}, \mathbf{s}) \leq \gamma,
\)
where $d$ is the ambient metric and $\gamma$ is chosen such that the metric 
projection is unique, consistent with the nonlinear projection operator defined 
in Section~\ref{sec:nonlinear-projection}.
\end{postulate}

Postulate 1 specifies the ideal structure of a learned representation: under suitable learning dynamics, the target set is expected to transition from a single manifold $\mathcal{M}_D$ to a compact union of lower-dimensional submanifolds $\mathcal{M}_{D_i}$.  Such compaction reduces the cardinality of 
the target set and thereby increases the likelihood that the learned mapping 
is injective when restricted to the data manifold. This formulation is directly motivated by the geometric insight summarized in Summary~1: reducing the effective cardinality of the target set enables stable and efficient embeddings. However, it is essential to note that Postulate~1 does not describe the \emph{nature} of the mapping that realizes this compact representation.  
It specifies where the representation should reside, but not how 
a neural network maps inputs into that structure.  Postulate~2 fills this 
conceptual gap by characterizing a neural network as a nonlinear orthogonal projection 
operator whose range is precisely the compact union of submanifolds dictated 
by Postulate~1.  Together, the two postulates state that ideal learning seeks 
a compact geometric target and that the network behaves as an nonlinear orthogonal  projection onto 
this target.

The projection nature of the learned mapping described in Postulate~2 can be 
directly observed in the toy examples of Figures~\ref{fig:toy} and 
\ref{fig:anomalies}. In the non-compact case, the neural network behaves as a piecewise orthogonal projection onto $\mathrm{span}(D)$, where $D = [d_1, d_2]$. In contrast, when the learned representation is compact, the network acts as a orthogonal projection onto a union of subspaces $\mathrm{span}(d_1) \cup \mathrm{span}(d_2)$, assigning each input to its corresponding component. Thus, the figures visually confirms that the learned operator functions 
as a projection whose range is determined precisely by the compact geometric 
structure prescribed by Postulate~1. 

\begin{postulate}[Orthogonal Complements via Residual Connection]
The projection operator $P \colon \mathbb{R}^n \rightarrow \mathcal{M}_D$ of 
Postulate~2 admits a residual decomposition
\[
    P = I - P^{\perp},
\]
where $I$ denotes the identity operator on $\mathbb{R}^n$ and 
$P^{\perp} = f_D^{\perp} \circ f_E^{\perp}$ is a learned operator approximating orthogonal projection onto the normal bundle satisfying
\[
    f_E^{\perp} \colon \mathbb{R}^n \to \mathbb{R}^{r_\perp},
    \qquad
    f_D^{\perp} \colon \mathbb{R}^{r_\perp} \to \mathbb{R}^n,
\]
with $r_\perp \ge \max_i (n - k_i)$, where $k_i = \dim(\mathcal{M}_{D_i})$ are the 
dimensions of the submanifolds described in Postulate~1. $P$ is obtained by subtracting this normal component from 
the input.
\end{postulate}

\begin{wrapfigure}{r}{0.41\linewidth}
\vspace{-10pt}
\centering
\includegraphics[width=\linewidth]{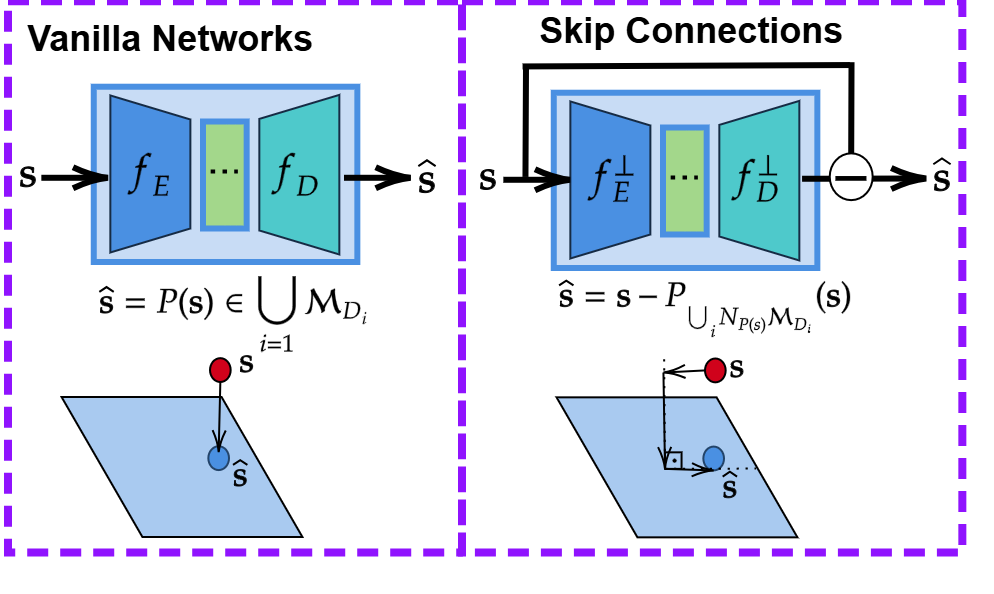}
\vspace{-14pt}
\caption{Vanilla Networks vs.\ Skip Connections.}
\label{fig:skipconnection}
\vspace{-14pt}
\end{wrapfigure}

When $\mathcal{M}_D$ is a single smooth manifold, the operator $P^{\perp}$ 
extracts components orthogonal to the tangent space $T_{P(s)}\mathcal{M}_D$, 
i.e., elements of the normal space $N_{P(s)}\mathcal{M}_D$.  In the special case 
where $\mathcal{M}_D$ is a flat $k$-dimensional subspace, this reduces to the 
classical null–space projection onto the orthogonal complement 
$\operatorname{span}(D)^\perp$, so that $P(s) = s - P^\perp(s)$ is simply the 
standard orthogonal projection onto $\operatorname{span}(D)$.  In the compact 
setting of Postulate~1, where the neural network makes an orthogonal projection to the union of 
submanifolds $\mathcal{M}_{D_i}$, the same operator can also be achieved in the residual setting as projecting 
onto the union of their normal bundles $\bigcup_i N_{P(s)}\mathcal{M}_{D_i}$.  
Thus, the residual form $P = I - P^\perp$ expresses the manifold projection by mirroring residual neural networks, where the skip connection implements the identity map and \textbf{the residual branch models orthogonal projection onto the normal spaces}.

\begin{postulate}[Recursive Application of Nonlinear Projections]
The nonlinear projection operator $P$ introduced in Postulate~1-2 may be applied 
recursively across multiple nested levels of a neural network.  At level $l$, 
the encoder $f_E^{(l)} \colon \mathbb{R}^{n_{l-1}} \to \mathbb{R}^{n_l}$ and decoder 
$f_D^{(l)} \colon \mathbb{R}^{n_l} \to \mathbb{R}^{n_{l-1}}$ implement a nonlinear 
projection onto a representation manifold $\mathcal{M}_D^{(l)} \subset 
\mathbb{R}^{n_{l-1}}$.  Under ideal learning, each such manifold admits a 
compact refinement into a union of lower-dimensional submanifolds,
\[
    \mathcal{M}_D^{(l)} 
    \xrightarrow{\;\text{Opt}\;} 
    \bigcup_{i=1}^{M_l} \mathcal{M}_{D_i}^{(l)},
\qquad
\dim(\mathcal{M}_{D_i}^{(l)}) < \dim(\mathcal{M}_D^{(l)}),
\]
mirroring the compactness principle of Postulate~1 at every level. The global encoder and decoder are then compositions of these nested operators,
\(
f_E = f_E^{(1)} \circ \cdots \circ f_E^{(L)}, \text{ and }
f_D = f_D^{(L)} \circ \cdots \circ f_D^{(1)},
\)
and the full projection operator is
\begin{equation}
    P = f_D^{(L)} \circ \cdots \circ f_D^{(0)} 
    \circ f_E^{(0)} \circ \cdots \circ f_E^{(L)}. \label{nestedEq0}
\end{equation}
\end{postulate}

Postulates~1–3 describe the geometric nature of the ideal representation learned 
by a neural network: the target set collapses to a union of lower-dimensional 
submanifolds, and the network acts as a nonlinear projection onto this compact 
structure (or onto its normal directions in the residual view).  What these 
postulates do \emph{not} explain is \emph{how} such a representation is 
progressively discovered during learning, nor why a deep network should 
spontaneously separate, prune, and refine submanifolds across layers.  
Postulate~4 addresses this missing mechanism. As seen from Eq.~\eqref{nestedEq0}, the emergence of the nested structure in Figure~\ref{fig:classicalvsPoSAnalysis} becomes clearer.

It proposes that each level $\ell$ applies its own nonlinear projection, 
refining the previous level’s representation as illustrated conceptually in 
Figure~\ref{fig:classicalvsPoSAnalysis}. Let $x^{(\ell-1)} \in \mathbb{R}^{N_{\ell-1}}$ 
denote the representation at level $\ell-1$, and $x^{(\ell)} = f_E^{(\ell)}(x^{(\ell-1)})$ 
the corresponding latent coordinates. The decoder $f_D^{(\ell)}$ maps $x^{(\ell)}$ 
back to the representation space of the previous level, yielding 
$\hat{x}^{(\ell-1)} = f_D^{(\ell)}(x^{(\ell)})$, and thereby implements a projection 
onto a refined manifold $\mathcal{M}_D^{(\ell)} \subset \mathbb{R}^{N_{\ell-1}}$. Across levels, these projections become increasingly selective: directions that are 
incoherent or orthogonal to the dominant geometric structure at level $\ell$ are 
suppressed, effectively eliminating incompatible components of the representation. 
As depth increases, the representation is progressively refined into a more compact 
union of submanifolds. In this view, each level operates on a representation space that itself admits a 
manifold structure, enabling the recursive application of the projection principle. 
Thus, a deep network is not merely a composition of functions, but a hierarchy of 
nested projection operators acting on successively refined coordinate spaces.


The four postulates establish a geometric and operational framework for understanding 
deep neural networks as hierarchical projection systems. While they are motivated by 
empirical observations, their primary value lies in the structural constraints they impose 
on learned representations and transformations.

In the following, we derive theoretical consequences of this framework. In particular, 
we show that the PoS formulation leads to fundamental implications for representation 
structure, sample complexity, and generalization. These results formalize how hierarchical 
projection and equivariance enable deep networks to efficiently represent complex data 
distributions.
\begin{wrapfigure}{r}{0.48\linewidth}
\vspace{-10pt}
\centering
\includegraphics[width=\linewidth]{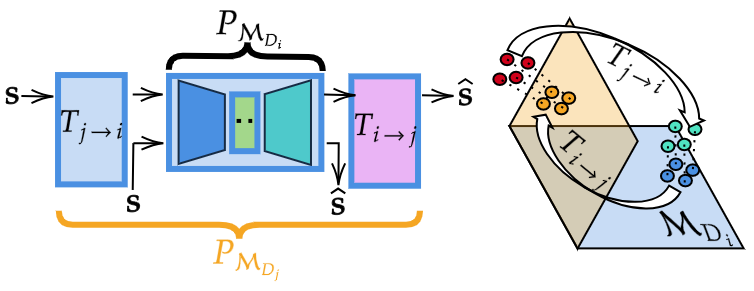}
\vspace{-14pt}
\caption{Projection on sub-manifold \(\mathcal{M}_{D_j}\) can be realized via its own encoder-decoder or via isometric mapping from \(\mathcal{M}_{D_j}\) to \(\mathcal{M}_{D_i}\).}
\label{fig:isometry_I}
\vspace{-10pt}
\end{wrapfigure}

\setcounter{remark}{0}
\begin{remark}[Isometry Invariance of Nonlinear Orthogonal Projection]
\label{Remark 1}
(See~\cite{dudek1994nonlinear}.)
Let $\mathcal{M}$ be a nonempty subset of the Euclidean space $\mathbb{R}^n$,
not necessarily a manifold; in particular, $\mathcal{M}$ may be a single smooth
submanifold or a union of such submanifolds.  
Let $T \colon \mathbb{R}^n \to \mathbb{R}^n$ be an isometry.
If $P \colon \mathbb{R}^n \to \mathcal{M}$ denotes the nonlinear orthogonal projection 
onto $\mathcal{M}$, then the conjugated mapping
\[
T \circ P \circ T^{-1}
\]
is the nonlinear orthogonal projection onto the isometric image $T(\mathcal{M})$.
\end{remark}

\begin{remark}[Projection Transfer Under Isometries]
\label{Remark 2}
Let $\mathcal{M}_{D_i},\,\mathcal{M}_{D_j} \subset \mathbb{R}^n$ be two submanifolds 
(or unions of submanifolds).  
Assume there exists an ambient isometry 
\[
T \colon \mathbb{R}^n \to \mathbb{R}^n
\qquad\text{such that}\qquad
T(\mathcal{M}_{D_i}) = \mathcal{M}_{D_j} .
\]
Define 
\[
T_{i\to j} \coloneq T|_{\mathcal{M}_{D_i}}, 
\qquad 
T_{j\to i} \coloneq T^{-1}|_{\mathcal{M}_{D_j}},
\]
so that $T_{j\to i}$ is the inverse of $T_{i\to j}$ on the submanifolds. Let $P_{\mathcal{M}_{D_i}}$ and $P_{\mathcal{M}_{D_j}}$ denote the nonlinear orthogonal 
projections onto $\mathcal{M}_{D_i}$ and $\mathcal{M}_{D_j}$, respectively.
Then, by the isometry invariance of nonlinear orthogonal projection,
\(
P_{\mathcal{M}_{D_j}}
= 
T \circ P_{\mathcal{M}_{D_i}} \circ T^{-1},
\)
and therefore on the submanifolds,
\[
P_{\mathcal{M}_{D_j}}
= 
T_{i\to j} \circ P_{\mathcal{M}_{D_i}} \circ T_{j\to i}.
\]
Thus, an ideal network representing $P_{\mathcal{M}_{D_i}}$ together with $T_{i\to j}$ 
can equivalently represent the projection onto $\mathcal{M}_{D_j}$.
\end{remark}

\begin{figure}[h]
\centering

\begin{minipage}{0.48\linewidth}
\begin{remark}[Isometry Action on Unions of Submanifolds]
\label{Remark 3}
Let $\mathcal{M} = \bigcup_{i=1}^L \mathcal{M}_{D_i} \subset \mathbb{R}^n$ be a 
finite union of embedded submanifolds, and let 
$g : \mathbb{R}^n \to \mathbb{R}^n$ be an isometry.  
Then
\[
g(\mathcal{M}) = \bigcup_{i=1}^L g(\mathcal{M}_{D_i}),
\]
and the nonlinear orthogonal projection satisfies the invariance identity
\[
P_{g(\mathcal{M})}
= 
g \circ P_{\mathcal{M}} \circ g^{-1}.
\]
Consequently, if $P_{\mathcal{M}}(x) \in \mathcal{M}_{D_i}$, then
\[
P_{g(\mathcal{M})}(g(x)) \in g(\mathcal{M}_{D_i}).
\]
Thus an isometry acts componentwise on the projection operator.
\end{remark}
\end{minipage}
\hfill
\begin{minipage}{0.48\linewidth}
\centering
\includegraphics[width=\linewidth]{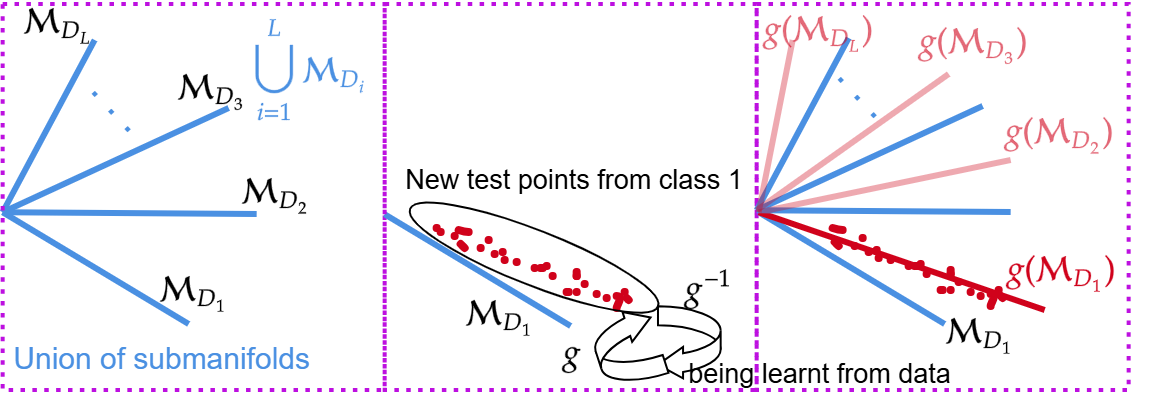}
\captionof{figure}{Isometry action on a union of submanifolds. 
Left: a finite union $\mathcal{M}=\bigcup_{i=1}^L \mathcal{M}_{D_i}$, where $\mathcal{M}_{D_1}$ is already learned as a canonical component. 
Middle: new samples (red points) do not lie on $\mathcal{M}_{D_1}$ but are assumed to belong to an isometric image $g(\mathcal{M}_{D_1})$. 
Learning $g^{-1}$ maps these samples back onto $\mathcal{M}_{D_1}$, implicitly identifying the transformed manifold $g(\mathcal{M}_{D_1})$. 
Right: by the invariance identity $P_{g(\mathcal{M})}=g\circ P_{\mathcal{M}}\circ g^{-1}$, the learned transformation acts componentwise.}
\label{fig:IsometryonSubmanifolds}
\end{minipage}

\end{figure}

A geometric interpretation of this invariance identity is depicted in Figure~~\ref{fig:IsometryonSubmanifolds}, where learning $g^{-1}$ from samples near one component induces the corresponding isometric images across the entire union.

\section{Geometric Consequences of PoS}

\subsection{Overview}
The postulates imply concrete geometric consequences beyond representation structure. 
In particular, when data are modeled as a union of submanifolds, the learned nonlinear 
projection enforces a transversal decomposition of tangent spaces, separating shared 
and residual components. Under this decomposition, orthogonality (or approximate 
incoherence) of residual directions naturally emerges as a sufficient condition for 
stable projection and submanifold selection.

This perspective provides a geometric explanation for why neural networks avoid collapsing 
representations onto the joint span and instead preserve union structure. Moreover, common 
mechanisms such as masking, noise injection, and dropout can be interpreted as enforcing 
selective annihilation of residual directions, promoting orthogonality and compactness. 
In the linearized setting, this connects to restricted isometry and mutual coherence, while 
in the nonlinear regime it extends to tangent spaces of submanifolds. 

In the following sections, we make these consequences explicit by analyzing disentanglement, 
orthogonality, and activation mechanisms such as ReLU under the PoS framework.

\subsection{Disentanglement of two Submanifolds within network}
\label{Disentanglement}

\begin{figure}[h]
\centering
\begin{minipage}{0.55\linewidth}
    \centering
    \includegraphics[width=\linewidth]{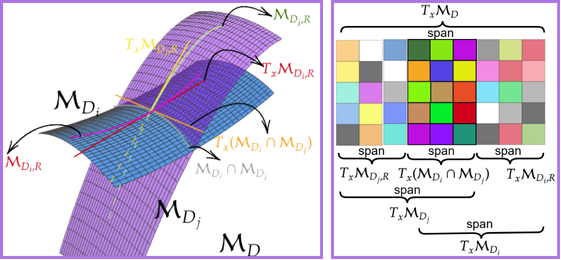}
\end{minipage}\hfill
\begin{minipage}{0.4\linewidth}
    \caption{
    Geometric disentanglement of two submanifolds. 
    Left: The tangent space at the intersection 
    $T_x(\mathcal{M}_{D_i}\cap\mathcal{M}_{D_j})$ splits into submanifold-specific 
    residual directions $T_x\mathcal{M}_{D_i,R}$ and $T_x\mathcal{M}_{D_j,R}$, which 
    a trained network pushes to be \emph{as orthogonal as possible} to ensure stable, 
    unambiguous projection. 
    Right: Linearized view where tangent spaces are approximated by subspace spans 
    $\operatorname{span}(D_i)$, linking the geometric decomposition to group sparsity models (Figure~\ref{fig:sparsity}).
    }
    \label{fig:disentanglement1}
\end{minipage}
\end{figure}
Postulates~1–3 describe the behavior of an ideal neural network after training:
the network converges to a projection operator whose range is a compact union of
low–dimensional submanifolds, realized through nonlinear projection and
residual–style decomposition.  
In the sequel, we introduce a conjecture explaining why
\emph{orthogonality of geometric components} naturally emerges from these
postulates.  
To motivate this, we begin by analyzing the geometric relationship between two
submanifolds representing, for instance, two different classes of the same
dataset and the manner in which a trained network learns to project data onto
their union.

Let $\mathcal{M}_{D_i},\,\mathcal{M}_{D_j}\subset\mathbb{R}^n$ be smooth embedded
submanifolds.  
We conjecture that, for an ideal network satisfying Postulates~1–3, the learned
projection onto $\mathcal{M}_{D_i}\cup \mathcal{M}_{D_j}$ implicitly enforces a
\emph{transversal decomposition} at every point
$x\in \mathcal{M}_{D_i}\cap\mathcal{M}_{D_j}$, namely
\[
\begin{aligned}
T_x\mathcal{M}_{D_i}
&=
T_x(\mathcal{M}_{D_i}\cap\mathcal{M}_{D_j})
\;\oplus\;
T_x\mathcal{M}_{D_i,R},
\\[4pt]
T_x\mathcal{M}_{D_j}
&=
T_x(\mathcal{M}_{D_i}\cap\mathcal{M}_{D_j})
\;\oplus\;
T_x\mathcal{M}_{D_j,R},
\end{aligned}
\]
where $T_x\mathcal{M}_{D_i,R}$ and $T_x\mathcal{M}_{D_j,R}$ denote the residual
(submanifold-specific) tangent subspaces at $x$, as illustrated in Figure~~\ref{fig:disentanglement1}. To capture this structure, consider a neural network whose encoder and decoder
are decomposed according to these geometric components:
\[
F_E = \bigl[\,F_{E,i}\;\; F_{E,j}\,\bigr], 
\qquad
F_D = 
\begin{bmatrix}
F_{D,i} \\[2pt] F_{D,j}
\end{bmatrix},
\]
where $F_E$ computes coordinate representations associated with each
submanifold, and $F_D$ reconstructs the ambient signal from these coordinates.
In an unsupervised setting, the ideal network disentangles the two submanifolds
precisely when the residual tangent directions are orthogonal: \(
T_x\mathcal{M}_{D_i,R}\;\perp\; T_x\mathcal{M}_{D_j,R}. \)
Orthogonality guarantees uniqueness of projection, masks irrelevant coordinate
components, and separates latent geometric factors. For example, if the manifolds are approximated by linear subspaces with a
dictionary \(
D = \bigl[\, D_{R_j}\;\; D_{\cap}\;\; D_{R_i} \bigr]^{\!\top}, \)
whose blocks are mutually orthogonal, then
\(
\operatorname{span}(D_{\cap}, D_{R_i})
~\text{and}~
\operatorname{span}(D_{\cap}, D_{R_j})
\)
provide linear models for $\mathcal{M}_{D_i}$ and $\mathcal{M}_{D_j}$,
respectively.  
In this setting, each sample activates only the residual block corresponding to its own submanifold, yielding clean geometric disentanglement. Nevertheless, such perfect orthogonality is rarely encountered in practice. However, approximately orthogonal representations in the residual components either in the Euclidean sense or with respect to a Riemannian metric naturally emerge in neural networks, as discussed in the sequel.

\subsection{Orthogonality, Restricted Isometry, and Masked Representation}
\label{Orthogonality}

While orthogonality constraints~\cite{orthogonality1} in deep networks have received increasing attention, most existing approaches are empirical and do not address the origin of structural orthogonality across layers. In contrast, the PoS framework offers a geometric explanation: mapping data to a union of submanifolds induces incoherent tangent representations. This perspective unifies dropout~\cite{dropout1}, noise injection~\cite{denoisedAutoencoders, noise-injection2}, masked learning~\cite{masked1, masked2}, distance-preserving mappings, and orthogonality~\cite{orthogonality1, orthogonality2} within a geometric framework. The classical RIP enforces pairwise separation, e.g., SR enforces near-orthogonality among atom subsets, and the submanifold model extends these principles to a hierarchical RIP condition on tangent bases, thereby providing a theoretical motivation for orthogonality in deep networks as we will be discussing in the following two sections.

Although an isometric representation is desirable, particularly along residual directions, guaranteeing this property or estimating the RIP constant is NP-hard, even in piecewise-linear settings such as unions of subspaces encountered in SR. Fortunately, SR theory employs a tractable surrogate: minimizing the mutual coherence \(\mu(\mathbf{D}) = \max_{i \neq j}
\frac{|\langle \mathbf{d}_i, \mathbf{d}_j \rangle|}{|\mathbf{d}_i|_2 |\mathbf{d}_j|_2},\)
where ${\mathbf{d}_i}$ denotes the atoms of the dictionary $\mathbf{D}$. Under the union-of-submanifolds assumption for neural networks, as formalized in Postulate~I, transversality between paired submanifolds leads to the natural emergence of incoherence in the tangent bases of residual components, as we will see in an illustrative example in the sequel.

Let us start from a simpler union-of-subspaces setting, in which the dictionary 
$\mathbf{D} \in \mathbb{R}^{n \times N}$ is partitioned into groups of atoms 
$\mathbf{D}_{\Lambda_i} \in \mathbb{R}^{n \times k_i}$, with $k_i < n$. 
The following analysis extends naturally to the union-of-subspaces where more than two subspaces exist, or even to the union-of-submanifolds setting, 
where each tangent space locally satisfies 
$T_x \mathcal{M}_{D_i} \approx \operatorname{span}(\mathbf{D}_{\Lambda_i})$. Suppose we are given a test sample $\mathbf{s}$ that lies in close proximity to class $i$, 
meaning that $\mathbf{s}$ is near the subspace 
$\operatorname{span}(\mathbf{D}_{\Lambda_i})$. Let $\mathbf{F}_{\Lambda_i} \in \mathbb{R}^{(n-k_i) \times n}$ 
denote the left annihilator (orthogonal complement) of 
$\mathbf{D}_{\Lambda_i}$, satisfying
\(
\mathbf{F}_{\Lambda_i} \mathbf{D}_{\Lambda_i} = \mathbf{0}.
\) Any signal $\mathbf{s}$ admits a decomposition into its components 
along the subspace $\operatorname{span}(\mathbf{D}_{\Lambda_i})$ 
and its orthogonal complement: \( \mathbf{s}
    =
    \mathbf{D}_{\Lambda_i} \mathbf{x}
    +
    \mathbf{F}_{\Lambda_i}^{\!\top} \mathbf{c} \), provided the rows of $\mathbf{F}_{\Lambda_i}$ are orthonormalized, where \( \mathbf{c} = 
    \mathbf{F}_{\Lambda_i} \mathbf{s}   \) is the coefficient vector of the component along left-nullspace, \(\operatorname{span}(\mathbf{F}_{\Lambda_i}^{\!\top})\). In the unsupervised context, when training data are drawn from a single class \(i\), an encoder–decoder architecture estimates the coefficient vector \(\widehat{\mathbf{x}}\) via the encoder \(f_E\), and the decoder \(f_D\) reconstructs the associated subspace representation \(\widehat{\mathbf{D}}_{\Lambda_i}\). Thus, the learning task reduces to identifying a single linear subspace. When data from an additional class \(j\) are included, a central challenge emerges: preventing the network from representations onto the joint span \(\operatorname{span}(\mathbf{D}_{\Lambda_i}, \mathbf{D}_{\Lambda_j})\) rather than preserving the union \(\operatorname{span}(\mathbf{D}_{\Lambda_i}) \cup \operatorname{span}(\mathbf{D}_{\Lambda_j})\). The key question is what mechanisms maintain separation between subspaces in the learned representation, thereby guaranteeing the union-of-subspaces structure.    

Let $\operatorname{span}(\mathbf{D}_{\Lambda_i})$ and 
$\operatorname{span}(\mathbf{D}_{\Lambda_j})$ be subspaces of 
$\mathbb{R}^n$ with nonzero mutual angle, and 
$\mathbf{F}_{\Lambda_i}, \mathbf{F}_{\Lambda_j}$ their respective left-nullspace operators satisfying 
$\mathbf{F}_{\Lambda_i}\mathbf{D}_{\Lambda_i}=0$ and 
$\mathbf{F}_{\Lambda_j}\mathbf{D}_{\Lambda_j}=0$. 
The orthogonal complements of the two subspaces are therefore 
$\operatorname{span}(\mathbf{F}_{\Lambda_i}^{\top})$ and 
$\operatorname{span}(\mathbf{F}_{\Lambda_j}^{\top})$. 
It is clear that the joint dictionary 
$\mathbf{D} = [\mathbf{D}_{\Lambda_i}, \mathbf{D}_{\Lambda_j}]$ 
has orthogonal complement $\operatorname{span}(\mathbf{F}_{\Lambda_{ij}}^{\top})$ satisfying
\(
\operatorname{span}(\mathbf{F}_{\Lambda_{ij}}^{\top}) 
=
\operatorname{span}(\mathbf{F}_{\Lambda_i}^{\top}) 
\cap 
\operatorname{span}(\mathbf{F}_{\Lambda_j}^{\top}).
\)
Therefore, collapsing the nullspaces to $\mathbf{F}_{\Lambda_{ij}}$ eliminates the union structure; preservation requires $\mathbf{F}_{\Lambda_i}$ and $\mathbf{F}_{\Lambda_j}$ to remain distinct. Reconstruction loss alone,
\(
\| \mathbf{s} - f_D(f_E(\mathbf{s})) \|_2,
\)
is insufficient to enforce preservation of class-specific complement directions as discussed in the sequel.

Assume the query signal $\mathbf{s}$ belongs to class \(i\), i.e., it lies in close proximity to the subspace $\operatorname{span}(\mathbf{D}_{\Lambda_i})$. Let $\mathbf{x}^*_{\Lambda_i} \in \mathbb{R}^{k_i}$ be the unique solution corresponding to the closest projection $\widetilde{\mathbf{s}} \in \mathbb{R}^{n}$ of $\mathbf{s}$ onto $\mathbf{D}_{\Lambda_i}$, i.e., $\widetilde{\mathbf{s}} = \mathbf{D}_{\Lambda_i}(\mathbf{D}_{\Lambda_i}^{\top}\mathbf{D}_{\Lambda_i})^{-1}\mathbf{x}^*_{\Lambda_i}$. Then the query $\mathbf{s}$ can be written as
\(
\mathbf{s}
=
\mathbf{D}_{\Lambda_i}\widehat{\mathbf{x}}
+
\mathbf{z}
=
\mathbf{D}_{\Lambda_i}
(\mathbf{D}_{\Lambda_i}^{\top}\mathbf{D}_{\Lambda_i})^{-1}
\mathbf{x}^*_{\Lambda_i}
+
\mathbf{F}_{\Lambda_{ij}}^{\top}\mathbf{c}
+
\mathbf{F}_{\Lambda_i,r}^{\top}\mathbf{c}_r,
\)
where
\(
\widehat{\mathbf{x}}
=
(\mathbf{D}_{\Lambda_i}^{\top}\mathbf{D}_{\Lambda_i})^{-1}\mathbf{x}^*_{\Lambda_i},
\)
and $\mathbf{z}$ lies in the orthogonal complement of $\operatorname{span}(\mathbf{D}_{\Lambda_i})$. The complement basis is decomposed as $\mathbf{F}_{\Lambda_i}=[\mathbf{F}_{\Lambda_{ij}};\mathbf{F}_{\Lambda_i,r}]$. When a first-level encoder
\(
\mathbf{E}
=
\bigl[\mathbf{D}_{\Lambda_i}\;\; \mathbf{D}_{\Lambda_j}\bigr]^{\top}
\)
is applied to $\mathbf{s}$, we obtain
\(
\widetilde{\mathbf{x}}
=
\begin{bmatrix}
\widetilde{\mathbf{x}}_{\Lambda_i} \\
\widetilde{\mathbf{x}}_{\Lambda_j}
\end{bmatrix},
\)
where
\(
\widetilde{\mathbf{x}}_{\Lambda_i}
=
\mathbf{x}^*_{\Lambda_i},
\)
and $\widetilde{\mathbf{x}}_{\Lambda_j}$ has reduced energy, i.e.,
\(
\|\widetilde{\mathbf{x}}_{\Lambda_j}\|_2
\le
\frac{\theta_{k_i,k_j}}{1-\delta_{\Lambda_i}}
\|\mathbf{x}^*_{\Lambda_i}\|_2
+
\sqrt{1-\theta_{k_i,k_j}^2}\,\|\mathbf{c}_r\|_2,
\)
where $\theta_{k_i,k_j}$ is the restricted orthogonality constant between the two subspaces (see Appendix~\ref{app:roc} for the definition). The proof of the first term is straightforward:
\(
\mathbf{D}_{\Lambda_i}^{\top}
\left(
\mathbf{D}_{\Lambda_i}
(\mathbf{D}_{\Lambda_i}^{\top}\mathbf{D}_{\Lambda_i})^{-1}
\mathbf{x}^*_{\Lambda_i}
+
\mathbf{F}_{\Lambda_i}^{\top}\mathbf{c}
\right)
=
\mathbf{x}^*_{\Lambda_i},
\)
since
\(
\mathbf{D}_{\Lambda_i}^{\top}\mathbf{F}_{\Lambda_i}^{\top}=0.
\)
(See Appendix~\ref{app:crossproof} for the proof of the second term.) It is clear that for case one, when the two subspaces are orthogonal, i.e.,
\(
\theta_{k_i,k_j}=0,
\)
and the component $\mathbf{F}_{\Lambda_i,r}^{\top}\mathbf{c}_r$ does not exist, the reconstructed signal
\(
\hat{\mathbf{s}}=\mathbf{D}\widetilde{\mathbf{x}}=\widetilde{\mathbf{s}}
\)
lies entirely in
\(
\operatorname{span}(\mathbf{D}_{\Lambda_i})
\)
instead of
\(
\operatorname{span}(\mathbf{D}).
\)
However, in general it is very rare in practice for data to exhibit such ideal orthogonality in Euclidean space. Moreover, there is no explicit mechanism that encourages the encoder to further annihilate directions in the remaining nullspace component
\(
\operatorname{span}(\mathbf{F}_{\Lambda_i}^{\top})
\setminus
\operatorname{span}(\mathbf{F}_{\Lambda_{ij}}^{\top}).
\)
In particular, the component
\(
\widetilde{\mathbf{x}}_{\Lambda_j}
\)
may still retain non-negligible energy, which yields a reconstructed output
\(
\hat{\mathbf{s}}
\in
\operatorname{span}(\mathbf{D}),
\)
e.g., a projection onto the entire plane rather than onto the union of the corresponding bases, as can be seen in the toy example (non-compact learning) in Figures~\ref{fig:toy} and 
\ref{fig:anomalies}.

\begin{figure}[h]
    \centering
 \includegraphics[width=0.99\linewidth]{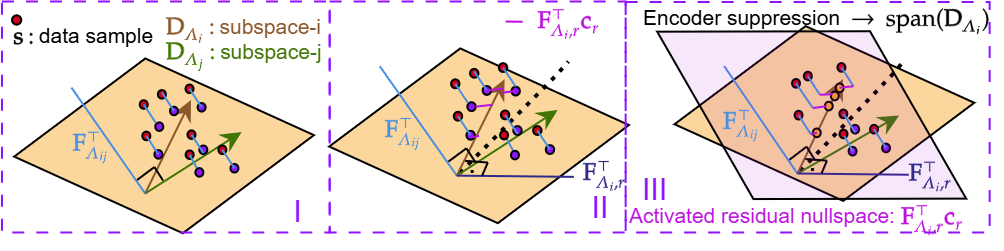}
    \vspace{-0.3cm}
    \caption{Geometric illustration of residual-nullspace interference and selective annihilation. \textbf{Structured degradation reveals hidden geometric directions that are otherwise suppressed, enabling the network to identify and eliminate residual nullspace components.}
(I) Projection onto the joint span $\operatorname{span}(\mathbf{D})$ removes only the shared nullspace 
$\operatorname{span}(\mathbf{F}_{\Lambda_{ij}}^{\top})$, leaving class-specific residual directions unresolved. 
(II) A sample from class $i$ generally contains components along both $\mathbf{F}_{\Lambda_{ij}}^{\top}$ and the residual nullspace 
$\mathbf{F}_{\Lambda_i,r}^{\top}$, causing coefficient leakage $\widetilde{\mathbf{x}}_{\Lambda_j}$. 
(III) Structured degradation activates residual nullspace directions, forcing the encoder to suppress them and thereby encouraging projection onto the correct subspace $\operatorname{span}(\mathbf{D}_{\Lambda_i})$.}
    \label{fig:maskedLearning1}
\end{figure}

Crucially, components that remain latent under standard reconstruction become observable once perturbed, either in the input space or within intermediate latent representations, forcing the encoder to explicitly suppress them across levels. To induce selective annihilation during unsupervised training, the input is degraded as 
$\mathbf{s}_{\mathrm{deg}}=\mathbf{s}+\mathbf{z}_{\mathrm{deg}}$ and the reconstruction loss 
$\|\mathbf{s}-f_D(f_E(\mathbf{s}_{\mathrm{deg}}))\|_2$ is minimized. Using the previous decomposition, we write
\(
\mathbf{s}_{\mathrm{deg}}
=
\mathbf{D}_{\Lambda_i}(\mathbf{D}_{\Lambda_i}^{\top}\mathbf{D}_{\Lambda_i})^{-1}\mathbf{x}^*_{\Lambda_i}
+
\mathbf{F}_{\Lambda_{ij}}^{\top}(\mathbf{c}+\mathbf{c}_{\mathrm{deg},ij})
+
\mathbf{F}_{\Lambda_i,r}^{\top}(\mathbf{c}_r+\mathbf{c}_{\mathrm{deg},r}),
\)
where $\mathbf{z}_{\mathrm{deg}}=\mathbf{F}_{\Lambda_{ij}}^{\top}\mathbf{c}_{\mathrm{deg},ij}+\mathbf{F}_{\Lambda_i,r}^{\top}\mathbf{c}_{\mathrm{deg},r}$ denotes the injected degradation component in the orthogonal complement of $\operatorname{span}(\mathbf{D}_{\Lambda_i})$. The reconstruction objective implicitly forces the encoder to suppress the degradation component, i.e.,
\(
\mathbf{E}\bigl(
\mathbf{F}_{\Lambda_i,r}^{\top}\mathbf{c}_{\mathrm{deg},r}
\bigr)\rightarrow 0,
\)
thereby activating annihilating directions inside the remaining nullspace
\(
\operatorname{span}(\mathbf{F}_{\Lambda_i}^{\top})
\setminus
\operatorname{span}(\mathbf{F}_{\Lambda_{ij}}^{\top}).
\)
When the degradation subspace is structured within this residual nullspace, the leakage term
\(
\widetilde{\mathbf{x}}_{\Lambda_j}
\)
is further attenuated, preventing mapping the input onto the joint span
\(
\operatorname{span}(\mathbf{D}_{\Lambda_i},\mathbf{D}_{\Lambda_j})
\)
and encouraging projection onto the desired union structure. Thus, structured degradation provides a geometric mechanism for enforcing compact representations consistent with the compact learning axiom. 
This degradation can be applied either in the input space (e.g., masked autoencoders) or across intermediate latent representations in a nested manner (as in Postulate~4), e.g., via dropout. 
\textbf{In summary, structured degradation reveals latent residual nullspace directions and forces their selective suppression.}

In practice, the degradation component $\mathbf{z}_{\mathrm{deg}}$ cannot be assumed to lie in $\operatorname{span}(\mathbf{F}_{\Lambda_i}^{\top})$, since both $\operatorname{span}(\mathbf{D}_{\Lambda_i})$ and its complement $\operatorname{span}(\mathbf{F}_{\Lambda_i}^{\top})$ are unknown and must be learned from data. Instead, degradation is sampled from a design subspace $\mathcal{U}$ such that $\mathbf{z}_{\mathrm{deg}}\in\mathcal{U}$, where $\mathcal{U}$ is chosen to contain residual directions with high probability while remaining separated from signal subspaces. For faithful recovery, the encoder must suppress the perturbation while preserving the signal component, i.e., $\mathbf{E}\mathbf{z}_{\mathrm{deg}}\approx\mathbf{0}$, which implicitly encourages $\mathcal{U}$ to align with the residual space, $\mathcal{U}\approx\operatorname{span}(\mathbf{F}_{\Lambda_i}^{\top})$ for samples from class $i$. The geometry of $\mathcal{U}$ is therefore critical: isotropic noise provides weak directional selectivity, while overly concentrated perturbations may suppress informative components. In contrast, structured degradations such as masking or band-limited corruption restrict $\mathcal{U}$ to controlled directions, promoting selective annihilation of residual subspaces without collapsing informative components. This perspective unifies operators such as dropout, structured noise injection, and masked autoencoding as mechanisms that induce controlled annihilation of residual directions; motivated by this view, in the Section~\ref{sec:ZeroShotECG} we advocate randomized multi-scale degradation, where varying masking supports increase the probability of alignment with true residual nullspaces while preserving signal-supporting directions.

\begin{Summary}{}{secondsummary}
\label{secondsummary}
\textbf{Although projection onto a union of subspaces is desirable, reconstruction objectives alone may collapse representations onto the joint span, failing to separate submanifold components,} i.e., residual nullspace directions induce leakage across subspaces. This ambiguity is resolved by structured degradation, which reveals and selectively suppresses these directions, thereby enforcing projection onto the correct subspace and enabling compact, disentangled representations.
\end{Summary}

\subsection{ReLU and Compact Representation via Angular Separation}
\label{app:CompactRepresentation}
We illustrate how compact representations emerge in a simple setting using toy data sampled around two one-dimensional subspaces with basis vectors $\mathbf{d}_1,\mathbf{d}_2 \subset \mathbb{R}^3$, as shown in Figure~~\ref{fig:toy}(a). Let a sample $\mathbf{s}$ from class $i$ admit the decomposition $\mathbf{s} = \mathbf{D}_{\Lambda_i}(\mathbf{D}_{\Lambda_i}^{\top}\mathbf{D}_{\Lambda_i})^{-1}\mathbf{x}^*_{\Lambda_i} + \mathbf{F}_{\Lambda_{ij}}^{\top}\mathbf{c} + \mathbf{F}_{\Lambda_i,r}^{\top}\mathbf{c}_r$, consisting of the component in the class subspace $\operatorname{span}(\mathbf{D}_{\Lambda_i})$, the shared nullspace component, and the class-specific residual nullspace. Applying the linear encoder $\mathbf{E}=[\mathbf{D}_{\Lambda_i}\;\mathbf{D}_{\Lambda_j}]^{\top}$ produces coefficients $\widetilde{\mathbf{x}}=[\widetilde{\mathbf{x}}_{\Lambda_i}^{\top}\;\widetilde{\mathbf{x}}_{\Lambda_j}^{\top}]^{\top}$. When the angle between the two subspaces exceeds $90^\circ$, the cross-subspace coefficients become negative, allowing ReLU to suppress them and enforce projection onto the correct subspace. \footnote{When the angle between the two subspaces exceeds $90^\circ$, the cross-subspace coefficient $\widetilde{\mathbf{x}}_{\Lambda_j}$ becomes negative provided that the residual term $\mathbf{F}_{\Lambda_i,r}^{\top}\mathbf{c}_r$ is sufficiently small, and the ReLU activation therefore annihilates this component.} A symmetric argument applies when the sample lies near $\operatorname{span}(D_{\Lambda_j})$, leading to suppression of $\tilde{x}_{\Lambda_i}$. Consequently, the encoder activates only the coefficients associated with the correct subspace, mapping samples onto the union of subspaces rather than their joint span, as illustrated in Figure~~\ref{fig:toy}(a-bottom row). \textbf{ReLU promotes compact representations precisely when subspaces are sufficiently separated, i.e., when their angle induces negative cross-coefficients that are suppressed to enforce projection onto the correct subspace.}

Consider now the case where the angle between the two subspaces is smaller than $90^\circ$, as illustrated in Figure~~\ref{fig:rotation_module}(a). In this regime the cross-subspace coefficients produced by the encoder $\mathbf{E}=[\mathbf{D}_{\Lambda_i}\;\mathbf{D}_{\Lambda_j}]^{\top}$ are no longer guaranteed to be negative. Consequently, the ReLU activation fails to annihilate $\widetilde{\mathbf{x}}_{\Lambda_j}$ for samples near $\operatorname{span}(\mathbf{D}_{\Lambda_i})$, and similarly fails to suppress $\widetilde{\mathbf{x}}_{\Lambda_i}$ for samples near $\operatorname{span}(\mathbf{D}_{\Lambda_j})$. As a result, both coefficient groups remain active and the encoder behaves effectively as a linear projection onto the joint span $\operatorname{span}(\mathbf{D}_{\Lambda_i},\mathbf{D}_{\Lambda_j})$, producing the non-compact representation shown in Figure~~\ref{fig:rotation_module}(a).

\begin{figure}[h]
    \centering
    \includegraphics[width=0.55\linewidth]{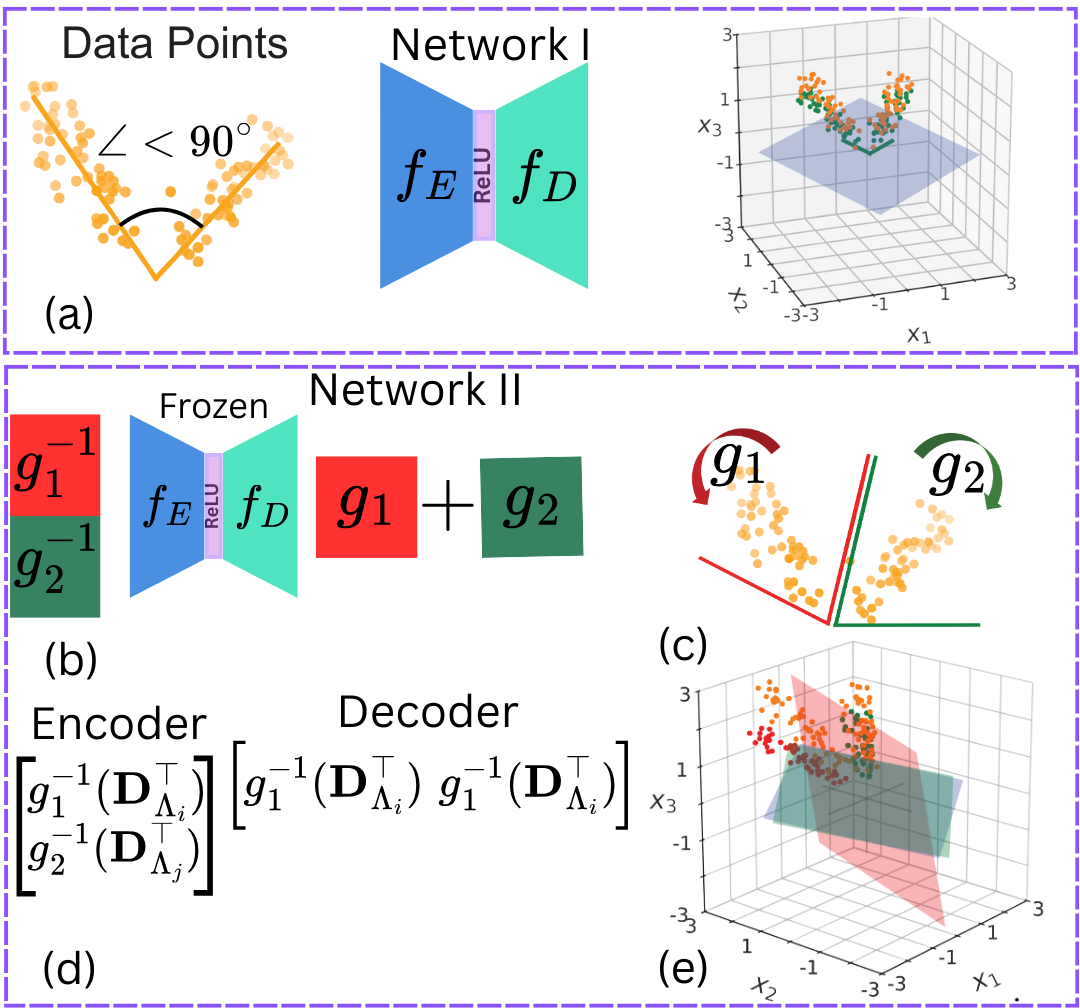}
    \vspace{-0.1cm}
    \caption{(a) When the angle between the two subspaces is $<90^\circ$, a ReLU autoencoder projects samples onto the joint span, yielding a non-compact representation. (b–d) Learnable transformations $g_1^{-1}, g_2^{-1}$ are introduced before the frozen network with corresponding adjoints in the decoder. (c–e) These transformations reorient the data so that the samples align with two distinct subspaces, enabling a compact representation.}
    \label{fig:rotation_module}
\end{figure}

Building on Remarks~1-3, which establish that nonlinear orthogonal projection is invariant under ambient isometries, we interpret the architecture in Figure~~\ref{fig:rotation_module}(b) as learning such isometric pre-transformations prior to projection. Specifically, two learnable $3\times3$ encoder blocks are placed in parallel in front of the pretrained network, while their adjoint mappings appear serially in the decoder through summation, consistent with the transversality relation in Eq.~(4). If the resulting $3\times6$ pre-network learns transformations approximating $g_1^{-1}$ and $g_2^{-1}$ that map samples close to the learned mother subspace, then by Remark~3 the overall effect is equivalent to projecting onto the isometric images $g_1(\operatorname{span}(\mathbf{D}_{\Lambda_i},\mathbf{D}_{\Lambda_j}))$ and $g_2(\operatorname{span}(\mathbf{D}_{\Lambda_i},\mathbf{D}_{\Lambda_j}))$. As illustrated in Figure~~\ref{fig:rotation_module}(c), these transformations rotate the coordinate system in different directions, producing four effective bases: two become closer and form the intersection of the transformed planes, while the remaining two separate beyond $90^\circ$ and constitute residual bases. ReLU therefore operates on these residual directions as described in the Disentanglement and Orthogonality sections, suppressing cross-subspace coefficients even when the original representation fails to realize a union-of-subspaces in the two-dimensional feature space. Subsequent feature expansion ($3\times6$ followed by $6\times4$) lifts the representation to a higher-dimensional space where the samples align with one of the two transformed subspaces, as shown in Figure~~\ref{fig:rotation_module}(e). During unsupervised training we additionally mask one of the three input coordinates for each sample, which activates the residual directions and encourages orthogonality between them, reinforcing separation of the learned subspaces.

\begin{Summary}{}{summary3}
\label{summary3}
\textbf{Neural networks can be understood as systems that enforce compact representations either by reorienting the data manifold or by expanding the coordinate system.}
Classical folding interpretations view deep networks as progressively folding affine regions of the input space. In the PoS framework, this behavior is unified through two complementary mechanisms governed by Postulate~4.

When encoder–decoder modules are added outside the learned network (e.g., Figure~\ref{fig:rotation_module}), they implement approximate isometries that reorient the data manifold prior to projection, increasing angular separation of residual directions. When modules are inserted inside the network (Figure~\ref{fig:classicalvsPoSAnalysis}), they act as coordinate transformations in latent space, expanding the representation into an over-complete system. In this regime, tangent bases become increasingly orthogonal, yielding separable submanifolds and structured sparsity (e.g., group sparsity as illustrated in Figure~~\ref{fig:sparsity}). These two perspectives are mathematically equivalent: what appears as manifold reorientation from an external viewpoint is precisely coordinate expansion and orthogonalization in the latent space.
\end{Summary}

\begin{figure}[t]
\centering
\includegraphics[width=0.48\linewidth]{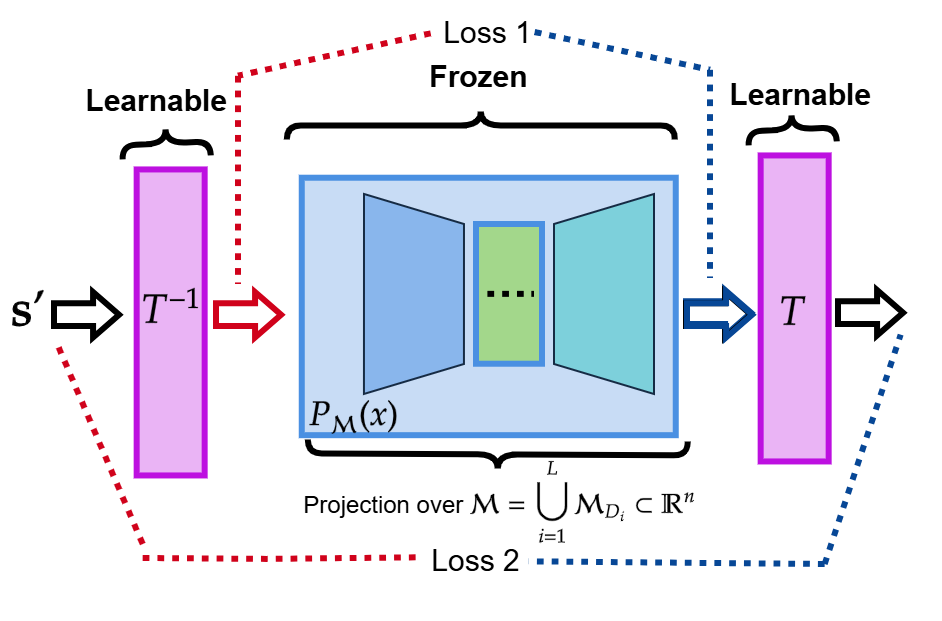}
\hfill
\includegraphics[width=0.48\linewidth]{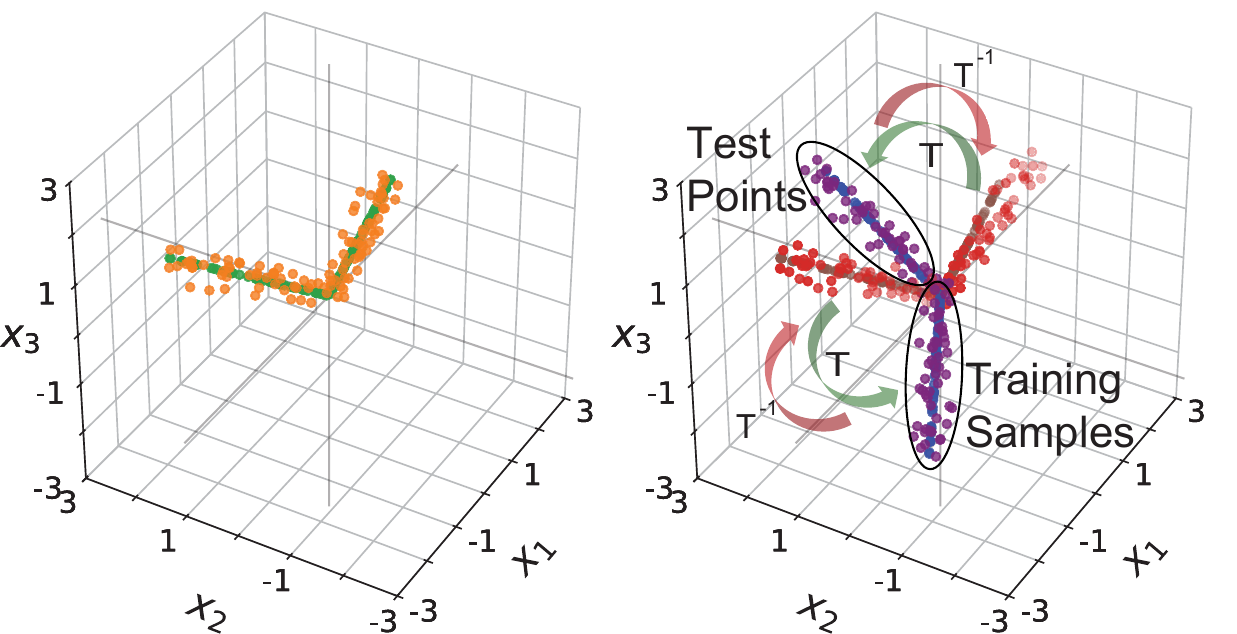}
\vspace{-0.2cm}
\caption{Isometric folding for out-of-union samples. A learnable transform $T^{-1}$ folds inputs toward the nearest submanifold before projection onto $\mathcal M$, while $T$ maps them back, yielding projection onto the transformed union $T(\mathcal M)$.}
\label{fig:IsometryUnion}
\end{figure}

The data and its mapped representation shown in Figure~~\ref{fig:rotation_module}(b) correspond to the compact network introduced in Figure~~\ref{fig:compactness_vs_ood}, where the encoder--decoder learns the projection onto a union of submanifolds. As illustrated in Figure~~\ref{fig:rotation_module}(c), when new test samples arrive that do not lie on this learned union, a nested architecture of the form shown in Figure~~\ref{fig:rotation_module}(a) can be employed to learn a transformation that folds the input data toward the closest submanifold before projection. Let $T^{-1}$ denote the learned transformation and $P_{\mathcal M}$ the projection onto $\mathcal{M}=\bigcup_i \mathcal{M}_{D_i}$. The transformation is trained using the folding loss $J_{\text{fold}}(\theta_T)=\|T^{-1}(s')-P_{\mathcal M} \circ T^{-1}(s')\|^2$, which encourages the transformed samples to lie on the learned union $\mathcal{M}$. In addition, a representation loss $J_{\text{rep}}(\theta_T)=\|s'-T \circ  P_{\mathcal M} \circ  T^{-1}(s')\|^2$ can be introduced to measure reconstruction error on the learned submanifolds. Although both objectives can be used jointly, in practice minimizing $J_{\text{fold}}$ alone is sufficient to enforce the desired folding behavior, as will be demonstrated in Section~\ref{Results}. The adjoint transformation $T$ maps the folded samples back to the original space, effectively learning projection onto the transformed union $T(\mathcal{M})$. Consequently, even if the additional samples arise only from variations near a single component $\mathcal{M}_{D_i}$, the learned transformation implicitly induces the corresponding images $T(\mathcal{M}_{D_j})$ for the remaining components, yielding $T(\mathcal{M})=\bigcup_i T(\mathcal{M}_{D_i})$, consistent with the isometry action described in Remark~3 and illustrated in Figure~~\ref{fig:IsometryonSubmanifolds}.

\section{From Union of Submanifolds to Locally Trivial Fibrations}
\label{F1}

\begin{figure}[t]
\centering
\begin{minipage}{0.55\linewidth}
    \centering
    \includegraphics[width=\linewidth]{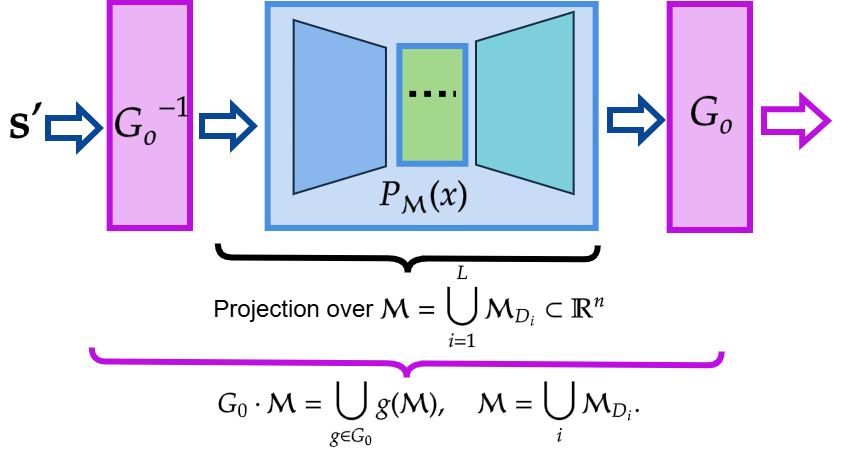}
\end{minipage}\hfill
\begin{minipage}{0.4\linewidth}
    \caption{
    Group-action view of the learned representation: projection onto the canonical manifold union $\mathcal{M}$ and its orbit $G\!\cdot\!\mathcal{M}$ generated by learnable transformations.
    }
    \label{fig:locallytrivial}
\end{minipage}
\end{figure}
Remark~3 shows that a learnable isometry $g\colon \mathbb{R}^n\!\to\!\mathbb{R}^n$ acts componentwise on a union of submanifolds 
\(
\mathcal{M}=\bigcup_{i=1}^{L}\mathcal{M}_{D_i}\subset\mathbb{R}^n,
\)
producing the transformed union
\(
g(\mathcal{M})=\bigcup_{i=1}^{L} g(\mathcal{M}_{D_i}),
\)
while the nonlinear projection satisfies the equivariance identity
\(
P_{g(\mathcal{M})}=g\circ P_{\mathcal{M}}\circ g^{-1}.
\)
In Section \ref{Orthogonality}, we further showed how multiple learned isometries 
$g_k : \mathbb{R}^n \to \mathbb{R}^n$ can act on a canonical “mother” manifold 
(or a union of manifolds) to generate additional components, as illustrated 
in Figure~~15. Each transformation effectively creates a new manifold component $g_k(\mathcal{M}_{D_i})$, corresponding to a new class in the representation space. In realistic settings, the number of such components may be extremely large or even effectively continuous, making explicit enumeration impractical. Instead, the structure induced by these transformations is better understood through symmetry. Rather than enumerating infinitely many submanifolds, this perspective shows that they arise from a canonical manifold (or canonical union) through symmetry transformations. Let $G$ be a group of learnable isometries acting on $\mathbb{R}^n$. The representation then naturally forms the orbit (see Appendix~\ref{app:group_action} for definitions)
\[
G\!\cdot\!\mathcal{M}
=
\bigcup_{g\in G} g(\mathcal{M})=
\bigcup_{g \in G} \; \bigcup_{i} g(\mathcal{M}_{D_i}),
\]
which can be interpreted as a collection of manifolds generated from a canonical component through group actions. Here, $G$ may represent a smooth transformation group acting continuously on the representation space, while in practical or finite settings it may be approximated by a discrete or discretized subset corresponding to a sampled symmetry structure. \textbf{The union-of-submanifolds model is a special case of a symmetry-generated orbit, where discrete components correspond to sampled instances of an underlying continuous family governed by group actions.}

This viewpoint admits a more compact geometric description. Recall that $\mathcal{M}=\bigcup_i \mathcal{M}_{D_i}$ denotes the learned union of submanifolds and let $G$ be a smooth group of transformations acting on the representation space. The family of manifolds generated by these transformations forms the orbit $G\!\cdot\!\mathcal{M}=\bigcup_{g\in G} g(\mathcal{M})$. Rather than treating each $g(\mathcal{M}_{D_i})$ as an independent component, the orbit can be organized through a fiber structure induced by the group action. In particular, if the group action is smooth (e.g., when $G$ is a Lie group), then for a neighborhood $U \subset G$ the orbit locally satisfies $\pi^{-1}(U) \simeq U \times \mathcal{M}$, where $\pi(gx)=g$ denotes the canonical projection. Thus the representation space locally decomposes into intrinsic coordinates on the canonical manifold $\mathcal{M}$ and transformation coordinates parameterized by $G$. This local product structure naturally leads to the notion of a \emph{locally trivial fibration} (see Appendix~\ref{localfibration}), providing a compact description of how symmetry-induced transformations generate the family of learned manifolds. 


\section{Fundamental theorem of deep learning}
\label{app:DeepLearningTheorem}

Building on the orbit–fibration structure introduced above, the same construction can be applied recursively. Let $P_{\mathcal M}$ denote the projection onto the canonical union of submanifolds $\mathcal M=\bigcup_i\mathcal M_{D_i}$. A family of learnable transformations $G_0$ acting on the representation space generates the orbit $G_0\!\cdot\!\mathcal M=\bigcup_{g\in G_0} g(\mathcal M)$, producing additional manifold components. Since the result is again a union of manifolds, the same mechanism can be applied iteratively with new transformation families $G_1,G_2,\ldots, G_L$ as shown in Fig~\ref{fig:DNNs}. Consequently, a deep architecture constructs manifolds in a hierarchical manner, where each stage expands the representation by generating new orbits of the previously obtained structure. In the locally trivial view developed in the previous subsection, the representation space therefore admits a multi-level product structure
\[
\pi^{-1}(U_L \times \cdots \times U_0)\;\simeq\;U_L\times\cdots\times U_0\times\mathcal M,
\]
where $U_\ell\subset G_\ell$ denotes a neighborhood in the transformation manifold at level $\ell$. Thus depth corresponds to progressively introducing new transformation coordinates, yielding a hierarchical family of manifolds generated from a canonical component.

\begin{figure}[h]
    \centering
    \includegraphics[width=0.6\linewidth]{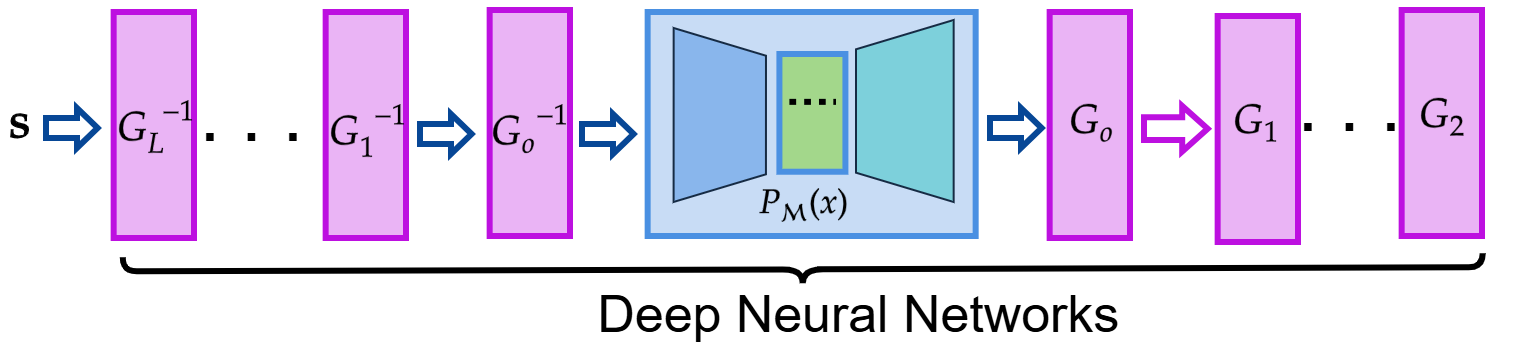}
    \vspace{-0.2cm}
    \caption{Deep networks as hierarchical manifold generators: successive transformation families generate new manifold components from a canonical union $\mathcal{M}$.}
    \label{fig:DNNs}
\end{figure}

To quantify the implications of the hierarchical manifold construction introduced above, we analyze the number of samples required to represent the generated family of manifolds.

\textbf{Deep neural networks reduce the sample complexity of learning a family of manifolds from multiplicative to additive scaling by exploiting hierarchical symmetry structure.} This reveals why deep architectures can generalize across exponentially many transformations while requiring only a linear growth in data.

The following theorem makes this reduction explicit; a complete derivation is provided in Appendix~\ref{appendix:samplecomplexity}.

\setcounter{theorem}{0}
\begin{theorem}[Fundamental Theorem of Deep Learning]
Let $\mathcal M=\bigcup_i \mathcal M_{D_i}$ be a union of compact $k$-dimensional submanifolds and let $G_0,G_1,\ldots,G_L$ denote successive families of learnable transformations acting on the representation space. Classical non-hierarchical learning methods that treat each transformed manifold independently require sampling all components of the generated family
\[
\mathcal M^{(L)}=
G_L\cdot(G_{L-1}\cdot(\cdots(G_0\cdot\mathcal M)\cdots)),
\]
leading to sample complexity proportional to
\[
N_{\mathrm{classical}}
\sim
C_\epsilon(\mathcal M)\prod_{\ell=0}^{L}|G_\ell|,
\]
where $C_\epsilon(\mathcal M)$ denotes the covering number of $\mathcal M$. In contrast, the hierarchical representation induced by deep networks learns transformations incrementally across layers. By the equivariance property described in Remark~4, it suffices to learn the action of each transformation $g\in G_\ell$ on a single representative manifold $\mathcal M_{D_i}$, which generalizes to all components. The required number of samples therefore scales as
\[
N_{\mathrm{DNN}}
\sim
C_\epsilon(\mathcal M)
+
\sum_{\ell=0}^{L}
C_\epsilon(G_\ell)\,C_\epsilon(\mathcal M_{D_i}),
\]
where $C_\epsilon(G_\ell)$ denotes the covering number of the transformation family (and reduces to $|G_\ell|$ in the finite case). Therefore, the sample complexity of hierarchical manifold learning grows additively across layers rather than multiplicatively.
\end{theorem}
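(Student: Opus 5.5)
The plan is a covering-number count under two learning models. The only genuinely mathematical step is showing that a single representative component, together with one sample set per transformation family, determines the whole generated family. I will first fix what ``sample complexity'' means here: the number of samples needed to form an $\epsilon$-net of the target set, in the sense used for the RIP/JL counting in Section~\ref{appendix_sparse_representation} (Summary~1). Each $\mathcal M_{D_i}$ is compact and $k$-dimensional, so $C_\epsilon(\mathcal M_{D_i})=\mathcal O((1/\epsilon)^k)$ and $C_\epsilon(\mathcal M)\le\sum_i C_\epsilon(\mathcal M_{D_i})$. For the transformation families I take $C_\epsilon(G_\ell)$ to be the covering number of $G_\ell$ in the sup metric on a bounded region containing $\mathcal M$. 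In the finite case this equals $|G_\ell|$ for $\epsilon$ small.

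\emph{Classical bound.} By induction on $L$, a method that treats every component $g_L\cdots g_0(\mathcal M_{D_i})$ as an independent target must $\epsilon$-cover each one separately. Every isometry preserves covering numbers, so each transformed copy costs $C_\epsilon(\mathcal M)$. In the generic case the copies are pairwise separated by more than $2\epsilon$, so no sample serves two components. This gives the lower bound $C_\epsilon(\mathcal M)\prod_\ell|G_\ell|$. The matching upper bound comes from covering every copy directly, which yields $N_{\mathrm{classical}}$.

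\emph{Hierarchical bound.} I proceed level by level.

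\begin{enumerate}
\item Level $-1$: learning $P_{\mathcal M}$ costs $C_\epsilon(\mathcal M)$ samples.
\item Level $\ell$: assume the projection onto $\mathcal M^{(\ell-1)}$ is already available. For each $g$ in an $\epsilon$-net of $G_\ell$, learning $g$ requires only samples near $g(\mathcal M_{D_i})$ for one fixed representative $i$. These samples are fitted with the folding loss $J_{\mathrm{fold}}$, which pins $g^{-1}$ down on $\mathcal M_{D_i}$.
\item Extension: by Remark~3 (the equivariance/isometry-action property referenced as Remark~4 in the statement), $P_{g(\mathcal M^{(\ell-1)})}=g\circ P_{\mathcal M^{(\ell-1)}}\circ g^{-1}$, and $g$ acts componentwise. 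Hence the projection onto every $g(\mathcal M_{D_j})$ and onto all previously generated components follows with no further data.
\item Counting: the cost of level $\ell$ is $C_\epsilon(G_\ell)\,C_\epsilon(\mathcal M_{D_i})$, and summing over $\ell$ gives $N_{\mathrm{DNN}}$.
\item Discretization error: approximating $G_\ell$ by its net introduces error at most $\epsilon$ per level because the maps are isometries. Taking nets at scale $\epsilon/(L+1)$ therefore keeps the total error at $\epsilon$ and changes only constants.
\end{enumerate}

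\emph{Main obstacle.} The hard step is the claim that data on one component $\mathcal M_{D_i}$ identify $g$ well enough to extend to all of $\mathcal M$. An isometry of $\mathbb R^n$ agreeing with $g$ on $\mathcal M_{D_i}$ is determined only on the affine hull of $\mathcal M_{D_i}$. Away from that hull it is free up to the stabilizer of $\mathcal M_{D_i}$.

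\begin{itemize}
\item \emph{First attempt.} I would impose an identifiability hypothesis: either $\mathcal M_{D_i}$ affinely spans $\mathbb R^n$, or $G_\ell$ is restricted to a class (e.g. a Lie group acting freely, as in the locally trivial fibration picture of Section~\ref{F1}) in which the restriction map $g\mapsto g|_{\mathcal M_{D_i}}$ is injective with a Lipschitz inverse. Under that hypothesis, an $\epsilon$-accurate fit on $\mathcal M_{D_i}$ gives an $L\epsilon$-accurate $g$ on the bounded region containing $\mathcal M$.
\item \emph{Fallback.} If this hypothesis is too strong, I would pick a representative set of components whose joint affine span is full. This replaces $C_\epsilon(\mathcal M_{D_i})$ by a constant multiple of it and leaves the additive scaling intact.
\end{itemize}
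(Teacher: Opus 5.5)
Your counting skeleton is the same as the paper's appendix argument: $C_\epsilon(\mathcal M_{D_i})=O(\epsilon^{-k})$ (via Niyogi et al.\ under a reach condition), the union bound for $C_\epsilon(\mathcal M)$, the component count $|\mathcal M|\prod_\ell|G_\ell|$ for the classical model, and a per-level cost $C_\epsilon(G_\ell)\,C_\epsilon(\mathcal M_{D_i})$ justified by the conjugation identity of Remark~3. The paper never proves that data on one component determine $g$ on all of them. It simply posits this (``This assumes that the transformation family acts transitively across the manifold components''), so you have found the real crux. Your repair of it, however, does not work as written.

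In step~2 you say $J_{\mathrm{fold}}$ pins $g^{-1}$ down on $\mathcal M_{D_i}$. It does not, because $J_{\mathrm{fold}}$ vanishes for every $T$ with $T^{-1}(g(\mathcal M_{D_i}))\subseteq\mathcal M$. Setting $h=T^{-1}\circ g$, unpaired samples determine $g$ only up to $T=g\circ h^{-1}$ with $h(\mathcal M_{D_i})\subseteq\mathcal M$. This ambiguity includes the whole setwise stabilizer of $\mathcal M_{D_i}$ and any isometry carrying it onto another component. Your identifiability hypotheses (full affine span, or a Lipschitz inverse for $g\mapsto g|_{\mathcal M_{D_i}}$) are pointwise: they presuppose correspondences $(x,g(x))$ that the folding loss never uses.

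Here is a concrete failure. A torus of revolution in $\mathbb R^3$ spans affinely, yet taking $h$ to be a rotation about its axis gives $J_{\mathrm{fold}}=0$ while $T(\mathcal M_{D_j})=g(h^{-1}(\mathcal M_{D_j}))\neq g(\mathcal M_{D_j})$ for a generic second component. Your fallback fails the same way: two segments crossing at their common midpoint are each preserved by the half-turn about that point, and that half-turn moves a generic third component.

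The extension step needs a \emph{setwise} condition instead. For the representative set $\mathcal R$, and within the admissible class of transformations, every $h$ with $h(\mathcal R)\subseteq\mathcal M^{(\ell-1)}$ must leave $\mathcal M^{(\ell-1)}$ invariant (component by component if labels matter). You also need a quantitative version: Hausdorff-close images of $\mathcal R$ must force transformations that are close modulo that subgroup. Alternatively, assume paired training data, in which case your pointwise condition is the right one.

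A smaller point concerns step~5. For a $d_{G_\ell}$-dimensional group, refining the net to scale $\epsilon/(L+1)$ multiplies $C_\epsilon(G_\ell)$ by roughly $(L+1)^{d_{G_\ell}}$. That factor is not a constant when the claim is precisely about growth in $L$. What you actually get is additivity up to a polynomial factor in $L$. This is harmless in the finite case and still far below the exponential classical count, but it should be stated; the alternative is to accept a final resolution of $(L+1)\epsilon$. The same finer accuracy, amplified by the Lipschitz constant of your identifiability inverse, is needed when each $g$ is fitted. In the unpaired setting this also refines the scale at which $\mathcal M_{D_i}$ must be sampled.
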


Theorem shows that classical learning methods must sample every transformed manifold independently, leading to multiplicative growth in the number of required samples as new transformation families are introduced. In contrast, the hierarchical structure of deep networks separates intrinsic manifold geometry from transformation coordinates, allowing transformations to be learned incrementally across layers. Consequently the required number of samples grows additively across transformation families rather than multiplicatively. This result provides an explicit estimate of the number of samples required to represent the entire manifold family generated by the network, thereby explaining the improved generalization capacity of deep architectures. Specifically, by demonstrating why hierarchical representations can achieve substantially greater sample efficiency than classical non-hierarchical learning methods, the theorem offers a theoretical explanation for the research questions outlined in the Introduction section. These questions include why neural networks exhibit high efficiency and outperform traditional algorithms, as well as how they generalize data while challenging conventional paradigms relating model parameter size to the number of training samples~\cite{zhang2016understanding, belkin2019reconciling}.

We assume that a canonical union of submanifolds $\mathcal M$ has been learned with sample complexity $C_\epsilon(\mathcal M)$. The following analysis characterizes the additional samples required to generalize across transformation families. In practice, the learned representation may require fewer samples than the covering number, as the projection operator induces a compact representation of $\mathcal M$, so that $C_\epsilon(\mathcal M)$ serves as a conservative upper bound.

Furthermore, the above analysis provides an upper bound on transformation learning by assuming that each transformation $g \in G_\ell$ requires sampling the full submanifold $\mathcal M_{D_i}$. However, under the transversality structure described in Section~A and illustrated in Figure~~13, each transformed manifold $g(\mathcal M_{D_i})$ intersects the original manifold $\mathcal M_{D_i}$ along a shared subspace, and the corresponding tangent space at points $x \in \mathcal M_{D_i} \cap g(\mathcal M_{D_i})$ admits a decomposition into intrinsic and residual components. In particular, the tangent space splits as
\(
T_x \mathcal M_{D_i}
=
T_x(\mathcal M_{D_i} \cap g(\mathcal M_{D_i}))
\;\oplus\;
T_x \mathcal M_{D_i,R},
\)
where $T_x \mathcal M_{D_i,R}$ denotes the residual directions specific to $\mathcal M_{D_i}$, and the transformation $g$ acts primarily along these transversal directions. Consequently, learning the transformation $g$ reduces to learning its action along the residual subspace rather than over the entire manifold. Therefore, the effective sample complexity of learning each transformation is governed by the geometry of the residual submanifold $\mathcal M_{D_i,R}$, yielding a scaling proportional to $C_\epsilon(\mathcal M_{D_i,R})$, which is typically much smaller than $C_\epsilon(\mathcal M_{D_i})$. This further reduces the effective sample complexity of the hierarchical representation beyond the upper bounds derived above.

\section{PoS Modules and Local Nonlinear Decomposition}
\label{Appendix PoSModule}

\begin{wrapfigure}{l}{0.49\linewidth}
\vspace{-10pt}
\centering
\includegraphics[width=0.87\linewidth]{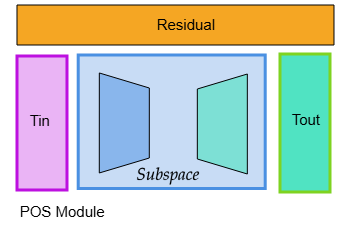}
\vspace{-10pt}
\caption{PoS module. The module consists of an input transformation (Tin), a projection onto a structured subspace, and an output transformation (Tout), together with a residual branch that captures components not explained by the current subspace. This structure serves as a fundamental building block for hierarchical composition in deep networks.}
\label{fig:PoS}
\vspace{-10pt}
\end{wrapfigure}
We now introduce the \emph{Pursuit–of–Subspaces (PoS)} module, illustrated in Figure~\ref{fig:PoS}, as a fundamental building block for deep neural networks. This module provides an operational view of deep architectures, where each component performs a structured projection while capturing unexplained variation through residual pathways. The module consists of three components: an input transformation (Tin), a projection onto a structured subspace, and an output transformation (Tout), together with a residual branch that captures directions not explained by the current subspace.

The key principle is that any neural network can be constructed through the \emph{nested composition} of PoS modules. In this recursive structure, the subspace component of one module may itself be realized by another PoS module, and similarly for the residual, input, or output branches. This allows increasingly complex representations to be built by iteratively refining and augmenting previously learned geometric structures.

Importantly, the notion of ``subspace'' in the PoS module is more general than a single manifold. It may represent:
(i) the intersection of multiple submanifolds (shared geometric structure),
(ii) a union of submanifolds (a learned representation space), or
(iii) a lower-dimensional embedding capturing common features across components.
Conversely, the residual branch aggregates directions that are not captured by the current subspace and can itself be interpreted as a union of residual components associated with individual submanifolds.

The geometric role of residual learning can be understood through a three-stage progression. First, under the assumption that the learned representation spans a sufficiently rich set of directions, orthogonality acts to narrow the target set by projecting onto the components aligned with the target manifold, suppressing irrelevant variations arising from overcomplete representations. Second, transversality refines this picture at the level of individual samples. When
\(
T_x \mathcal X \oplus T_x \mathcal Z = T_x \mathcal Y,
\) the decomposition ensures that residual directions are geometrically independent of the base representation. Consequently, for each input, only the residual directions complementary to the current representation contribute, preventing interference and enforcing local specialization. Finally, when the learned representation is insufficient and fails to span all required directions, the projection mechanism alone is no longer adequate. In this regime, residual branches act constructively by introducing the missing components of \(T_x \mathcal Y\), progressively completing the representation. This projection–completion interplay explains how deep architectures balance overparameterization with incremental expressivity.

The following lemma formalizes the local behavior of a PoS module by showing how projections decompose into a base (subspace) component and a residual correction.

\begin{lemma}[Local Nonlinear Decomposition of PoS Module]
\label{lemma:ResidualProj}
Let $x \in \mathcal M_B \cap \mathcal M$ be a point of transverse intersection, and let $s$ be sufficiently close to $x$. Then the projection of $s$ onto $\mathcal M$ admits the local decomposition
\begin{equation}
\mathcal{P}_{\mathcal M}(s)
\approx
\mathcal{P}_{\mathcal M_B}(s)
+
\Phi\!\left(
\mathcal{P}_{\mathcal M_R}(s)
-
\mathcal{P}_{\mathcal M_R}\bigl(\mathcal{P}_{\mathcal M_B}(s)\bigr)
\right),
\label{eq:residualprojection}
\end{equation}
where $\Phi : T_x\mathcal M_R \to N_x(\mathcal M_B)$ is a local nonlinear mapping induced by the transversal structure.
\end{lemma}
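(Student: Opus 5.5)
The plan is to reduce everything to first order at the transverse point $x$ and then read off the correction term from the linear algebra of the tangent splitting. I will interpret the statement as follows. $\mathcal M_B$ and $\mathcal M_R$ are smooth embedded submanifolds of $\mathbb{R}^n$ that meet transversally at $x$. The target $\mathcal M$ is the local submanifold whose tangent space is
\[
T_x\mathcal M = T_x\mathcal M_B + T_x\mathcal M_R ,
\]
mirroring the transversal decomposition of Section~\ref{Disentanglement}. The symbol ``$\approx$'' means equality up to $O(\|s-x\|^2)$.

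\textbf{Step 1: linearize the projections.} Near $x$ the nonlinear orthogonal projection of Section~\ref{sec:nonlinear-projection} is well defined, by the tubular neighbourhood theorem applied to each manifold. It is smooth, and its differential at $x$ is the orthogonal projector onto the tangent space. So for $\mathcal N \in \{\mathcal M, \mathcal M_B, \mathcal M_R\}$ I will write
\[
\mathcal P_{\mathcal N}(s) = x + \Pi_{T_x\mathcal N}(s-x) + O(\|s-x\|^2).
\]
The curvature corrections are collected into the error term.

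\textbf{Step 2: split $T_x\mathcal M$.} I decompose
\[
T_x\mathcal M = T_x\mathcal M_B \oplus W, \qquad W := T_x\mathcal M \cap N_x(\mathcal M_B).
\]
This gives $\Pi_{T_x\mathcal M} = \Pi_{T_x\mathcal M_B} + \Pi_W$. The first term reproduces $\mathcal P_{\mathcal M_B}(s)$ to first order, so it remains to express $\Pi_W(s-x)$ through $\mathcal M_R$.

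Set $u = s-x$. By Step 1,
\[
\mathcal P_{\mathcal M_R}(s) - \mathcal P_{\mathcal M_R}\bigl(\mathcal P_{\mathcal M_B}(s)\bigr) \approx \Pi_{T_x\mathcal M_R}\bigl(u - \Pi_{T_x\mathcal M_B}u\bigr) = \Pi_{T_x\mathcal M_R}\,\Pi_{N_x\mathcal M_B}\,u .
\]
This lies in $T_x\mathcal M_R$, which is why $\Phi$ has domain $T_x\mathcal M_R$.

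\textbf{Step 3: construct $\Phi$.} I will define $\Phi$ at first order as the linear map
\[
\Phi_0 := \Pi_{N_x\mathcal M_B}\big|_{T_x\mathcal M_R}\, A^{+}, \qquad A := \Pi_{T_x\mathcal M_R}\,\Pi_{N_x\mathcal M_B}\big|_{W},
\]
where $A^{+}$ denotes a pseudo-inverse. Its image lies in $W \subset N_x(\mathcal M_B)$, as the statement requires. The check to make is that
\[
\Phi_0 \circ \Pi_{T_x\mathcal M_R}\Pi_{N_x\mathcal M_B} = \Pi_W
\]
on $\mathbb{R}^n$. This amounts to two facts:
\begin{itemize}
\item $\Pi_{T_x\mathcal M_R}$ annihilates no nonzero vector of $W$, which follows from transversality together with $W \subset T_x\mathcal M$;
\item the component of $\Pi_{N_x\mathcal M_B}u$ outside $T_x\mathcal M$ is killed consistently.
\end{itemize}
I will state these with the restricted-orthogonality constant $\theta$ of Appendix~\ref{app:roc} controlling the conditioning of $A$. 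The genuinely nonlinear $\Phi$ is then obtained by replacing the linear maps with the local graph map $\mathcal M_R \to N(\mathcal M_B)$ supplied by the implicit function theorem at the transverse intersection. It agrees with $\Phi_0$ at first order.

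\textbf{Main obstacle.} Step 3 is where the difficulty lies. The second fact above is not automatic. $\Pi_{T_x\mathcal M_R}$ may mix components of $N_x\mathcal M_B$ lying outside $T_x\mathcal M$ into the residual difference, and then no map on $T_x\mathcal M_R$ can recover exactly $\Pi_W u$. I expect to need an extra hypothesis to close this. One option is that $T_x\mathcal M_R$ is orthogonal to $N_x\mathcal M\cap N_x\mathcal M_B$, which is exactly the orthogonality of residual directions conjectured in Section~\ref{Disentanglement}. The other is to accept an additional error of order $\theta\,\mathrm{dist}(s,\mathcal M)$ inside the $\approx$. I would try the orthogonality assumption first, since it is consistent with the paper's framework, and present the general case as an approximation whose error scales with $\theta$.
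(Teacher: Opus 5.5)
Your plan is essentially the paper's argument. You linearize the three nearest-point projections at $x$, identify the $T_x\mathcal M_B$ part with $\mathcal P_{\mathcal M_B}(s)$, and recover the remaining normal correction by inverting $\Pi_{T_x\mathcal M_R}$ on part of $N_x\mathcal M_B$. Your Step~2 is more direct than the paper's corresponding step. Restricting the inverse to $W=T_x\mathcal M\cap N_x\mathcal M_B$ is also more careful than the paper's claim that $P_{T_x\mathcal M_R}$ is invertible on all of $N_x\mathcal M_B$, which holds only when $T_x\mathcal M_B+T_x\mathcal M_R=\mathbb{R}^n$.

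What needs correcting is your ``main obstacle'': it does not exist under your own setup, so you should neither add a hypothesis nor accept an extra error. You assume $T_x\mathcal M=T_x\mathcal M_B+T_x\mathcal M_R$, so both tangent spaces sit inside $T_x\mathcal M$. Hence $\mathbb{R}^n=T_x\mathcal M_B\oplus W\oplus N_x\mathcal M$ orthogonally, and $N_x\mathcal M_B=W\oplus N_x\mathcal M$. The part of $\Pi_{N_x\mathcal M_B}u$ outside $T_x\mathcal M$ is therefore exactly $\Pi_{N_x\mathcal M}u$. Since $T_x\mathcal M_R\subseteq T_x\mathcal M\perp N_x\mathcal M$, the projector $\Pi_{T_x\mathcal M_R}$ kills it. Hence
\[
\Pi_{T_x\mathcal M_R}\Pi_{N_x\mathcal M_B}u=\Pi_{T_x\mathcal M_R}\Pi_W u=A\,\Pi_W u .
\]
Injectivity of $A$ (your first fact) gives $A^{+}A=I_W$, so $\Phi_0\circ\Pi_{T_x\mathcal M_R}\Pi_{N_x\mathcal M_B}=\Pi_W$ on all of $\mathbb{R}^n$.

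Your extra hypothesis $T_x\mathcal M_R\perp N_x\mathcal M\cap N_x\mathcal M_B$ is automatic, since the intersection is just $N_x\mathcal M$. Your fallback would actually be harmful. An error of order $\theta\,\operatorname{dist}(s,\mathcal M)$ is generically first order in $\|s-x\|$, the same order as the correction term, so it would not give the lemma at your $O(\|s-x\|^2)$ accuracy.

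The paper avoids the issue by linearizing with $v=\mathcal P_{\mathcal M}(s)-x$ rather than $s-x$. Since $v\in T_x\mathcal M$ to first order, $P_{N_x\mathcal M_B}v$ already lies in $W$. Its linearizations $x_B\approx x+P_{T_x\mathcal M_B}v$ and $x_R\approx x+P_{T_x\mathcal M_R}v$ silently use the same containment.

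Finally, your formula for $\Phi_0$ composes $\Pi_{N_x\mathcal M_B}|_{T_x\mathcal M_R}$ with $A^{+}$, whose output already lies in $W\subseteq N_x\mathcal M_B$. So it is simply $\Phi_0=A^{+}$.
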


This decomposition shows that each PoS module separates representation into a shared subspace component and a residual correction, providing a geometric interpretation of residual connections in deep networks.

\begin{proof}[Proof sketch]
The result follows from a local tangent-space linearization at a transverse intersection point $x \in \mathcal M_B \cap \mathcal M$ and inversion of the projection operator restricted to the residual subspace. The full proof is deferred to Appendix~\ref{app:ResidualProjectionProof}.
\end{proof}

Observe that the second term in Eq.~\eqref{eq:residualprojection} is governed by the intersection geometry between $\mathcal M_B$ and $\mathcal M$, specifically by the angle between their tangent spaces at $x$. In the limiting case where the residual directions $T_x\mathcal M_R$ are orthogonal to $T_x\mathcal M_B$, the cross-projection term
\(
\mathcal{P}_{\mathcal M_R}\bigl(\mathcal{P}_{\mathcal M_B}(s)\bigr)
\)
vanishes, yielding a fully decoupled decomposition.

\section{Experimental validation}
\label{Results}

We experimentally validate our theoretical framework using three challenging tasks. 

\subsection{Zero-shot anomaly detection (ECG).}
\label{sec:ZeroShotECG}

As a case study, we consider a realistic personalized setting where only a few minutes (e.g., 5 minutes) of ECG recordings are available for each user, and no anomaly samples are observed during training~\cite{yamacc2022personalized}. The goal is to detect anomalies for a specific user without ever seeing anomalous signals from that user during training. This raises the following question: can a compact and efficient model, trained solely on healthy signals, outperform conventional deep models that rely on large-scale datasets including anomalies?

\paragraph{PoS predictions in this setting:}
The PoS framework yields concrete, testable predictions for personalized ECG anomaly detection. If user-specific signals lie on compact healthy manifolds related by approximate isometries, then learning only healthy manifolds should enable:
(i) zero-shot anomaly detection without anomaly samples,
(ii) transfer across users via manifold alignment,
and (iii) improved separation of anomalies as deviations from compact representations.
The experiments below directly evaluate these predictions (see Appendix~\ref{ECGDataset} for dataset preparation details).

The following experiments evaluate four core capabilities predicted by the PoS framework: detection, transfer, privacy, and explainability.

\textbf{Compact Learning:}
This setting directly implements the compact learning principle derived from the PoS framework. In this setting, we learn only the healthy signal manifold of a given user, denoted by $\mathcal{M}_{D_i} \subset \mathbb{R}^n$, where normal signals reside. During inference, anomaly detection is performed based on the reconstruction error of a lightweight, user-specific network. This network consists of a linear encoder $f_E:\mathbb{R}^n \rightarrow \mathbb{R}^N$ and a linear decoder $f_D:\mathbb{R}^N \rightarrow \mathbb{R}^n$, mapping between input and latent spaces.

\begin{figure*}[]
\centering
  \includegraphics[width=0.95\linewidth]{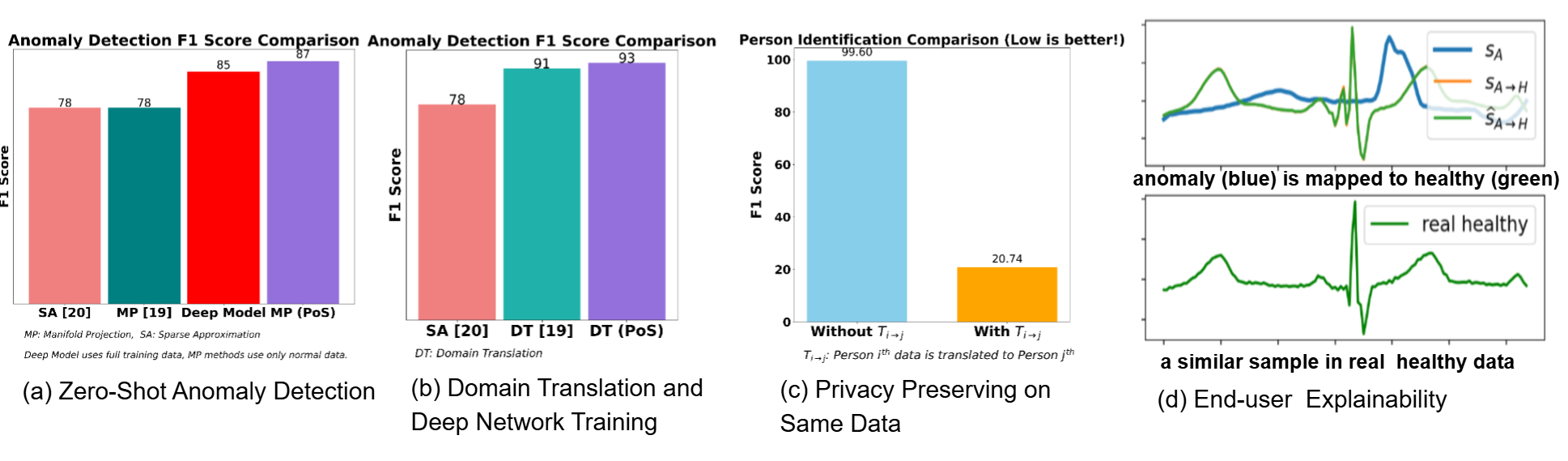}
  \caption{Zero-shot ECG anomaly detection via PoS. (a) Personalized projection learning on $\mathcal{M}_{D_i}$ achieves superior performance without anomaly samples. (b) Domain translation $T_{j \to i}$ generates synthetic data for user-specific training. (c) Identity-preserving mappings $T_{i \to k}$ enable privacy by aligning all users to a reference manifold. (d) On-the-fly alignment $T_{a \to i}$ maps anomalies to the healthy manifold, providing interpretable explanations.}
\label{fig:ECG-Anomaly}
\end{figure*}

When no nonlinear activation is used and $N \ll n$ (e.g., $N=10$, $n=128$), the network learns a projection onto a single linear subspace. To extend this representation from a single subspace (span) to a union of subspaces, ReLU is added in the latent space as nonlinear activation function. This transition directly reflects the theoretical prediction that nonlinear activations such as ReLU enable lifting representations from a single subspace to a union of subspaces. Additionally, as discussed in the orthogonality section, the input signal is masked with varying window sizes to encourage disentangled representations and promotes separation of submanifold components (see Appendix~\ref{app:ablationECG} for an ablation study). Such masking promotes separation of submanifold components by activating residual directions, as predicted by the geometric analysis in the Section~\ref{Orthogonality}.

We compare the proposed method with subspace projection-based anomaly detection (MP)~\cite{yamacc2022personalized}, sparse approximation (SA)-based method~\cite{carrera2016ecg}, and deep neural network~\cite{kiranyaz2016real} trained using data from multiple users, including anomaly samples. The proposed PoS-based manifold projection method with compact representation (MP (PoS)) achieves substantially higher performance despite relying only on healthy training data, as shown in Figure~\ref{fig:ECG-Anomaly}(a). Notably, despite its minimal computational cost and reliance only on healthy training data, the proposed approach outperforms deep models trained with access to both normal and anomalous data from multiple users.

\textbf{Signal Generation / Domain Translation: }
This experiment evaluates the PoS prediction that learned isometries enable transfer across manifolds via geometric alignment. A key limitation of conventional deep learning models is their strong dependence on training data distributions. When the data modality, acquisition device, or subject changes, a domain gap arises, often leading to significant performance degradation. In our setting, this issue is particularly pronounced: As shown in prior work, ECG patterns are highly individual-specific due to differences in cardiovascular structure and physiology. Consequently, models trained on one set of users do not generalize well to a new user, especially when no anomaly samples are available for that individual. This makes direct deployment of standard deep learning models in personalized settings highly unreliable.

To address this domain gap, we adopt the geometric folding-based domain translation framework illustrated in Figure~~\ref{fig:IsometryUnion}. Let $\mathcal{M}_{D_i}$ and $\mathcal{M}_{D_j}$ denote the signal manifolds of users $i$ and $j$, learned via encoder--decoder mappings $(f_E^i, f_D^i)$ and $(f_E^j, f_D^j)$. We define a learnable transformation $T_{j \to i}$ that maps samples from the domain of user $j$ toward the manifold $\mathcal{M}_{D_i}$. 

Given a sample $s^{j} \in \mathbb{R}^n$, the transformation is trained using the folding loss
\begin{equation}
J_{\mathrm{fold}}(\theta_{j \to i})
=
\left\|
T_{j \to i}(s^{j})
-
\mathcal{P}_{\mathcal{M}_{D_i}}\bigl(T_{j \to i}(s^{j})\bigr)
\right\|^2,
\label{eq:foldingloss}
\end{equation}
which encourages the transformed samples to lie on the learned manifold $\mathcal{M}_{D_i}$, where $\mathcal{P}_{\mathcal{M}_{D_i}} = f_D^i \circ f_E^i$ denotes the projection operator. Consequently, anomalous samples from user $j$ can be translated into the domain of user $i$, producing synthetic anomalies consistent with $\mathcal{M}_{D_i}$ consistent with the geometry of user $i$. By aggregating such translated samples across users and anomaly types, we construct a personalized training set for user $i$, enabling effective supervised learning of baseline DNN model~\cite{kiranyaz2016real}. As illustrated in Figure~~\ref{fig:ECG-Anomaly}(b), this projection-based translation mechanism supports SoTa anomaly detection.

\textbf{Privacy-Preserving via PoS Isometric Mappings:}
The learned transformation framework can also be used to conceal user identity through geometric alignment. Specifically, given a set of users, we select a reference user $k$ with manifold $\mathcal{M}_{D_k}$ and learn transformations $T_{i \to k}$ that map signals from user $i$ onto $\mathcal{M}_{D_k}$. For a signal $s^{i} \in \mathbb{R}^n$, the transformed signal $T_{i \to k}(s^{i})$ preserves the intrinsic structure of the signal while removing user-specific morphological characteristics.

This transformation effectively anonymizes the data: signals from different users become indistinguishable in the transformed domain. To evaluate this property, we train a malicious neural network designed to identify user identity from ECG signals. As shown in Figure~~~\ref{fig:ECG-Anomaly}(c), while the network achieves near-perfect accuracy on the original data (F1 score: $99.60$), its performance drops significantly after applying the PoS-based isometric mappings (F1 score: $20.74$), indicating that identity-specific information has been effectively suppressed. This demonstrates that PoS-based isometric mappings effectively remove identity-specific information while preserving task-relevant structure. This approach provides a principled mechanism for privacy-preserving data sharing without sacrificing downstream task performance.

\textbf{Explainability via Geometric Alignment:}
Complementary to the previous setting, we learn a transformation $T_{a \to i}$ that aligns anomalous signals with the healthy manifold $\mathcal{M}_{D_i}$ (more generally, a union of submanifolds) of user $i$. Given a signal $\mathbf{s} \in \mathbb{R}^n$ detected as anomalous, the transformation is learned on-the-fly for the given sample by minimizing
\[
\left\|
T_{a \to i}(\mathbf{s}) - \mathcal{P}_{\mathcal{M}_{D_i}}\bigl(T_{a \to i}(\mathbf{s})\bigr)
\right\|^2,
\]
which encourages the transformed signal $T_{a \to i}(\mathbf{s})$ to lie close to the union of subspaces representing the healthy manifold. In this sense, $T_{a \to i}$ performs a geometric alignment (e.g., rotation or deformation) that brings the anomalous signal into the vicinity of the healthy representation, while the projection operator enforces local consistency. For this mechanism to be effective, the learned manifold $\mathcal{M}_{D_i}$ must satisfy Postulate~I, i.e., it should represent a compact and minimal embedding of the healthy signal space. This provides a direct explanation of the model's decision: the original signal $\mathbf{s}$ (blue in Figure~~\ref{fig:ECG-Anomaly}(d)) is identified as anomalous, while its aligned version $T_{a \to i}(\mathbf{s})$ (green) represents its closest counterpart within the healthy manifold. Since the transformation already places the signal near the manifold, the projection induces only a small correction, indicating that the translated signal faithfully reflects how the input would appear under normal conditions. As illustrated in Figure~~\ref{fig:ECG-Anomaly}(d), the translated signal closely resembles real healthy patterns, enabling interpretable, user-level feedback grounded in the learned manifold structure.

\subsection{Geometric Regularization for Volumetric Reconstruction}
\label{appendix:microscopy}

We empirically validate the emergence of geometric disentanglement within neural networks through a practical application in computational imaging, namely the reconstruction of 3D volumetric sample data from Fourier Light Field microscopy (FLFM) \cite{FLFM} images. FLFM captures rapid 3D biological dynamics in a single snapshot, but the inherent spatio-angular resolution limits of its optical encoding make high-resolution volumetric reconstruction a severely ill-posed inverse problem. Consequently, computationally reconstructed 3D volumes frequently suffer from prominent blur and spatial degradation. We postulate that the true, high-fidelity biological structures reside on a smooth, low-dimensional data manifold $\mathcal{M}_D \subset \mathbb{R}^n$, while system-induced blur acts as a degradation operator that displaces samples off this manifold. 

\paragraph{PoS predictions in this setting:}
The PoS framework yields concrete predictions for volumetric reconstruction. 
(i) \emph{Compact learning}: high-quality microscopy images should form a compact union of submanifolds, which can be enforced by pushing degraded observations away from this set, rather than reconstructing clean images from degraded inputs (in contrast to standard masked autoencoding objectives). 
(ii) \emph{Projection refinement}: once a compact representation is learned, geometric alignment via the folding mechanism enables improved estimation of the transformation operator $T_{\text{in}}$, which naturally serves as the final deblurring mapping. 
The formulation below directly instantiates these predictions.

By explicitly forcing an autoencoder to separate the representations of clean data and their degraded counterparts, we can guide the primary reconstruction network to project its outputs onto $\mathcal{M}_D$. To accomplish this, we incorporate a masked autoencoder pretrained on a large image dataset \cite{MAEMicroscopy} and fine-tune it on our target dataset \cite{FVCD} to learn the geometric structure of $\mathcal{M}_D$. To ensure the network learns to isolate the residual tangent directions associated with true structure from those associated with optical degradation, we introduce a dynamic push-pull loss, described below.

Let $s \in \mathcal{M}_{D_i}$ be a clean microscopy slice and $s_b = B_\sigma(s)$ be its blurred version using a synthetic Gaussian blur, representing a point displaced from the manifold. The autoencoder comprises an encoder--decoder mapping $(f_E, f_D)$. We define the projection operator onto the learned manifold as $\mathcal{P}_{\mathcal{M}_{D_i}} = f_D \circ f_E$, with the corresponding projections $\hat{s} = \mathcal{P}_{\mathcal{M}_{D_i}}(s)$ and $\hat{s}_b = \mathcal{P}_{\mathcal{M}_{D_i}}(s_b)$. The fine-tuning objective is given by:
\begin{equation}
J_{AE} = \lambda_1\|\hat{s} - s\|_2^2 + \underbrace{\lambda_2\|\hat{s}_b - s\|_2^2 - \lambda_3\|x - x_b\|_2^2}_{\text{push-pull loss}},
\label{eq:AEMicLoss}
\end{equation}
where $x=f_E(s)$ and $x_b=f_E(s_b)$. Geometrically, the positive reconstruction terms tie the autoencoder to the true data manifold, pulling the reconstructions of both clean and degraded inputs toward $\mathcal{M}_{D_i}$. Concurrently, the negative latent term explicitly pushes the latent representations of the clean image and its blurred counterpart apart. This push-pull mechanism forces the nonlinear projector $\mathcal{P}_{\mathcal{M}_{D_i}}$ to become highly sensitive to degradation, isolating the latent geometric factors associated purely with high-quality structures.

\begin{figure}
    \centering
    \includegraphics[width=0.85\linewidth]{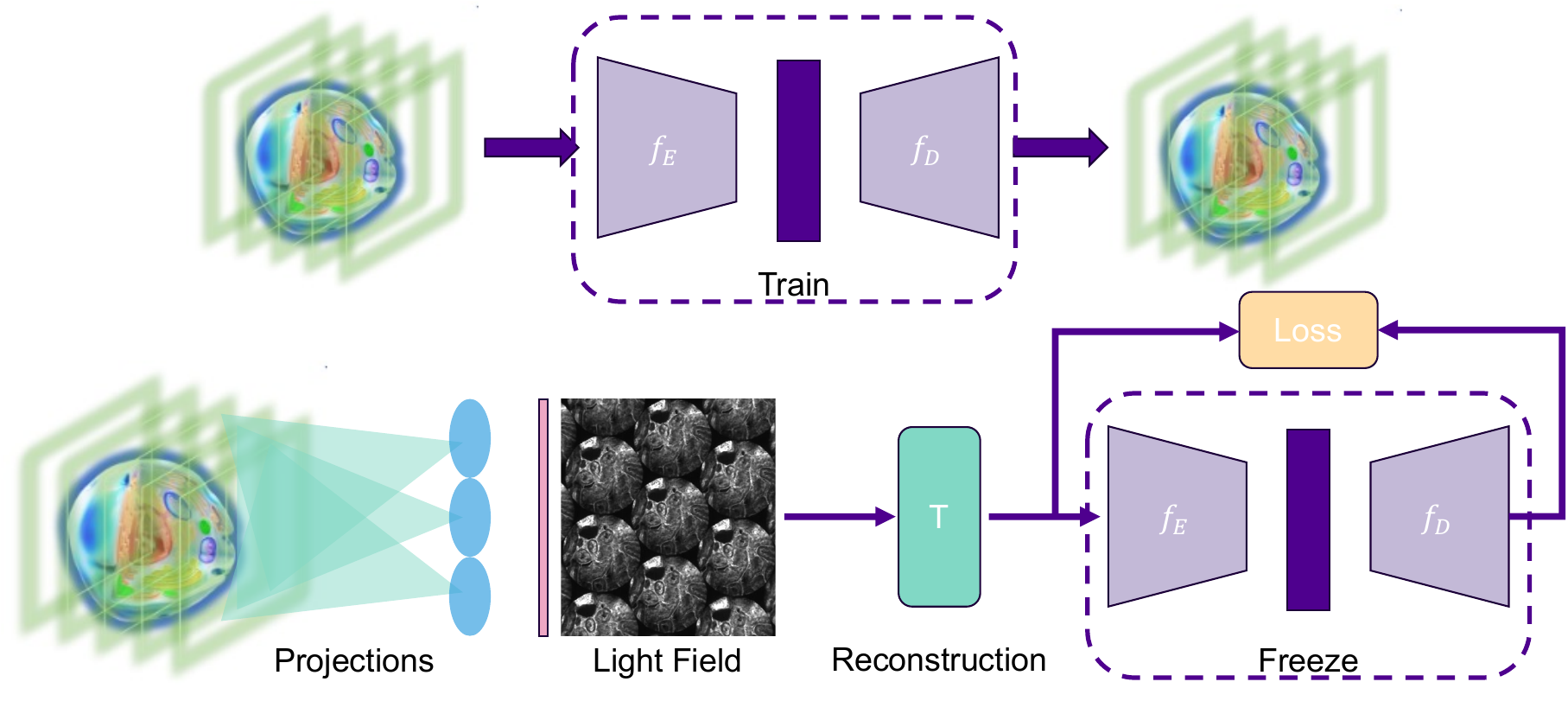}
    \caption{Manifold projection as a prior for 3D microscopy reconstruction, composed of two stages. Top: An autoencoder ($f_E, f_D$) is first trained on high-quality 3D volumes to learn the compact geometric manifold of clean biological structures. Bottom: A reconstruction network ($T$) is trained to estimate 3D volumes from the recorded light field. The previously learned autoencoder is frozen at this stage and used as a geometric regularizer. The projection loss explicitly forces the reconstructed volumes to lie on the learned manifold of clean 3D structures, mitigating optical degradation.}
    \label{fig:microscopy_setup}
\end{figure}

This sensitivity to blur enables us to leverage the fine-tuned projector as an auxiliary geometric projection loss for the primary FLFM volumetric reconstruction network. Defining $s^j \in \mathcal{M}_{D_j}$ as the input light field obtained from the sensor and $T_{j \to i}$ as the reconstruction network mapping the light field to 3D volumetric data, the folding loss in Eq.~\ref{eq:foldingloss} serves as an idempotent projection constraint alongside the main supervised loss. We demonstrate the efficacy of this geometric constraint through both quantitative metrics and qualitative analysis. Specifically, we adopt the state-of-the-art architecture from \cite{FVCD} for volumetric image reconstruction. To conduct a controlled ablation study, we disregard sensor noise and train solely the reconstruction module. The pretrained autoencoder from \cite{MAEMicroscopy} is fine-tuned on the dataset from \cite{FVCD} under two distinct schemes. First, using a standard $L_2$ reconstruction loss $J = \|\hat{s} - s\|_2^2$, and second, using the proposed push-pull loss (Eq.~\ref{eq:AEMicLoss}). The resulting autoencoders are subsequently integrated into the main network's training pipeline. We compare these models against a baseline trained without the folding loss, as well as a network regularized using the pretrained autoencoder without any fine-tuning. 
All methods are trained for 200 epochs using the AdamW optimizer with a learning rate of 1e-4. A shared combination of MSE and edge losses (with an edge loss coefficient of 0.1) is used as the base, with all configurations except the baseline additionally incorporating the folding loss.
We use a volume patch size of 480$\times$480 and 41 depth slices.
The training is performed on the LUMI supercomputing cluster utilizing AMD MI250X GPUs. A full training run for each model took approximately 80 GPU-hours.
The optimized models are evaluated on the test dataset of 125 high-quality 3D volumes \cite{FVCD} that are not introduced during training. 
The average peak signal-to-noise ratio (PSNR) and structural similarity (SSIM) values are reported in Table~\ref{tab:microscopy_results}, and visual results for a sample test data are provided in Figure~~\ref{fig:microscopy_results} for qualitative inspection.

\begin{figure}
    \centering
    \includegraphics[width=0.85\linewidth]{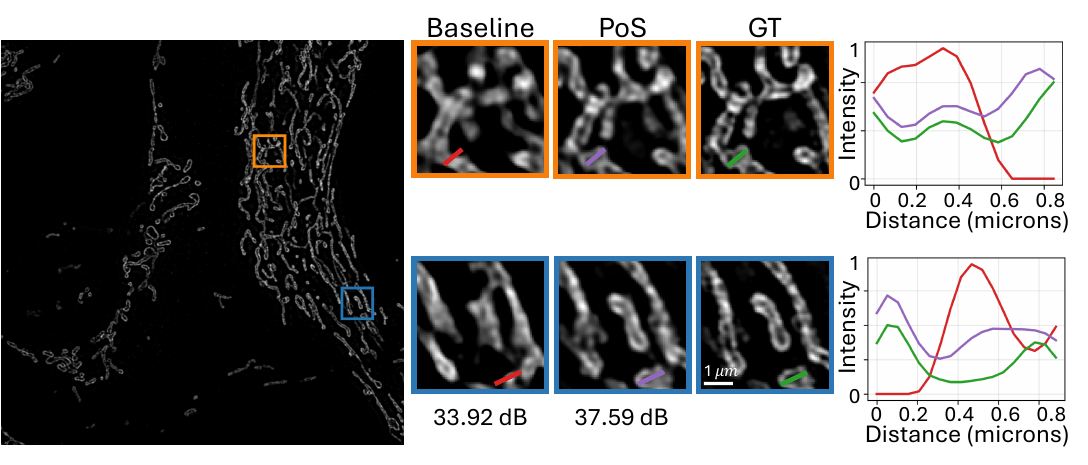}
    \caption{Qualitative inspection of the domain adaptation technique in volumetric reconstruction. Left: A two-dimensional slice of the ground-truth volume with two regions of interest (ROIs) marked. Middle: The close-up reconstruction results for the baseline method (architecture adopted from \cite{FVCD}) trained from scratch with L2-loss, in comparison with the proposed training based on the additional folding loss. Right: One-dimensional cross-sections selected within each ROI.}
    \label{fig:microscopy_results}
\end{figure}

\begin{table}[h]
\centering
\caption{Quantitative evaluation of 3D volumetric microscopy image reconstruction.}
\label{tab:microscopy_results}
\small
\begin{tabular}{lccc}
\toprule
\textbf{Model} & \textbf{PSNR (dB)} & \textbf{SSIM} \\
\midrule
Baseline \cite{FVCD} & 34.37 & 0.968 \\
MAE Pretrained \cite{MAEMicroscopy} & 34.01 & 0.965 \\
MAE Finetuned & 34.45 & 0.968 \\
MAE Push-Pull & $\mathbf{39.03}$ & $\mathbf{0.974}$ \\
\bottomrule
\end{tabular}
\end{table}

As reported in Table~\ref{tab:microscopy_results}, simply applying a pretrained or standard finetuned autoencoder yields negligible improvements over the baseline model, as the network fails to efficiently separate the underlying signal from the degradation. However, explicitly enforcing latent separation via the dynamic push-pull loss drastically improves the manifold projection. Specifically, the PoS-oriented training scheme achieves, on average, a 4.66 dB increase in PSNR compared to the baseline model. Visually, the proposed model reconstructs sharper Regions of Interest (ROIs) that closely track the ground truth structures, successfully mitigating the low-resolution degradation inherent to baseline LF microscopy systems.

\subsection{Attention, Transformer and PosFormer}
\label{appendix:attention}
\paragraph{PoS implications for this section: }
The geometric behavior illustrated in Figure~~\ref{fig:maskedLearning1} provides further intuition for the emergence of transformer-like architectures. As shown, projection onto the joint span first removes components in the shared nullspace, while residual nullspace directions remain unresolved. Subsequent refinement steps progressively eliminate these remaining components, effectively performing a sequence of structured null-space removals.

This perspective suggests that representation learning proceeds in stages: shared components are removed first, followed by increasingly refined residual components. When this process is implemented through parallel representations with cross-consistency constraints, the resulting step-by-step elimination naturally gives rise to attention-like mechanisms.

Moreover, this explains the effectiveness of PoSFormer. By explicitly enforcing orthogonality on predefined residual components, the model prevents leakage across submanifolds and ensures that residual directions are removed in a structured manner. This leads to improved separation of submanifolds and more stable representations.

\subsubsection{Intersection and Residual Learning}
\label{app:intersection_residual}

A central requirement of the PoS framework is to avoid collapsing representations onto the joint span of multiple submanifolds, and instead preserve their union structure. This is achieved by decomposing representations into shared (intersection) and submanifold-specific (residual) components, as illustrated in Figure~~\ref{fig:intersection_residual}.

Let $\mathcal{M}_{D_i}, \mathcal{M}_{D_j} \subset \mathbb{R}^n$ denote two submanifolds with corresponding projection operators $\mathcal{P}_{D_i}$ and $\mathcal{P}_{D_j}$. Given an input signal $s \in \mathbb{R}^n$, we first compute individual projections
\begin{equation}
s_i = \mathcal{P}_{D_i}(s), \qquad s_j = \mathcal{P}_{D_j}(s).
\end{equation}

\paragraph{Intersection via bidirectional orthogonal projection: }
In our implementation, the shared component between $s_i$ and $s_j$ is estimated through direct cross-projections rather than an independent iterative variable. Specifically, we compute
\begin{equation}
z_{i \leftarrow j} = \frac{s_i (s_i^\top s_j)}{s_i^\top s_i},
\qquad
z_{j \leftarrow i} = \frac{s_j (s_j^\top s_i)}{s_j^\top s_j}.
\label{eq:cross_projection}
\end{equation}
These correspond to projecting each representation onto the direction of the other.

\paragraph{Coupled iterative refinement: }
Rather than aggregating the cross-projections, the two aligned components are fed back into their respective projection operators and iteratively refined. Specifically, we initialize
\begin{equation}
z_i^{(0)} = s_i, 
\qquad 
z_j^{(0)} = s_j,
\end{equation}
and update them via coupled projections
\begin{equation}
z_i^{(t+1)} = \mathcal{P}_{D_i}\!\left(
\frac{z_j^{(t)} \, (z_j^{(t)\top} z_i^{(t)})}{z_j^{(t)\top} z_j^{(t)} + \varepsilon}
\right),
\qquad
z_j^{(t+1)} = \mathcal{P}_{D_j}\!\left(
\frac{z_i^{(t)} \, (z_i^{(t)\top} z_j^{(t)})}{z_i^{(t)\top} z_i^{(t)} + \varepsilon}
\right),
\label{eq:coupled_refinement}
\end{equation}
where $\mathcal{P}_{D_i}$ and $\mathcal{P}_{D_j}$ enforce that each update remains on the corresponding submanifold.

This process is repeated until convergence, yielding representations
\(
z_i^*, z_j^*
\)
that lie on their respective manifolds while becoming increasingly consistent under cross-projection. For a common input $s$, the coupled updates drive the representations toward agreement, so that $z_i^* \approx z_j^*$ and converge to a shared point $z^*$ that lies near (or provides a local approximation to) the intersection of the two manifolds. In this regime, the common representation can be taken as either $z_i^*$ or $z_j^*$, or equivalently expressed via a symmetric combination such as $z^* = \frac{1}{2}(z_i^* + z_j^*)$, which approximates the intersection of the two manifolds.

To enforce this behavior, we optimize a joint objective combining reconstruction fidelity and residual suppression. Let $\hat{s}$ denote the reconstructed signal, decomposed as
\(
\hat{s} = z^* + r_i + r_j,
\)
where $z^*$ approximates the shared component and $r_i, r_j$ denote residual components associated with the respective manifolds. We minimize \[
\mathcal{L} =
\|s - \hat{s}\|^2
+
\lambda \cdot
\begin{cases}
\|r_j\|^2, & s \in \mathcal{M}_{D_i}, \\
\|r_i\|^2, & s \in \mathcal{M}_{D_j}.
\end{cases}
\]
where the residual penalty enforces that components orthogonal to the correct manifold are driven to zero. This objective encourages the shared representation $z^*$ to capture the common structure while eliminating cross-manifold residuals, thereby aligning the iterative updates with the intersection geometry.
\begin{figure}
\includegraphics[width=0.48\linewidth]{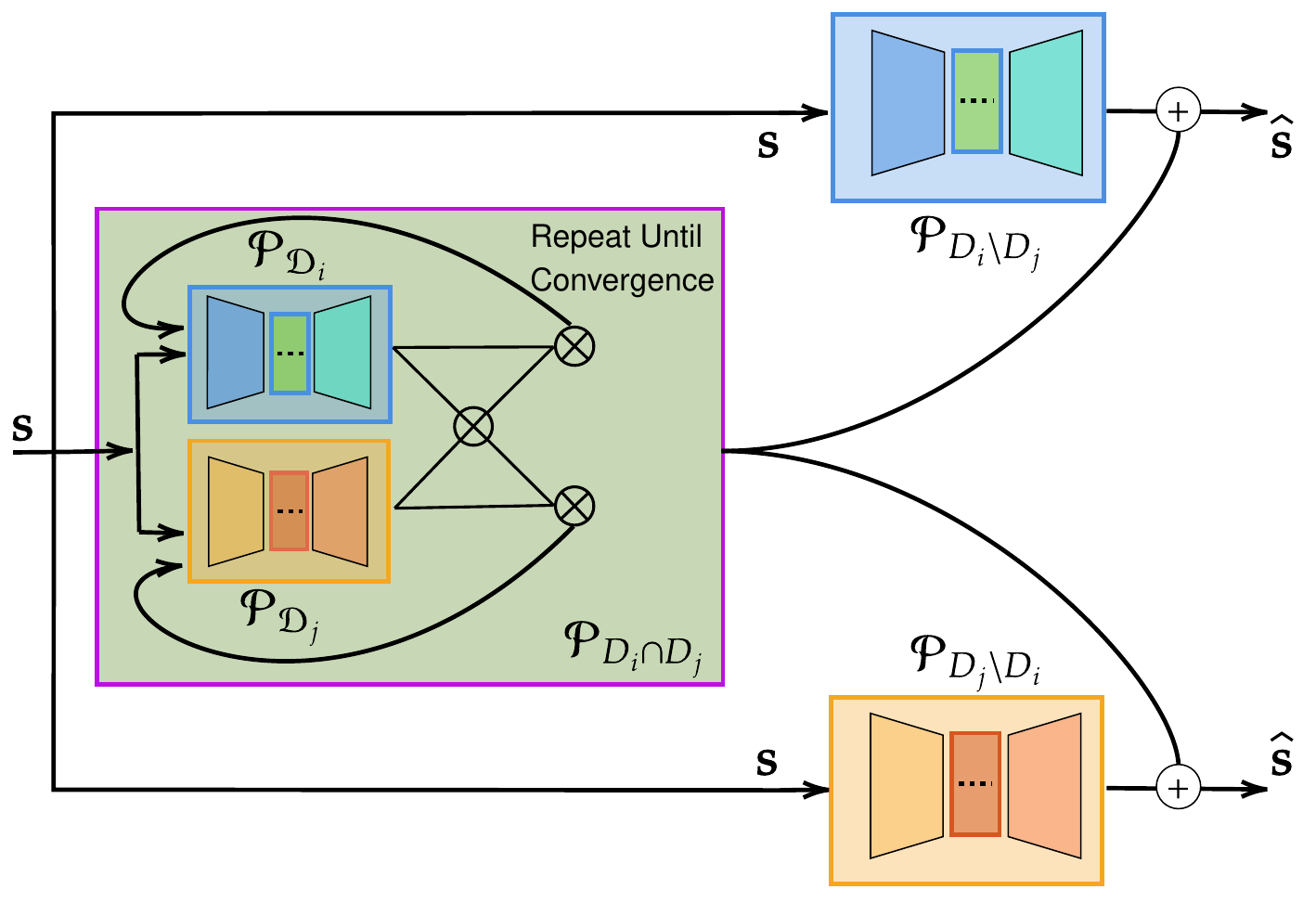}
\includegraphics[width=0.50\linewidth]{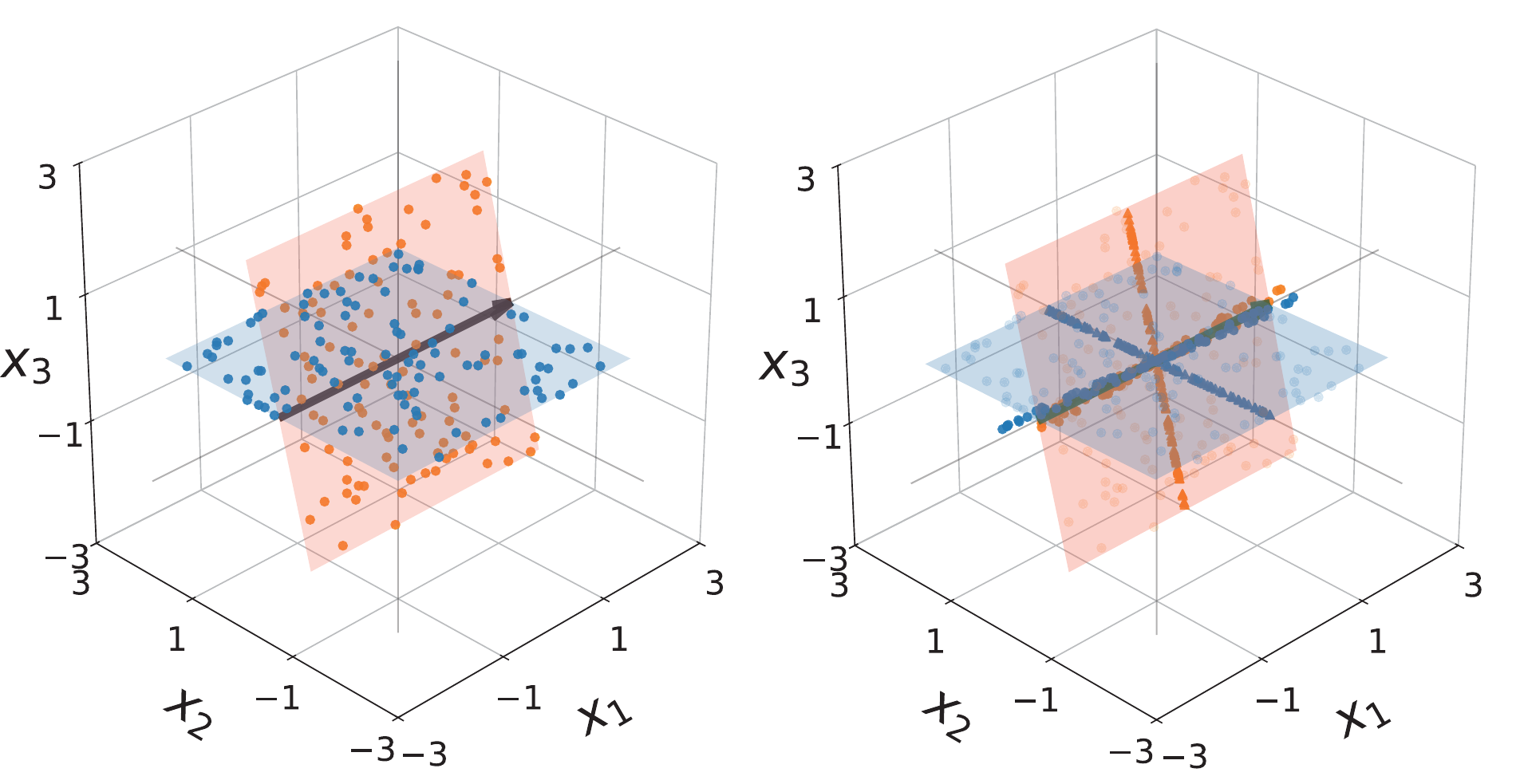}
\caption{ Intersection–residual learning via coupled cross-projections. 
Left: The architecture details. The input is projected onto $\mathcal{M}_{D_i}$ and $\mathcal{M}_{D_j}$, then iteratively aligned via bidirectional projections. 
The resulting representations approximate the shared component, while residuals capture submanifold-specific directions.
Right: The results with a toy dataset drawn from a union of two planar subspaces, their intersection being marked with a purple arrow. Projection of points towards the estimated intersection subspace, and corresponding residual projections are evident.}
\label{fig:intersection_residual}
\end{figure}

This intersection-residual mechanism illustrated in Figure~\ref{fig:intersection_residual} forms the basis for collaborative learning across multiple submanifolds and directly leads to attention-like operators in deep architectures.

\subsubsection{Collaborative Learning via Cross-Projection Consistency}
\label{app:collaborative_learning}

Building on the intersection--residual mechanism, we extend the framework to a collaborative learning setting in which multiple PoS modules are trained jointly. Rather than learning representations independently, each module refines its output by enforcing consistency with the representations produced by other modules.

Consider two projection operators $\mathcal{P}_{D_i}$ and $\mathcal{P}_{D_j}$ acting on an input $s \in \mathbb{R}^n$, producing
\[
s_i = \mathcal{P}_{D_i}(s), \qquad s_j = \mathcal{P}_{D_j}(s).
\]
In a standard setting, these representations would be learned independently. In contrast, we introduce a cross-projection consistency mechanism, in which each representation is evaluated relative to the other.

Specifically, we compute cross-aligned components
\[
\tilde{s}_i = \frac{s_i (s_i^\top s_j)}{s_i^\top s_i}, 
\qquad
\tilde{s}_j = \frac{s_j (s_j^\top s_i)}{s_j^\top s_j},
\]
which measure the extent to which each representation is supported by the other manifold.

These aligned components are not fed back into the same module, but instead passed forward to subsequent layers. Each layer applies a single cross-projection consistency step, refining the representations in a feedforward manner:
\begin{equation}
s_i^{(\ell+1)} = \mathcal{P}_{D_i}^{(\ell)}(\tilde{s}_j^{(\ell)}), 
\qquad
s_j^{(\ell+1)} = \mathcal{P}_{D_j}^{(\ell)}(\tilde{s}_i^{(\ell)}),
\end{equation}
where $\ell$ denotes the layer index and the projection operators may differ across layers.

This design yields a hierarchical, stage-wise refinement process: rather than performing a full projection in a single step, the network progressively improves representations across layers. At each stage, parallel branches perform cross-checks via orthogonal projections, ensuring that only mutually consistent components are preserved.

Each layer can therefore be viewed as: (i) computing candidate representations in parallel, (ii) performing a cross-projection consistency check between them, (iii) passing refined representations forward to the next layer.

This collaborative mechanism provides a standalone framework for enforcing cross-representation consistency, which can be applied in settings such as denoising and multi-view learning. When combined with residual decomposition as shown in the following subsection, it gives rise to attention-like aggregation in deep architectures.

\subsubsection{Transformers from the PoS Perspective}
\label{app:transformer_pos}

We now show how transformer architectures naturally arise from the geometric principles of the PoS hypothesis. The key idea is that projection onto a union of submanifolds can be decomposed into sequential residual-removal steps, and further enhanced through collaborative cross-consistency across parallel representations.

\paragraph{Step 1: Projection as residual removal.}
Consider a representation $s \in \mathbb{R}^n$ and a dictionary (or subspace) $D$. Classical projection can be written in two equivalent forms:
\begin{equation}
\hat{s} = D x, 
\qquad \text{or equivalently} \qquad 
\hat{s} = s - D^{\perp} x_{\perp},
\end{equation}
where $D$ spans the explainable directions and $D^{\perp}$ spans the complementary (residual) subspace.

The corresponding residual is
\begin{equation}
s_{\mathrm{res}} = s - \hat{s}.
\end{equation}

This shows that projection can be interpreted either as constructing the signal from its subspace components ($Dx$), or as removing residual directions ($s - D^{\perp}x_{\perp}$). In both cases, all explainable components are handled in a single step.

\paragraph{Step 2: Sequential decomposition of projection.}
Instead of removing all residual directions in a single step, the same projection can be decomposed into a sequence of partial residual removals:
\begin{equation}
\hat{s}
=
s - D^{\perp} x
=
s - D_1^{\perp} x_1 - D_2^{\perp} x_2 - \cdots,
\end{equation}
where each $D_\ell^{\perp}$ captures a subset of residual directions that are progressively removed across layers.

Each step removes only a portion of the residual, progressively refining the representation toward the projected signal. This shows that projection can be factorized into a sequence of simpler residual-removal operations.

This provides a geometric interpretation of depth: rather than computing a projection in a single step, deep networks progressively eliminate residual components across layers, yielding the same final representation through a sequence of structured refinements.

\begin{figure}[t]
\centering
\includegraphics[width=0.85\linewidth]{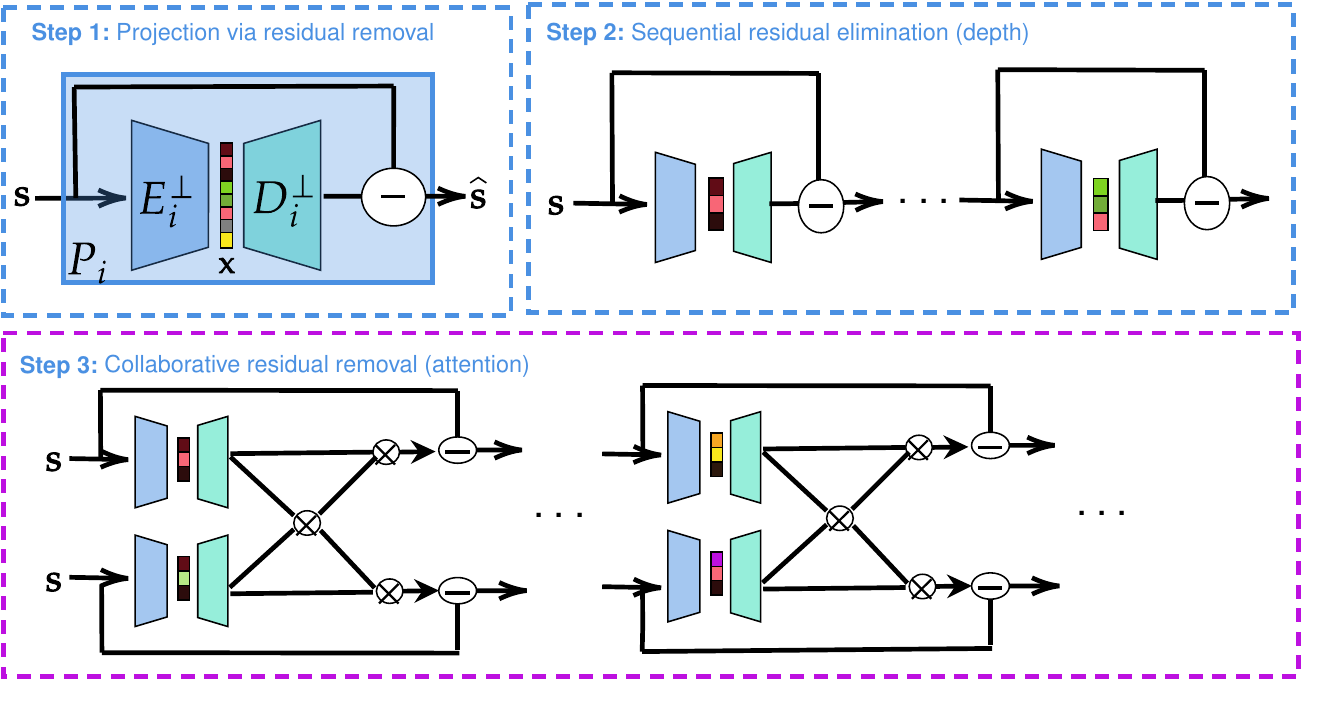}
\vspace{-0.2cm}
\caption{
From residual learning to transformers under the PoS framework.
\textbf{(Step 1)} Projection is interpreted as residual (null-space) removal, where components explained by a subspace are subtracted from the signal.
\textbf{(Step 2)} The same projection can be decomposed into a sequence of residual removals across layers, providing a geometric interpretation of depth.
\textbf{(Step 3)} Parallel modules perform collaborative residual removal via cross-projection, where components are removed from the signal in proportion to agreement.
Extending this mechanism to multiple parallel representations naturally yields attention maps in transformer architectures.
}
\label{fig:transformer_pos}
\end{figure}

\paragraph{Step 3: Collaborative residual removal via cross-projection.}

We now extend the residual-removal framework to multiple parallel representations. Rather than directly projecting onto candidate submanifolds, each module first estimates the residual (orthogonal) component of the input with respect to its associated subspace.
Let the representations in each branch be initialized as
\(
s_i^{(0)} = s_j^{(0)} = s.
\) At layer $\ell$, each branch maintains its own representation, and the residual components are computed as
\[
r_i^{(\ell)} = \mathcal{P}_{D_i^\perp}\!\big(s_i^{(\ell)}\big), 
\qquad 
r_j^{(\ell)} = \mathcal{P}_{D_j^\perp}\!\big(s_j^{(\ell)}\big),
\]
denoting the residual components at layer $\ell$. We compute cross-aligned residuals
\begin{equation}
\tilde{r}_i^{(\ell)} = \frac{r_i^{(\ell)} \big(r_i^{(\ell)\top} r_j^{(\ell)}\big)}{r_i^{(\ell)\top} r_i^{(\ell)}},
\qquad
\tilde{r}_j^{(\ell)} = \frac{r_j^{(\ell)} \big(r_j^{(\ell)\top} r_i^{(\ell)}\big)}{r_j^{(\ell)\top} r_j^{(\ell)}}.
\end{equation}

Each branch then removes the agreed residual component from its own input, yielding
\begin{equation}
s_i^{(\ell+1)} = s_i^{(\ell)} - \tilde{r}_i^{(\ell)}, 
\qquad
s_j^{(\ell+1)} = s_j^{(\ell)} - \tilde{r}_j^{(\ell)}.
\end{equation}

Thus, rather than performing independent projections, the network collaboratively identifies and removes residual components that are consistently supported across representations. This prevents premature removal of submanifold-specific structure while ensuring that shared residual directions are progressively eliminated.

By iterating this process across layers, the network performs a structured, collaborative residual removal, preserving the union-of-submanifolds structure while refining the signal toward its intrinsic representation.

\paragraph{From two branches to many: emergence of attention.}

The above construction naturally extends from two parallel representations to a collection of multiple representations. Let $\{s_t^{(\ell)}\}_{t=1}^T$ denote a set of parallel representations (e.g., tokens in a sequence), each evolving through residual removal. For clarity, we assume that all branches correspond to projections onto the same family of subspaces, so that query, key, and value roles are not yet distinguished.

At layer $\ell$, each branch computes its residual component
\[
r_t^{(\ell)} = \mathcal{P}_{D^\perp}\!\big(s_t^{(\ell)}\big),
\]
where $D$ denotes the shared subspace family.

To extend cross-projection consistency, we measure the agreement between the residual of a given branch $q$ and all other residuals via inner products
\[
\alpha_{qt}^{(\ell)} = r_q^{(\ell)\top} r_t^{(\ell)}.
\]

These alignment scores quantify how strongly residual directions are shared across representations. The shared residual component for branch $q$ is then estimated as the aggregated projection onto the span of residuals from other branches:
\begin{equation}
\tilde{r}_q^{(\ell)} = \sum_{t=1}^T 
\frac{r_t^{(\ell)} \big(r_t^{(\ell)\top} r_q^{(\ell)}\big)}{r_t^{(\ell)\top} r_t^{(\ell)}}.
\end{equation}

The updated representation is obtained by removing this shared residual component:
\begin{equation}
s_q^{(\ell+1)} = s_q^{(\ell)} - \tilde{r}_q^{(\ell)}.
\end{equation}

This mechanism generalizes pairwise cross-projection to multiple representations: rather than removing residuals independently, each branch removes the component that is consistently supported across all other branches. The resulting weights $\alpha_{qt}^{(\ell)}$ define an alignment-based weighting over representations.

From this perspective, attention is not a heuristic mechanism for routing information, but a geometric procedure that estimates which components should be removed from a representation based on agreement across multiple submanifolds. Each attention weight measures alignment between representations, and the resulting aggregation identifies shared components that are subtracted from the signal.

This establishes a direct connection between PoS principles and modern attention-based architectures.

\subsubsection{PoS-Former}

\begin{figure}[]
\centering

\includegraphics[width=0.90\linewidth]{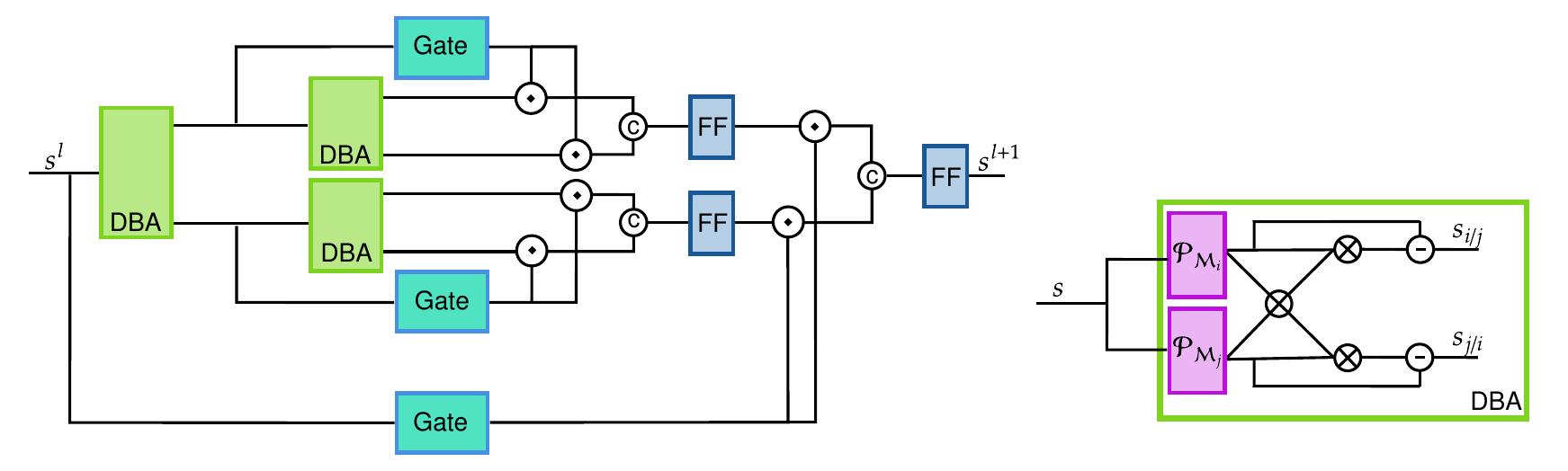}
\caption{Dual-branch attention (DBA) for subspace selection. Left: One layer of hierarchical nested decomposition based on DBA. Right: inner mechanism of a single DBA based on residual decomposition.}
\label{fig:posformer}
\end{figure}

Standard self-attention mechanisms often act as singular projection operators that struggle to represent mutually exclusive topological structures without catastrophic interference. Projecting a query signal onto a joint span often fails to resolve class-specific residual directions, leading to subspace overlap. To explicitly motivate representations consistent with the compact learning axiom, we introduce the \textit{Dual-Branch Orthogonal Flow}, a linear-time architectural mechanism designed to natively untangle intersecting submanifolds without relying on explicit data degradation.

\paragraph{Relation to transformer architectures:}
While transformer models aggregate information through attention-weighted combinations, the proposed PoSFormer follows a fundamentally different principle. Instead of mixing representations, it performs a \emph{hierarchical nested decomposition} of the input signal by progressively removing shared (normal-space) components and isolating submanifold-specific residuals. 

This decomposition is enforced explicitly through orthogonality constraints on residual submodules, ensuring that different branches capture geometrically independent directions. As a result, PoSFormer implements a structured \emph{unmixing} process, in contrast to the soft aggregation mechanism of standard attention.

The following analysis is framed within a linear subspace setting. Our practical implementation extends this into the nonlinear regime by enforcing submanifold separation in the feature domain, that is, after applying a nonlinear transformation to the input.
Given an input sequence $S \in \mathbb{R}^{T \times C}$ projected via local spatial depthwise convolutions, we map the features onto two distinct topological spaces. Let $\mathcal{M}_i, \mathcal{M}_j \in \mathbb{R}^{T \times C}$ denote these two distinct subspaces of the data manifold. To ensure non-negativity for flow normalization without the severe sparsity induced by Softmax, we apply a shifted exponential linear unit \cite{LinearAttention}:
\begin{equation}
\mathcal{M}_i = \text{ELU}(\text{Conv}_i(S)) + 1, \quad \mathcal{M}_j = \text{ELU}(\text{Conv}_j(S)) + 1
\end{equation}


For the dual feature projections $s_i \in \mathcal{M}_i$ and $s_j \in \mathcal{M}_j$ originating from the same input sequence, we isolate the tangential components belonging to their shared intersection subspace, denoted as $\mathcal{B} \approx \mathcal{M}_i \cap \mathcal{M}_j$. In a linear projection framework, the component of $s_i$ that is explainable by the shared intersection is found by projecting $s_i$ onto the basis of $s_j$, and vice versa. We approximate these shared intersection components ($s_{i,\mathcal{B}}$ and $s_{j,\mathcal{B}}$) via associative cross-attention:
\begin{equation}
s_{i,\mathcal{B}} = \frac{s_j(s_j^\top s_i)}{s_j(s_j^\top \mathbf{1})}, \quad s_{j,\mathcal{B}} = \frac{s_i(s_i^\top s_j)}{s_i(s_i^\top \mathbf{1})}
\label{eq:dba_intersection}
\end{equation}
where $\mathbf{1}$ is an all-ones vector for distributional normalization.
The corresponding components belonging to the residual submanifolds are then extracted via subtraction
\begin{equation}
s_{i,\mathcal{R}} \approx s_i - s_{i,\mathcal{B}}, \quad s_{j,\mathcal{R}} \approx s_j - s_{j,\mathcal{B}}
\label{eq:dba_residuals}
\end{equation}
These residuals encapsulate the disjoint, class-specific directions, corresponding to the orthogonal complements such that $s_{i,\mathcal{R}} \in \mathcal{M}_i \setminus \mathcal{M}_j$ and $s_{j,\mathcal{R}} \in \mathcal{M}_j \setminus \mathcal{M}_i$. 
To prevent the model from mapping inputs onto a collapsed joint plane, we apply a strict orthogonality constraint ($J_{orth}$) directly to these residuals:
\begin{equation}
J_{orth} = \mathbb{E} \left[ \left( \frac{\langle s_{i,\mathcal{R}}, s_{j,\mathcal{R}} \rangle}{\|s_{i,\mathcal{R}}\| \|s_{j,\mathcal{R}}\|} \right)^2 \right]
\label{eq:posformer_ortho_loss}
\end{equation}

Much like how structured degradation induces selective annihilation of residual nullspaces, this orthogonal penalty acts as an explicit geometric regularizer. It actively suppresses coefficient leakage between the dual branches, forcing the network to maintain geometrically independent bases for distinct fine-grained classes. This mathematically enforces the transversal decomposition requirement of the ideal network, ensuring $T_x\mathcal{M}_{D_i,R} \perp T_x\mathcal{M}_{D_j,R}$.

Finally, the orthogonally refined representations are filtered through a dynamic spatial gate $G(S) = \sigma(\text{Linear}(\text{DWConv}(S)))$ to ground the global flows in local manifold geometry before hierarchical integration
\begin{equation}
S^{\ell + 1} = \text{Norm}\Big(S^\ell + \text{FFN}\big([s^\ell_{i, \mathcal{R}} \odot G(S^\ell), \; s^\ell_{j, \mathcal{R}} \odot G(S^\ell)]\big)\Big)
\end{equation}

\begin{table}[h]
\centering
\caption{Performance comparison on CIFAR-10 and CIFAR-100 (trained from scratch).}
\label{tab:posformer_results}
\small
\begin{tabular}{lcccc}
\toprule
\textbf{Model} & \textbf{CIFAR-10 (\%)} & \textbf{CIFAR-100 Top-1 (\%)} & \textbf{CIFAR-100 Top-5 (\%)} \\ 
\midrule
 \cite{Cvt} (Baseline) & 92.50 & 69.35 & 89.09 \\
PoSFormer ($\lambda_o = 0.1$) & \textbf{93.40} & 71.61 & 90.11 \\
\textbf{PosFormer ($\lambda_o = 1.0$)} & \textbf{93.40} & \textbf{71.91} & \textbf{90.17} \\ 
\bottomrule
\end{tabular}
\end{table}

Figure~\ref{fig:posformer} illustrates the proposed architecture, where the DBA block describes Eq.~\ref{eq:dba_intersection}-\ref{eq:dba_residuals}. We compare this architecture with a vanilla transformer-based network replacing the gate-based, hierarchical framework depicted in Figure~\ref{fig:posformer}, top, with a standard self-attention mechanism. For a fair comparison, both architectures were designed with the same number of parameters ($\sim$2.6 M). We train each model using the AdamW optimizer with a base learning rate of $1\text{e-}3$ and a weight decay of $0.05$, modulated by a Cosine Annealing scheduler with warm restarts \cite{cosineannealing} ($T_0=50, T_{mult}=2$). The networks are trained for 150 epochs with a batch size of 128. We employ Cross-Entropy loss with a label smoothing factor of $0.1$ and apply gradient clipping (max norm $1.0$). Base data augmentations include Random Crop, Horizontal Flip, and RandAugment \cite{Randaugment} (2 ops, magnitude 9). Furthermore, at the batch level, we randomly apply either Mixup \cite{mixup} ($\alpha=0.2$), CutMix \cite{cutmix} ($\alpha=1.0$), or no blending, each with equal probability. The model checkpoint achieving the highest validation accuracy is used for final testing. The experiments were conducted using a single NVIDIA RTX 6000 GPU, requiring approximately 2 hours of training time per model.

Trained from scratch, the DBA achieved a test accuracy of 93.40$\%$ in CIFAR-10 \cite{Cifar}, compared to 92.50$\%$ accuracy of the vanilla transformer architecture. We also trained the models on the CIFAR-100 dataset. Table~\ref{tab:posformer_results} summarizes results. The ablation study on the effect of the orthogonality loss (Eq.~\ref{eq:posformer_ortho_loss}) is also performed by training the model with varying loss coefficients. As seen in the table, the PoS-based novel dual-branch attention architecture yields consistent improvement in training accuracy in both datasets.

\section{Discussion and Future Work}
\label{appendix:discussion}

The folding hypothesis formalizes deep networks as successively partitioning input space into exponentially many affine regions \cite{folding1, folding2}, later refined through combinatorial and geometric analyses \cite{folding3, folding4}. However, these perspectives remain inherently local and piecewise-linear, lacking a global geometric or axiomatic characterization.

A complementary geometric perspective interprets neural networks as sequences of coordinate transformations acting on a data manifold, with residual architectures corresponding to discretized dynamical systems and inducing layerwise transformations of the underlying Riemannian metric \cite{hauser2017principles}. While mathematically rich, such approaches remain focused on coordinate evolution rather than providing a unifying structural principle for learned representations.

In parallel, a growing body of work in neural population geometry studies the structure
of learned representations through the lens of high-dimensional geometry in both
artificial and biological systems \cite{kriegeskorte2021neural, chung2021neural, perich2025neural}.
These approaches characterize phenomena such as low-dimensional structure and clustering,
providing powerful empirical insights. However, they primarily analyze representations
post hoc, rather than deriving an underlying generative or axiomatic mechanism.

We introduced the PoS hypothesis as an axiomatic geometric framework for understanding the internal mechanisms of deep neural networks. By extending beyond the folding, the manifold hypothesis, and other geometric interpretations, PoS provides a unified geometric interpretation of deep networks. This perspective explains how networks organize, refine, and compose representations across layers, offering a mechanistic view of deep learning beyond purely empirical observations.

A central implication of the PoS framework is that many widely observed phenomena
in deep learning arise naturally from geometric principles. For instance, additive
Gaussian noise has been extensively studied in representation learning and
regularization, where it improves robustness, encourages smooth latent
representations, and enhances generalization through methods such as denoising
autoencoders, variational autoencoders, and stochastic latent regularization
techniques \cite{denoisedAutoencoders, chen2016variational}. It has also been
shown to improve adversarial robustness \cite{liu2018towards, cohen2019certified}.
In parallel, masking-based perturbation strategies, where portions of the input are
randomly removed, have been widely adopted to promote invariant and robust
representations \cite{he2022masked}. From the PoS perspective (Sections~\ref{Disentanglement} and~\ref{Orthogonality}),
these seemingly disparate techniques admit a unified geometric interpretation:
they act as mechanisms that induce controlled annihilation of residual subspaces
through hierarchical composition. Rather than viewing them as heuristic regularizers,
we interpret them as emergent consequences of the PoS axioms.

PoS provides a unifying geometric framework connecting the folding perspective,
the manifold hypothesis, and Riemannian viewpoints. In particular,
Remarks~\ref{Remark 1}--\ref{Remark 3} establish a structural invariance property:
once a network module implements a nonlinear orthogonal projection onto a union
of submanifolds, composing it with surrounding transformations that approximate
isometries preserves this form. Specifically, if a module realizes a projection operator $P_{\mathcal{M}}$ and
adjacent layers act as approximate isometries $g$, then the composed mapping
$g \circ P_{\mathcal{M}} \circ g^{-1}$ remains a projection onto the transformed
union $g(\mathcal{M})$. Consequently, in the ideal learning regime, the overall
network retains the structure of a nonlinear orthogonal projection onto a union
of submanifolds. This provides a mathematically grounded explanation for how hierarchical
composition preserves compact representations, and yields geometric intuition
for generalization, as illustrated in Figure~~\ref{fig:IsometryonSubmanifolds}.

Although the above invariance result is derived under idealized assumptions,
the toy examples provide concrete geometric insight into how these principles
manifest in practice. As illustrated in Figure~~\ref{fig:compactness_vs_ood}, when the underlying subspaces
are sufficiently separated (e.g., angles exceeding $90^\circ$), a ReLU-based
network can directly realize a compact representation by projecting onto a
union of submanifolds. In contrast, Figure~~\ref{fig:rotation_module}(a) shows that when this geometric
condition is violated, the same architecture behaves effectively as a linear
operator, collapsing the representation onto the joint span and failing to
achieve compactness. The constructions in Figure~~\ref{fig:rotation_module}(b--e) demonstrate that this limitation can be
systematically overcome through hierarchical composition. By introducing
learnable transformations that reorient the data via folding, the network
increases the effective separation between residual directions in a higher-
dimensional space. This enables subsequent nonlinear projections to recover
a union-of-submanifolds structure. These observations provide a geometric explanation of when ReLU networks succeed or fail, and show that depth acts as a mechanism for progressively constructing representations that satisfy the compactness requirements of the PoS framework. 

Additionally, this geometric perspective suggests a fundamental limitation
of standard ReLU-based architectures and raises an important open question.
Consider training a tied autoencoder with residual connections, where a ReLU
nonlinearity is applied in the hidden representation (bottleneck), on masked
inputs to learn a compact representation of a structured domain (e.g., images
of a single identity). What representation does the network converge to when
using conventional additive residual connections? How does this behavior
change if the skip connection is modified from summation $(+)$ to subtraction
$(-)$, aligning with the null-space removal principle? From the PoS viewpoint, these two designs correspond to fundamentally
different geometric operations,
and may lead to qualitatively different representation structures. We leave
a detailed theoretical and empirical analysis of this question as an important
direction for future work.

The postulates also suggest several directions for future investigation.
In particular, compact representations provide a natural explanation for both
the stability of in-distribution predictions and the failure modes observed
under out-of-distribution inputs. From the PoS perspective, such failures arise
when inputs are projected onto incorrect components of the learned union of
submanifolds, leading to ambiguous or misleading representations. This viewpoint also offers a potential geometric interpretation of the
\textbf{double-descent} phenomenon \cite{belkin2019reconciling}. As training progresses, internal modules may undergo
a transition from non-compact representations (approximating a joint span) to
compact representations (approximating a union of submanifolds). The
generalization behavior of the network may critically depend on this transition,
since the geometric mechanisms underlying invariance and transfer (as illustrated
in Figure~~\ref{fig:IsometryonSubmanifolds}) become effective only when the
representation aligns with a union-of-submanifolds structure rather than their
span (used here in a loose sense to denote the ambient manifold). This transition may also be related to the recently observed \textbf{grokking
phenomenon} \cite{power2022grokking}, where networks abruptly shift from
memorization to generalization during training, potentially reflecting
the emergence of compact representations.

Residual connections emerge as a necessary mechanism for modeling complementary (orthogonal) directions, enabling networks to progressively refine representations without destroying previously learned structure as explained in Figure \ref{fig:maskedLearning1}. Furthermore, disentanglement and class-specific organization arise as consequences of projection and transversality, rather than requiring explicit architectural constraints. We also discuss in Section~\ref{appendix:attention} that transformers' self attention can be interpreted as direct consequences of this phenomenon and we also introduce an improved variant, PoSFormer, as an application of the PoS perspective.

The PoS hypothesis suggests that, in classification settings, the decisive information is carried by residual components rather than the shared (intersection) structure. This is explicitly illustrated in Figure~~\ref{fig:intersection_residual}, where the intersection of submanifolds captures common structure, while the residual directions encode discriminative variations between classes. Consequently, classification naturally operates on these residual components. This observation is consistent with recent empirical findings by \cite{balestriero2024learning}, who show that reconstruction-based models primarily learn the top-variance subspace of the data, whereas features relevant for perception lie in the complementary low-variance subspace. From the PoS perspective, these two regimes admit a geometric interpretation: the top subspace corresponds to the intersection (shared structure), while the bottom subspace aligns with residual directions that separate submanifolds. We emphasize that this connection is not explicitly made in prior work; rather, it emerges naturally as a consequence of the PoS framework. A more detailed theoretical and empirical investigation of this relationship remains an important direction for future work.

An important direction is the connection between the PoS framework and generative models, particularly denoising diffusion models. In diffusion models, generation proceeds through a sequence of transformations that progressively remove noise from a corrupted signal. From the PoS perspective, this process admits a geometric interpretation consistent with Figure~~\ref{fig:DNNs}. The sequence of transformations can be viewed as a trajectory of the signal across a hierarchy of submanifolds, where each step maps the representation from one submanifold to another through structured operators. In the idealized regime described by Remarks~\ref{Remark 1}--\ref{Remark 3}, these transformations approximate isometries composed with nonlinear orthogonal projections onto unions of submanifolds. Consequently, diffusion dynamics can be interpreted as a progressive unfolding and re-alignment of the signal within a structured geometric space, rather than merely a stochastic denoising process. We believe this connection provides a promising foundation for a principled geometric understanding of generative modeling, which we leave for future investigation.

The PoS framework also suggests intriguing connections to theories in cognitive science and neuroscience, particularly the free-energy principle and the concept of Markov blankets . In these models, a system is partitioned into internal states, external states, and a Markov blanket composed of sensory and active states \cite{ororbia2023mortal}, which mediate all interactions between the system and its environment. From the PoS perspective, the central module (cf. Figure~~\ref{fig:DNNs}) naturally plays the role of internal states, while the surrounding transformations act analogously to sensory and action interfaces. In this sense, the capsule-like structure induced by PoS can be interpreted as a geometric realization of a Markov blanket, where information exchange is constrained through structured mappings between submanifolds.

The PoS framework also suggests an alternative interpretation of representation learning that does not explicitly rely on reconstruction. In particular, when the decoder is viewed as the adjoint of the encoder, projection can be interpreted as the removal of residual (null-space) components. From this perspective, learning can be understood as progressively suppressing these residual directions, driving representations toward compact submanifolds. This viewpoint hints at a connection between PoS and energy-based or inference-driven formulations of learning. In particular, the magnitude of residual components can be interpreted as a form of “energy” that measures deviation from the learned structure, drawing a conceptual link to the free-energy principle in neuroscience \cite{friston2005theory, friston2010free}. A full development of this perspective, including its implications for decoder-free architectures and biological learning models, is left for future work.

Several important questions also remain open. First, while the PoS postulates characterize the structure of ideal representations, understanding how such compact learning given in Postulate I can be guaranteed. We propose two alternative approach: masked or degraded representation learning (e.g., as in ECG anomaly detection study); or new loss functions, such as push-pull keeping degraded images projection away from the clean images projection while learning clean high quality image representation, i.e., non-linear orthogonal projection over union-of-submanifolds. Both have their own advantages and limitations, more advanced techniques are needed to be proposed in future works. Among the other topics, it is less investigated one. Second, extending the framework to broader classes of models, including state-space models, sequence models, and large-scale generative architectures, is an important direction. Establishing whether the projection-based interpretation holds across these settings may lead to a unified geometric theory of modern machine learning. Finally, the connection between PoS and symmetry, invariance, and group actions offers a promising avenue for future work. By explicitly modeling transformation structure, it may be possible to design architectures that more efficiently capture the geometry of data, leading to improved generalization, robustness, and interpretability.

Overall, the PoS framework provides a foundation for understanding deep learning systems through geometry and structure, and opens new directions for both theoretical analysis and architectural design.

\bibliographystyle{plain}
\bibliography{references}

\clearpage   
\appendix

\section{Notation}
\label{app:Notation}

In this work, we consider the $\ell_p$--norm of a vector 
$\mathbf{x} \in \mathbb{R}^n$, defined by
\(
\|\mathbf{x}\|_{p}
    = \left( \sum_{i=1}^n |x_i|^p \right)^{1/p}   
\) with $p \ge 1$. 
The $\ell_0$ ``norm'' is given by
\(
\|\mathbf{x}\|_{0}
    = \lim_{p \to 0} \sum_{i=1}^n |x_i|^p
\), which counts the number of nonzero entries in $\mathbf{x}$.  
We also use the $\ell_\infty$--norm,
\(
\|\mathbf{x}\|_{\infty}
    = \max_{1 \le i \le n} |x_i|.
\) A signal $\mathbf{s}$ is called \emph{strictly $k$-sparse} if it can be 
represented with at most $k$ nonzero coefficients in a basis 
$\boldsymbol{\Phi}$, that is,
\(\mathbf{s} = \boldsymbol{\Phi}\,\mathbf{x}\),
\(
\|\mathbf{x}\|_{0} \le k.
\) The \textbf{support} of $\mathbf{x}$ is the set
\(
\Lambda = \{ i \mid x_i \neq 0 \}
    \subset \{1,2,\ldots,n\},
\)
which identifies the indices of the nonzero components.  
Its complement with respect to $\{1,\dots,n\}$ is
\(
\Lambda^c = \{1,2,\ldots,n\} \setminus \Lambda.
\) The vector $\mathbf{x}_\Lambda \in \mathbb{R}^{|\Lambda|}$ denotes the 
restriction of $\mathbf{x}$ to the indices in $\Lambda$.
Similarly, for a matrix $\mathbf{M} \in \mathbb{R}^{m \times n}$, the submatrix 
$\mathbf{M}_\Lambda \in \mathbb{R}^{m \times |\Lambda|}$ is formed by selecting 
the columns indexed by $\Lambda$.

\section{Zero-Shot, Explainable Anomaly Detection: ECG Data and Preprocessing}
\label{ECGDataset}

We evaluate our method on the MIT-BIH Arrhythmia Database \cite{moody2001impact,goldberger2000physiobank}, which contains two-channel ECG recordings from 48 subjects, each approximately 30 minutes long with annotated heartbeat labels.

Each heartbeat is represented using a fixed-length signal of 128 samples obtained via resampling. Following standard practice, we consider two representations centered at the R-peak: (i) a \emph{single-beat} segment, extracted by locating adjacent R-peaks and selecting an inward window (10\% toward the center), and (ii) a \emph{beat-trio} segment, constructed by extending the window outward (10\%), thereby including neighboring beats to capture temporal morphology. R-peak locations are obtained from annotations; when unavailable, standard QRS detection methods may be used \cite{pantompkins, li1995detection, robustpeakkiranyaz}.

We adopt the AAMI standard \cite{AAMI}, grouping beats into five categories (N, V, S, F, Q), where N is treated as normal and all others as anomalous. Consistent with prior personalized ECG studies \cite{kiranyaz2017personalized}, we use 34 records, excluding patients with pacemakers or highly irregular signals.

For each target subject, training is performed using only normal beats from the first five minutes of recording, while abnormal beats are reserved for evaluation. To enrich the representation space, this subject-specific data is combined with normal beats from the remaining subjects.

\section{Zero-Shot, Explainable Anomaly Detection: Ablation Study}
\label{app:ablationECG}

\begin{table}[H]
    \centering
    \resizebox{0.5\textwidth}{!}{
    \begin{tabular}{l|c|c|c|c|c}
    \hline
    \rowcolor{gray!30} \textbf{\#} & \textbf{\# Atoms} & \textbf{Enc/Dec} & \textbf{Nonlinearity} & \textbf{Masked} & \textbf{F1} \\
    \hline
    1 & 10 & Tied & Linear & \ding{55} & 0.821 \\
    2 & 30 & Tied & Linear & \ding{55} & 0.787 \\
    3 & 10 & Tied & ReLU & \ding{55} & 0.836 \\
    4 & 30 & Tied & ReLU & \ding{55} & 0.860 \\
    5 & 10 & Tied & Linear & \ding{51} & 0.825 \\
    6 & 30 & Tied & Linear & \ding{51} & 0.806 \\
    7 & 10 & Tied & ReLU & \ding{51} & 0.838 \\
    8 & 30 & Tied & ReLU & \ding{51} & 0.861 \\
    9 & 10 & Untied & ReLU & \ding{55} & - \\
    10 & 30 & Untied & ReLU & \ding{55} & 0.856 \\
    11 & 10 & Untied & Linear & \ding{51} & - \\
    12 & 30 & Untied & Linear & \ding{51} & 0.832 \\
    13 & 10 & Untied & ReLU & \ding{51} & - \\
    14 & 30 & Untied & ReLU & \ding{51} & 0.872 \\
    \end{tabular}
    }
    \caption{Ablation study on compact personalized models for zero-shot anomaly detection.}
    \label{tab:ablationECG}
\end{table}

This section presents an ablation study of the personalized zero-shot anomaly detection framework introduced in Section~\ref{sec:ZeroShotECG}. Specifically, we analyze the effect of architectural and training design choices on learning the healthy submanifold $\mathcal{M}_{D_i}$ for each user. Each personalized model consists of a compact encoder–decoder network, where a linear encoder $f_E:\mathbb{R}^n \rightarrow \mathbb{R}^N$ maps input signals to a low-dimensional latent space, and a decoder $f_D:\mathbb{R}^N \rightarrow \mathbb{R}^n$ reconstructs the signal. We systematically vary the latent dimensionality ($N$), the use of tied versus untied encoder–decoder weights, the presence of nonlinear activation functions, and input masking. The results in Table~\ref{tab:ablationECG} demonstrate how these components influence the transition from simple subspace projections to more expressive union-of-subspaces representations, and their impact on anomaly detection performance.

As discussed in Figure~~\ref{fig:compactness_vs_ood}, when the network does not include nonlinear activation functions and operates in a low-dimensional regime ($N \ll n$, e.g., $N=10$, $n=128$), it effectively learns a projection onto a single linear subspace. To extend this representation from a single subspace (i.e., the span of directions) to a union of subspaces, nonlinear activations such as ReLU are required.

Furthermore, as discussed in the orthogonality section, applying input masking with varying window sizes encourages the separation of submanifold components and promotes a union-of-subspaces structure. However, when masking is applied in a highly compressed regime (e.g., $N=10$), no performance improvement is observed, and in some cases performance degrades. This can be explained by the increase in the effective dimensionality of the representation. Specifically, for a union of subspaces $\{\mathcal{S}_i\}$, define
\[
k_{\max} = \max_{i \neq j} \dim\bigl(\mathrm{span}(\mathcal{S}_i \cup \mathcal{S}_j)\bigr).
\]
In general, $k_{\max}$ exceeds the dimensionality of individual subspaces. As discussed in the Restricted Isometric Embedding framework, stable and unique representations require the embedding dimension to satisfy $N \geq k_{\max}$. When $N$ is too small, the model cannot faithfully represent the union structure, leading to degraded performance.

This explains the observed improvement when the latent dimension is increased (e.g., $N=30$). In this regime, the combination of nonlinear activation and masking enables the personalized mini-networks to learn a compact yet expressive representation of the subject-specific healthy manifold, resulting in improved anomaly detection performance.

This transition from a single subspace to a union of subspaces directly reflects the geometric decomposition discussed in the main text.

\section{Proofs}
\label{app:Proofs}
\subsection{Restricted Orthogonality Constant}
\label{app:roc}

\begin{definition}[Restricted Orthogonality Constant]
Let $\mathbf{D}_{\Lambda_i} \in \mathbb{R}^{n \times k_i}$ 
and $\mathbf{D}_{\Lambda_j} \in \mathbb{R}^{n \times k_j}$ 
be matrices whose columns span the 
$k_i$- and $k_j$-dimensional subspaces 
$\operatorname{span}(\mathbf{D}_{\Lambda_i})$ 
and 
$\operatorname{span}(\mathbf{D}_{\Lambda_j})$, respectively. The $(k_i,k_j)$-restricted orthogonality constant 
$\theta_{k_i,k_j}$ is defined as the smallest quantity such that
\[
\big|
\langle 
\mathbf{D}_{\Lambda_i}\mathbf{x}, 
\mathbf{D}_{\Lambda_j}\mathbf{y}
\rangle
\big|
\;\le\;
\theta_{k_i,k_j}
\,
\|\mathbf{x}\|_2
\|\mathbf{y}\|_2
\]
for all $\mathbf{x}\in\mathbb{R}^{k_i}$ and 
$\mathbf{y}\in\mathbb{R}^{k_j}$.
\end{definition}

\subsection{Proof of the Bounds on Inter-Subspace Interference}
\label{app:crossproof}

The following argument follows the standard adjoint manipulation and restricted-orthogonality/RIP bounds used in the dual-certificate analysis for sparse recovery; see, e.g., \cite{candes2005decoding}.
Let the projected signal be defined as
\(
\tilde{\mathbf{s}}
=
\mathbf{D}_{\Lambda_i}
(\mathbf{D}_{\Lambda_i}^{\top}\mathbf{D}_{\Lambda_i})^{-1}
\mathbf{x}^*_{\Lambda_i},
\)
where \(\mathbf{x}^*_{\Lambda_i}\in\mathbb{R}^{k_i}\) is the coefficient vector. It follows directly that
\(
\mathbf{D}_{\Lambda_i}^{\top}\tilde{\mathbf{s}}
=
\mathbf{D}_{\Lambda_i}^{\top}
\mathbf{D}_{\Lambda_i}
(\mathbf{D}_{\Lambda_i}^{\top}\mathbf{D}_{\Lambda_i})^{-1}
\mathbf{x}^*_{\Lambda_i}
=
\mathbf{x}^*_{\Lambda_i}.
\) Let \(\Lambda_j\) be a disjoint index set from \(\Lambda_i\) and let 
\(\mathbf{x}_{\Lambda_j}\in\mathbb{R}^{k_j}\).
We seek to bound the interaction term
\(
\left|
\left\langle
\mathbf{D}_{\Lambda_j}^{\top}\tilde{\mathbf{s}},
\mathbf{x}_{\Lambda_j}
\right\rangle
\right|.
\) Using the definition of the adjoint operator,
\(
\left\langle
\mathbf{D}_{\Lambda_j}^{\top}\tilde{\mathbf{s}},
\mathbf{x}_{\Lambda_j}
\right\rangle
=
\left\langle
\tilde{\mathbf{s}},
\mathbf{D}_{\Lambda_j}\mathbf{x}_{\Lambda_j}
\right\rangle.
\) Substituting the expression for \(\tilde{\mathbf{s}}\) yields
\[
\left\langle
\tilde{\mathbf{s}},
\mathbf{D}_{\Lambda_j}\mathbf{x}_{\Lambda_j}
\right\rangle
=
\left\langle
\mathbf{D}_{\Lambda_i}
(\mathbf{D}_{\Lambda_i}^{\top}\mathbf{D}_{\Lambda_i})^{-1}
\mathbf{x}^*_{\Lambda_i},
\mathbf{D}_{\Lambda_j}\mathbf{x}_{\Lambda_j}
\right\rangle.
\]

Define
\(
\mathbf{a}
=
(\mathbf{D}_{\Lambda_i}^{\top}\mathbf{D}_{\Lambda_i})^{-1}
\mathbf{x}^*_{\Lambda_i},
\quad
\mathbf{b}
=
\mathbf{x}_{\Lambda_j}.
\) Then the previous expression becomes
\(
\left|
\left\langle
\mathbf{D}_{\Lambda_i}\mathbf{a},
\mathbf{D}_{\Lambda_j}\mathbf{b}
\right\rangle
\right|.
\) By the definition of the restricted orthogonality constant
(Definition~5), we have
\(
\left|
\left\langle
\mathbf{D}_{\Lambda_i}\mathbf{a},
\mathbf{D}_{\Lambda_j}\mathbf{b}
\right\rangle
\right|
\le
\theta_{k_i,k_j}
\|\mathbf{a}\|_2
\|\mathbf{b}\|_2 .
\)Substituting the definitions of \(\mathbf{a}\) and \(\mathbf{b}\) gives
\[
\left|
\left\langle
\mathbf{D}_{\Lambda_j}^{\top}\tilde{\mathbf{s}},
\mathbf{x}_{\Lambda_j}
\right\rangle
\right|
\le
\theta_{k_i,k_j}
\|
(\mathbf{D}_{\Lambda_i}^{\top}\mathbf{D}_{\Lambda_i})^{-1}
\mathbf{x}^*_{\Lambda_i}
\|_2
\|\mathbf{x}_{\Lambda_j}\|_2 .
\]

From the RIP bound on the Gram matrix,
\(
\|(\mathbf{D}_{\Lambda_i}^{\top}\mathbf{D}_{\Lambda_i})^{-1}\|
\le
(1-\delta_{k_i})^{-1},
\)
which implies
\(
\|
(\mathbf{D}_{\Lambda_i}^{\top}\mathbf{D}_{\Lambda_i})^{-1}
\mathbf{x}^*_{\Lambda_i}
\|_2
\le
\frac{1}{(1-\delta_{k_i})}
\|\mathbf{x}^*_{\Lambda_i}\|_2 .
\) Therefore
\[
\left|
\left\langle
\mathbf{D}_{\Lambda_j}^{\top}\tilde{\mathbf{s}},
\mathbf{x}_{\Lambda_j}
\right\rangle
\right|
\le
\frac{\theta_{k_i,k_j}}{1-\delta_{k_i}}
\|\mathbf{x}^*_{\Lambda_i}\|_2
\|\mathbf{x}_{\Lambda_j}\|_2 .
\]

Since $\mathbf{x}_{\Lambda_j}\in\mathbb{R}^{k_j}$ is arbitrary, we may invoke the variational characterization of the $\ell_2$ norm,
\(
\|\mathbf{u}\|_2=\sup_{\mathbf{v}\neq 0}\frac{|\langle \mathbf{u},\mathbf{v}\rangle|}{\|\mathbf{v}\|_2}.
\)
Letting $\mathbf{u}=\mathbf{D}_{\Lambda_j}^{\top}\tilde{\mathbf{s}}$ and $\mathbf{v}=\mathbf{x}_{\Lambda_j}$, and using the bound established above,
\[
\left|\left\langle \mathbf{D}_{\Lambda_j}^{\top}\tilde{\mathbf{s}},\,\mathbf{x}_{\Lambda_j}\right\rangle\right|
\le
\frac{\theta_{k_i,k_j}}{1-\delta_{k_i}}\,
\|\mathbf{x}^*_{\Lambda_i}\|_2\,
\|\mathbf{x}_{\Lambda_j}\|_2,
\]
we obtain
\[
\|\mathbf{D}_{\Lambda_j}^{\top}\tilde{\mathbf{s}}\|_2
=
\sup_{\mathbf{x}_{\Lambda_j}\neq 0}
\frac{\left|\left\langle \mathbf{D}_{\Lambda_j}^{\top}\tilde{\mathbf{s}},\,\mathbf{x}_{\Lambda_j}\right\rangle\right|}{\|\mathbf{x}_{\Lambda_j}\|_2}
\le
\frac{\theta_{k_i,k_j}}{1-\delta_{k_i}}\,
\|\mathbf{x}^*_{\Lambda_i}\|_2.
\]
This yields the desired $\ell_2$ bound on the cross-residual energy.

To bound the second term, note that $\mathbf{F}_{\Lambda_i,r}^{\top}\mathbf{c}_r\in 
\operatorname{span}(\mathbf{D}_{\Lambda_i})^{\perp}$ while 
$\mathbf{D}_{\Lambda_j}\mathbf{x}_{\Lambda_j}\in 
\operatorname{span}(\mathbf{D}_{\Lambda_j})$. 
Let $\alpha_{\min}$ denote the smallest principal angle between 
$\operatorname{span}(\mathbf{D}_{\Lambda_i})$ and 
$\operatorname{span}(\mathbf{D}_{\Lambda_j})$. 
By Definition~5, the restricted orthogonality constant satisfies
\[
\theta_{k_i,k_j}
=
\sup_{x,y}
\frac{|\langle \mathbf{D}_{\Lambda_i}x,\mathbf{D}_{\Lambda_j}y\rangle|}
{\|x\|_2\|y\|_2}
\le
\cos(\alpha_{\min}),
\]
and therefore
\[
\sin(\alpha_{\min}) \le \sqrt{1-\theta_{k_i,k_j}^2}.
\]
Using the variational characterization and the principal-angle bound for cross projections, we obtain
\[
\|\mathbf{D}_{\Lambda_j}^{\top}\mathbf{F}_{\Lambda_i,r}^{\top}\mathbf{c}_r\|_2
\le
\sin(\alpha_{\min})\,\|\mathbf{c}_r\|_2
\le
\sqrt{1-\theta_{k_i,k_j}^2}\,\|\mathbf{c}_r\|_2 .
\]
Combining this with the previous estimate on 
$\|\mathbf{D}_{\Lambda_j}^{\top}\widetilde{\mathbf{s}}\|_2$ yields
\[
\|\widetilde{\mathbf{x}}_{\Lambda_j}\|_2
=
\|\mathbf{D}_{\Lambda_j}^{\top}\mathbf{s}\|_2
\le
\frac{\theta_{k_i,k_j}}{1-\delta_{k_i}}\|\mathbf{x}^*_{\Lambda_i}\|_2
+
\sqrt{1-\theta_{k_i,k_j}^2}\,\|\mathbf{c}_r\|_2 .
\]

\subsection{Group Actions and Orbits}
\label{app:group_action}

For completeness, we briefly recall the notions of group actions and orbits used in the main text. Standard references include \cite{lee2000introduction, nakahara2018geometry}.

\paragraph{Group:}
A group $(G,\circ)$ is a set $G$ equipped with a binary operation $\circ : G\times G\rightarrow G$ satisfying the following properties:
(i) associativity $(g_1\circ g_2)\circ g_3=g_1\circ(g_2\circ g_3)$,
(ii) existence of an identity element $e\in G$ such that $e\circ g=g\circ e=g$ for all $g\in G$, and
(iii) existence of an inverse $g^{-1}\in G$ for every $g\in G$ such that $g\circ g^{-1}=g^{-1}\circ g=e$.

\paragraph{Group action.}
Let $G$ be a group and $X$ a set. A (left) group action of $G$ on $X$ is a map
\[
G\times X \to X, \qquad (g,x)\mapsto g\cdot x,
\]
such that
\[
e\cdot x = x, \qquad
(g_1 g_2)\cdot x = g_1\cdot (g_2\cdot x)
\]
for all $g_1,g_2\in G$ and $x\in X$.

\paragraph{Orbit.}
Given $x\in X$, the orbit of $x$ under the group action is
\[
G\cdot x = \{\, g\cdot x \mid g\in G \,\}.
\]
More generally, if $\mathcal{M}\subset X$ is a subset, the orbit of $\mathcal{M}$ is
\[
G\cdot \mathcal{M} = \bigcup_{g\in G} g(\mathcal{M}).
\]

In the context of this work, the representation space is denoted by $X$, which may correspond to the input space or an intermediate latent space within the network. The group $G$ consists of learnable transformations (i.e., isometries)  acting on $X$. The orbit $G\cdot\mathcal{M}$ therefore represents the family of manifolds obtained by applying these transformations to a canonical manifold (or union of manifolds) $\mathcal{M}\subset X$.

\subsection{Locally Trivial Fibrations}
\label{localfibration}

For completeness, we briefly recall the notion of a locally trivial fibration used in the main text. Let $E$ and $B$ be smooth manifolds and let $\pi : E \rightarrow B$ be a smooth map. We say that $\pi$ defines a \emph{locally trivial fibration} with fiber $F$ if for every point $b \in B$ there exists an open neighborhood $U \subset B$ such that
\[
\pi^{-1}(U) \cong U \times F,
\]
where $\cong$ denotes a diffeomorphism compatible with the projection onto $U$. Intuitively, this means that although the global space $E$ may have a complicated structure, it locally decomposes into a product between a base coordinate $U$ and a fiber $F$.

A simple example is the torus $T^2$, which can be written as the product $S^1 \times S^1$. Viewing the projection $\pi : T^2 \rightarrow S^1$ onto the first circle, the preimage of any open arc $U \subset S^1$ satisfies
\[
\pi^{-1}(U) \cong U \times S^1,
\]
so locally the torus looks like a cylinder. In this case the bundle is trivial both locally and globally.

More intricate fibrations arise when the product structure holds only locally. A classical example is the Hopf fibration $S^3 \rightarrow S^2$, where the fibers are circles $S^1$. Around any small neighborhood $U \subset S^2$, the preimage satisfies $\pi^{-1}(U) \cong U \times S^1$, but globally the total space $S^3$ is not the product $S^2 \times S^1$. Such structures appear naturally in geometry and physics; see \cite[Sec.~1.3]{dundas2018short} for a detailed discussion and examples involving quantum state spaces.

\subsection{Sample Complexity of Hierarchical Manifold Learning}
\label{appendix:samplecomplexity}

Our proof is a natural extension of the manifold sampling theory of Niyogi et al.~\cite{niyogi2008finding}, which provides bounds on the number of samples required to cover a compact submanifold with high probability. Let $\mathcal M_{D_i}\subset\mathbb R^n$ be a compact $k$-dimensional submanifold and let $\mathcal M=\bigcup_i\mathcal M_{D_i}$ denote the union of learned submanifolds introduced in the main text.

Following Niyogi et al.~\cite{niyogi2008finding}, the number of samples required to recover the geometry of a compact $k$-dimensional manifold depends on three factors: the intrinsic dimension $k$, the volume of the manifold, and a geometric regularity parameter known as the condition number.

A sample set $\{x_j\}$ is said to be $\epsilon$-dense in $\mathcal M_{D_i}$ if every point $p\in\mathcal M_{D_i}$ lies within distance $\epsilon$ of at least one sample, i.e.,
\[
\forall p\in\mathcal M_{D_i}, \quad \exists\, x_j \;\text{such that}\; \|p-x_j\|<\epsilon .
\]
Intuitively, this means the samples cover the manifold at spatial resolution $\epsilon$.

The geometric regularity of the manifold is characterized by its condition number $1/\tau$, where $\tau$ is the \emph{reach} of the manifold. The reach is defined as the largest radius $\tau$ for which the tubular neighborhood
\[
\{x \in \mathbb{R}^n : d(x,\mathcal M_{D_i}) < \tau\}
\]
admits a unique nearest-point projection onto $\mathcal M_{D_i}$. Geometrically, $\tau$ bounds both the curvature of the manifold and the minimum separation between distinct components.

In our framework, Postulate~II assumes that the nonlinear projection operator is well-defined within a neighborhood of radius $\gamma$. This projection radius must lie inside the reach of the manifold, so that
\[
\epsilon < \gamma \le \tau ,
\]
ensuring both stable projection and sufficiently dense sampling.

Under these conditions, the $\epsilon$-covering number of $\mathcal M_{D_i}$ satisfies
\[
C_\epsilon(\mathcal M_{D_i})
=
O\!\left(
\frac{\mathrm{vol}(\mathcal M_{D_i})}
{\cos^k(\theta)\,\mathrm{vol}(B_\epsilon^k)}
\right),
\]
where $\theta=\arcsin(\epsilon/8\tau)$ and $B_\epsilon^k$ denotes the $k$-dimensional Euclidean ball of radius $\epsilon$.

In particular, the dominant scaling with resolution is
\[
C_\epsilon(\mathcal M_{D_i}) = O(\epsilon^{-k}).
\]

For the union of manifolds $\mathcal M=\bigcup_i\mathcal M_{D_i}$, the covering number satisfies the general bound

\[
C_\epsilon(\mathcal M)
\le
\sum_i C_\epsilon(\mathcal M_{D_i}).
\]

In our framework the manifold components may intersect, as transformations can map points between components through transversal directions. In this case intersection regions are shared between manifolds and therefore covered only once. Consequently, the summation above provides a valid (and generally loose) upper bound on the covering number of the union.

Now consider a family of learnable transformations $G_0$ acting on the representation space. As described in the main text, the action generates the orbit
\[
G_0\!\cdot\!\mathcal M = \bigcup_{g\in G_0} g(\mathcal M),
\]
which produces new manifold components. Classical learning methods that do not exploit this structure must sample each transformed manifold independently. If $|G_0|$ denotes the number of transformations, the required number of samples scales as
\[
N_{\mathrm{classical}} \sim |G_0|\,C_\epsilon(\mathcal M).
\]

More generally, when successive transformation families 
$G_0,G_1,\ldots,G_L$ act recursively as described in Sec.~E, the generated
manifold family becomes

\[
\mathcal{M}^{(L)} =
G_L\!\cdot\!\Big(G_{L-1}\!\cdot\!\big(\cdots(G_0\!\cdot\!\mathcal{M})\cdots\big)\Big).
\]

If $|\mathcal{M}|$ denotes the number of canonical components
$\mathcal{M}_{D_i}$, the total number of generated manifolds satisfies

\[
|\mathcal{M}^{(L)}|
=
|\mathcal{M}|
\prod_{\ell=0}^{L}|G_\ell|, 
\]

where \(|G_\ell|\) is the number of elements in  \(G_\ell\). Therefore classical non-hierarchical learning requires sampling all generated components, leading to the multiplicative complexity
\[
N_{\mathrm{classical}} \sim C_\epsilon(\mathcal M)\prod_{\ell=0}^{L}|G_\ell|.
\]

In contrast, the hierarchical structure induced by the deep architecture allows transformations to be learned incrementally. At each level $\ell$, a transformation family $G_\ell$ generates new components from the canonical manifolds $\mathcal M_{D_i}$. By the equivariance property described in Remark~4, it suffices to learn the action of each transformation $g \in G_\ell$ on a single representative manifold $\mathcal M_{D_i}$, which then generalizes to all components without requiring independent resampling. This assumes that the transformation family acts transitively across the manifold components. For example, in vision tasks, $G_\ell$ may represent pose, illumination, or deformation transformations, where a transformation learned on one object instance transfers across all instances.

To unify discrete and continuous transformation families, we measure the complexity of $G_\ell$ through its cardinality in the finite case and through its covering number when $G_\ell$ is a continuous transformation manifold. If $G_\ell$ is finite, the number of transformations is $|G_\ell|$, and the number of samples required at level $\ell$ scales as $|G_\ell|\,C_\epsilon(\mathcal M_{D_i})$. If $G_\ell$ is a Lie group of dimension $d_{G_\ell}$, its effective complexity is governed by its covering number, which scales as $C_\epsilon(G_\ell)=O(\epsilon^{-d_{G_\ell}})$. In this case, the sampling requirement at level $\ell$ scales as $C_\epsilon(G_\ell)\,C_\epsilon(\mathcal M_{D_i})$.

Thus the sampling complexity decomposes into intrinsic manifold sampling and transformation sampling at each level. Consequently, the total number of samples required by the hierarchical representation satisfies
\[
N_{\mathrm{DNN}} 
\sim 
C_\epsilon(\mathcal M) 
+ 
\sum_{\ell=0}^{L} 
C_\epsilon(G_\ell)\,C_\epsilon(\mathcal M_{D_i}),
\]
where $C_\epsilon(G_\ell)$ should be interpreted as $|G_\ell|$ in the finite case.

Thus, while classical learning must sample every transformed manifold independently, resulting in multiplicative growth in sample complexity, the hierarchical representation separates intrinsic manifold geometry from transformation coordinates, leading to additive growth across layers.

\section{Proof of Lemma~\ref{lemma:ResidualProj}}
\label{app:ResidualProjectionProof}

Let $x \in \mathcal M_B \cap \mathcal M$ be a transverse intersection point close to $s$. 
Define
\(
x_B = \mathcal{P}_{\mathcal M_B}(s),
\qquad
x_R = \mathcal{P}_{\mathcal M_R}(s),
\)
as the nonlinear projections of $s$ onto the base component and the residual component, respectively. 
Let $P_{T_x\mathcal M_B}$ and $P_{T_x\mathcal M_R}$ denote linear projections onto the corresponding tangent subspaces. For $\mathcal{P}_{\mathcal M}(s)$ sufficiently close to $x$, we have the local linearizations
\[
x_B \approx x + P_{T_x\mathcal M_B} v,
\qquad
x_R \approx x + P_{T_x\mathcal M_R} v,
\]
where $v := \mathcal{P}_{\mathcal M}(s) - x$ is the displacement vector.

Decomposing $v$ with respect to $\mathcal M_B$,
\[
v = P_{T_x\mathcal M_B} v + (I - P_{T_x\mathcal M_B})v
= (x_B - x) + P_{N_x\mathcal M_B} v,
\]
where $P_{N_x\mathcal M_B}$ denotes projection onto the normal space of $\mathcal M_B$. Substituting into the residual projection,
\[
x_R \approx x + P_{T_x\mathcal M_R}\bigl((x_B - x) + P_{N_x\mathcal M_B} v\bigr).
\]

Since $x, x_R$ lie in the residual direction, we have
\[
P_{T_x\mathcal M_R}(x_R - x) \approx x_R - x,
\]
and therefore
\[
P_{T_x\mathcal M_R}(x_R - x_B)
\approx
P_{T_x\mathcal M_R} P_{N_x\mathcal M_B} v.
\]

Because $x_B$ lies close to the residual component, the left-hand side approximates the difference between projections:
\[
x_R - \mathcal{P}_{\mathcal M_R}(x_B)
\approx
P_{T_x\mathcal M_R} P_{N_x\mathcal M_B} v.
\]

By transversality, the restriction of $P_{T_x\mathcal M_R}$ to $N_x\mathcal M_B$ is invertible. 
Define
\[
A : N_x\mathcal M_B \to T_x\mathcal M_R,
\qquad
A(u) = P_{T_x\mathcal M_R} u.
\]
Then $A$ is an isomorphism onto its image, and we can solve
\[
P_{N_x\mathcal M_B} v
\approx
A^{-1}\bigl(x_R - \mathcal{P}_{\mathcal M_R}(x_B)\bigr).
\]

Finally, reconstructing the full projection,
\[
\mathcal{P}_{\mathcal M}(s)
= x + v
= x_B + P_{N_x\mathcal M_B} v.
\]

Substituting $x_B = \mathcal{P}_{\mathcal M_B}(s)$ and $x_R = \mathcal{P}_{\mathcal M_R}(s)$ yields
\[
\mathcal{P}_{\mathcal M}(s)
\approx
\mathcal{P}_{\mathcal M_B}(s)
+
\Phi\bigl(
\mathcal{P}_{\mathcal M_R}(s)
-
\mathcal{P}_{\mathcal M_R}(\mathcal{P}_{\mathcal M_B}(s))
\bigr),
\]
where $\Phi$ is the nonlinear mapping locally corresponding to $A^{-1}$.

\section{Asset Licenses and Terms of Use}
\label{app:Licence}

All datasets and models utilized in this work are open-source and properly credited. The CIFAR-10 and CIFAR-100 datasets \cite{Cifar} are distributed under the MIT License. The pre-trained Masked Autoencoder for microscopy \cite{MAEMicroscopy} is released under the MIT License. The FVCD volumetric microscopy dataset and baseline reconstruction models \cite{FVCD} are provided under the Creative Commons Attribution 4.0 International License (CC BY 4.0). The ECG datasets \cite{ECGDataset1} utilize standard open-access medical databases (e.g., PhysioNet) and are used in accordance with the Open Data Commons Attribution License (ODC-BY).


\newpage

\end{document}